\newcommand{\EE}{\mathbb{E}}
\newcommand{\Wt}{W_{*}} 
\newcommand{\Aeps}{A_{\mu,\epsilon}}
\def\epsd{{\varepsilon_{\mathrm{dist}}}}
\newcommand{\Zero}{\boldsymbol{0}}
\newcommand{\subjto}{\textrm{\;subject to\;}}
\newcommand{\sR}{\mathbb{R}}
\newcommand{\notears}{\textsc{Notears}\;}
\newcommand{\Tr}{\mathrm{Tr}}
\newcommand{\Var}{\mathrm{Var}}
\newcommand{\G}{\mathsf{G}}
\newcommand{\eps}{\epsilon}
\newcommand{\Norm}[1]{\ensuremath{\lVert #1 \rVert}}                  
\newcommand{\ANorm}[1]{\ensuremath{\left\lVert #1 \right\rVert}}                  
\newcommand{\InNorm}[1]{{\left\vert\kern-0.2ex\left\vert\kern-0.2ex\left\vert #1 
    \right\vert\kern-0.2ex\right\vert\kern-0.2ex\right\vert}}                    
\newcommand{\InNormII}[1]{{\left\vert\kern-0.2ex\left\vert\kern-0.2ex\left\vert #1 
    \right\vert\kern-0.2ex\right\vert\kern-0.2ex\right\vert}_2}                    
\newcommand{\InNormInfty}[1]{{\left\vert\kern-0.2ex\left\vert\kern-0.2ex\left\vert #1 
    \right\vert\kern-0.2ex\right\vert\kern-0.2ex\right\vert}_{\infty}}           
\newtheorem{definition}{Definition}
\newtheorem{lemma}{Lemma}
\newtheorem{theorem}{Theorem}
\newtheorem{remark}{Remark}
\newtheorem{corollary}{Corollary}
\newtheorem{condition}{Condition}
\newcommand{\papertitle}{Global Optimality in Bivariate Gradient-based DAG Learning}
\title{\papertitle}
\author[$\dag$]{\bf Chang Deng\thanks{Correspondence to \texttt{changdeng@uchicago.edu} }}
\author[$\dag$,$\ddag$]{\bf Kevin Bello}
\author[$\dag$]{\bf Bryon Aragam}
\author[$\ddag$]{\bf Pradeep Ravikumar}
\affil[$\dag$]{Booth School of Business, The University of Chicago}
\affil[$\ddag$]{Machine Learning Department, Carnegie Mellon University}
\begin{document}

\maketitle

\begin{abstract}
  Recently, a new class of non-convex optimization problems motivated by the statistical problem of learning an acyclic directed graphical model from data has attracted significant interest. While existing work uses standard first-order optimization schemes to solve this problem, proving the global optimality of such approaches has proven elusive. The difficulty lies in the fact that unlike other non-convex problems in the literature, this problem is not ``benign'', and possesses multiple spurious solutions that standard approaches can easily get trapped in. In this paper, we prove that a simple path-following optimization scheme globally converges to the global minimum of the population loss in the bivariate setting. 
\end{abstract}

\section{Introduction}
Over the past decade, non-convex optimization has become a major topic of research within the machine learning community, in part due to the successes of training large-scale models with simple first-order methods such as gradient descent---along with their stochastic and accelerated variants---in spite of the non-convexity of the loss function.
    A large part of this research has focused on characterizing which problems have \textit{benign} loss landscapes that are amenable to the use of gradient-based methods, i.e., there are no spurious local minima, {or they can be easily avoided.}
    By now, several theoretical results have shown this property for different non-convex problems such as: learning a two hidden unit ReLU network \citep{wu2018no}, learning (deep) over-parameterized quadratic neural networks~\citep{soltanolkotabi2018theoretical,kazemipour2019avoiding}, low-rank matrix recovery~\citep{ge2017no,de2015global,bhojanapalli2016global}, learning a two-layer ReLU network with a single non-overlapping convolutional filter~\citep{brutzkus2017globally}, semidefinite matrix completion~\citep{boumal2016non,ge2016matrix},  learning neural networks for binary classification with the addition of a single special neuron~\citep{liang2018adding}, and learning deep networks with independent ReLU activations~\citep{kawaguchi2016deep,choromanska2015loss}, to name a few.
    
    Recently, a new class of non-convex optimization problems due to \citet{zheng2018} have emerged in the context of learning the underlying structure of a structural equation model (SEM) or Bayesian network.
    This underlying structure is typically represented by a directed acyclic graph (DAG), which makes the learning task highly complex due to its combinatorial nature.
    In general, learning DAGs is well-known to be NP-complete~\citep{chickering1996learning,chickering2004}.
    The key innovation in \citet{zheng2018} was the introduction of a differentiable function $h$, whose level set at zero \textit{exactly} characterizes DAGs.
	Thus, replacing the challenges of combinatorial optimization by those of non-convex optimization.
    Mathematically, this class of non-convex problems take the following general form:
	\begin{align}
        \label{eq:general_opt}
        \min_\Theta f(\Theta)\ \subjto\ h(W(\Theta)) = 0,
    \end{align}
    where $\Theta \in \sR^l$ represents the model parameters, $f: \sR^{l} \to \sR$ is a (possibly non-convex) smooth loss function (sometimes called a \emph{score function}) that measures the fitness of $\Theta$, and $h: \sR^{d\times d} \to [0, \infty)$ is a smooth \textbf{non-convex} function that takes the value of zero if and only if the induced weighted adjacency matrix of $d$ nodes, $W(\Theta)$, corresponds to a DAG.
    
    Given the smoothness of $f$ and $h$, problem \eqref{eq:general_opt} can be solved using off-the-shelf nonlinear solvers, which has driven a series of remarkable developments in structure learning for DAGs.
    Multiple empirical studies have demonstrated that  global or near-global minimizers for \eqref{eq:general_opt} can often be found in a variety of settings, such as linear models with Gaussian and non-Gaussian noises \citep[e.g.,][]{zheng2018,ng2020, bello2022dagma}, and non-linear models, represented by neural networks, with additive Gaussian noises~\citep[e.g.,][]{lachapelle2019gradient, zheng2020, yu2019dag, bello2022dagma}.
    The empirical success for learning DAGs via \eqref{eq:general_opt}, which started with the \notears method of \citet{zheng2018}, bears a resemblance to the success of training deep models, which started with AlexNet for image classification.
    
    Importantly, the reader should note that the majority of applications in ML consist of solving a \textit{single unconstrained} non-convex problem.
    In contrast, the class of problems \eqref{eq:general_opt} contains a non-convex constraint. 
    Thus, researchers have considered some type of penalty method such as the augmented Lagrangian \citep{zheng2018,zheng2020}, quadratic penalty \citep{ng2022convergence}, and a log-barrier \citep{bello2022dagma}.
    In all cases, the penalty approach consists in solving a \textit{sequence} of unconstrained non-convex problems, where the constraint is enforced progressively \citep[see e.g.][for background]{bertsekas1997nonlinear}.
    In this work, we will consider the following form of penalty:
    \begin{align}
    \label{eq:general_unconst_opt}
        \min_\Theta g_{\mu_k}(\Theta) \coloneqq \mu_{k} f(\Theta) + h(W(\Theta)).
    \end{align}
    It was shown by \citet{bello2022dagma} that due to the invexity property of $h$,\footnote{An invex function is any function where all its stationary points are global minima. It is worth noting that the composite objective in \eqref{eq:general_unconst_opt} is not necessarily invex, even when $f$ is convex.} 
    solutions to \eqref{eq:general_unconst_opt} will converge to a DAG as $\mu_k \to 0$.
    However, no guarantees on local/global optimality were given.

    With the above considerations in hand, one is inevitably led to ask the following questions:
    \begin{itemize}
        \item[(i)] \emph{Are the loss landscapes $g_{\mu_k}(\Theta)$ benign for different $\mu_k$?}
        \item[(ii)] \emph{Is there a (tractable) solution path \{$\Theta_k$\} that converges to a global minimum of \eqref{eq:general_opt}?}
    \end{itemize}
    {
    Due to the NP-completeness of learning DAGs, one would expect the answer to (i) to be negative in its most general form.
    Moreover, it is known from the classical theory of constrained optimization \citep[e.g.][]{bertsekas1997nonlinear} that if we can \emph{exactly} and \emph{globally} optimize \eqref{eq:general_opt} for each $\mu_k$, then the answer to (ii) is affirmative. This is not a practical algorithm, however, since the problem \eqref{eq:general_opt} is nonconvex. Thus we seek a solution path that can be tractably computed in practice, e.g. by gradient descent.
    }

    In this work, we focus on perhaps the simplest setting where interesting phenomena take place.
    That is, a linear SEM with two nodes (i.e., $d=2$), $f$ is the population least squared loss (i.e., $f$ is convex), and $\Theta_k$ is defined via gradient flow with warm starts.
    More specifically, we consider the case where $\Theta_k$ is obtained by following the gradient flow of $g_{\mu_k}$ with initial condition $\Theta_{k-1}$.
    
    Under this setting, to answer (i), it is easy to see that for a large enough $\mu_k$, the convex function $f$ dominates and we can expect a benign landscape, i.e., a (almost) convex landscape.
    Similarly, when $\mu_k$ approaches zero, the invexity of $h$ kicks in and we could expect that all stationary points are (near) global minimizers.\footnote{This transition or path, from an optimizer of a simple function to an optimizer of a function that closely resembles the original constrained formulation, is also known as a \textit{homotopy}.}
    That is, at the extremes $\mu_k\to\infty$ and $\mu_k\to0$, the landscapes seem well-behaved, and the reader might wonder if it follows that for any $\mu_k \in [0,\infty)$ the landscape is well-behaved.
    We answer the latter in the \emph{negative} and show that there always exists a $\tau > 0$ where the landscape of $g_{\mu_k}$ is non-benign for any $\mu_k < \tau$, namely, there exist three stationary points: i) A saddle point, ii) A spurious local minimum, and iii) The global minimum.
    In addition, each of these stationary points have wide basins of attractions, thus making the initialization of the gradient flow for $g_{\mu_k}$ crucial.
    Finally, we answer (ii) in the affirmative and provide an explicit scheduling for $\mu_k$ that guarantees the asymptotic convergence of $\Theta_k$ to the global minimum of \eqref{eq:general_opt}.
    Moreover, we show that this scheduling cannot be arbitrary as there exists a sequence of $\{\mu_k\}$ that leads $\{\Theta_k\}$ to a spurious local minimum.

    Overall, we establish the first set of results that study the optimization landscape and global optimality for the class of problems \eqref{eq:general_opt}.
    We believe that this comprehensive analysis in the bivariate case provides a valuable starting point for future research in more complex settings.

    \begin{remark}
    \label{rem:hard}
        We emphasize that solving \eqref{eq:general_opt} in the bivariate case is \textit{not} an inherently difficult problem.
        Indeed, when there are only two nodes, there are only two DAGs to distinguish and one can simply fit $f$ under the only two possible DAGs, and select the model with the lowest value for $f$.
        However, \textbf{evaluating $f$ for each possible DAG structure clearly cannot scale beyond 10 or 20 nodes, and is not a standard algorithm for solving \eqref{eq:general_opt}}.
        Instead, here \textbf{our focus is on studying how \eqref{eq:general_opt} is actually being solved in practice}, namely, by solving unconstrained non-convex problems in the form of \eqref{eq:general_unconst_opt}.
        Previous work suggests that such gradient-based approaches indeed scale well to hundreds and even thousands of nodes \citep[e.g.][]{zheng2018,ng2020,bello2022dagma}.
    \end{remark}

    \begin{figure}[t]
    \hfill
    \centering
    \subfigure[Contour plot]{\includegraphics[width=0.4\textwidth]{./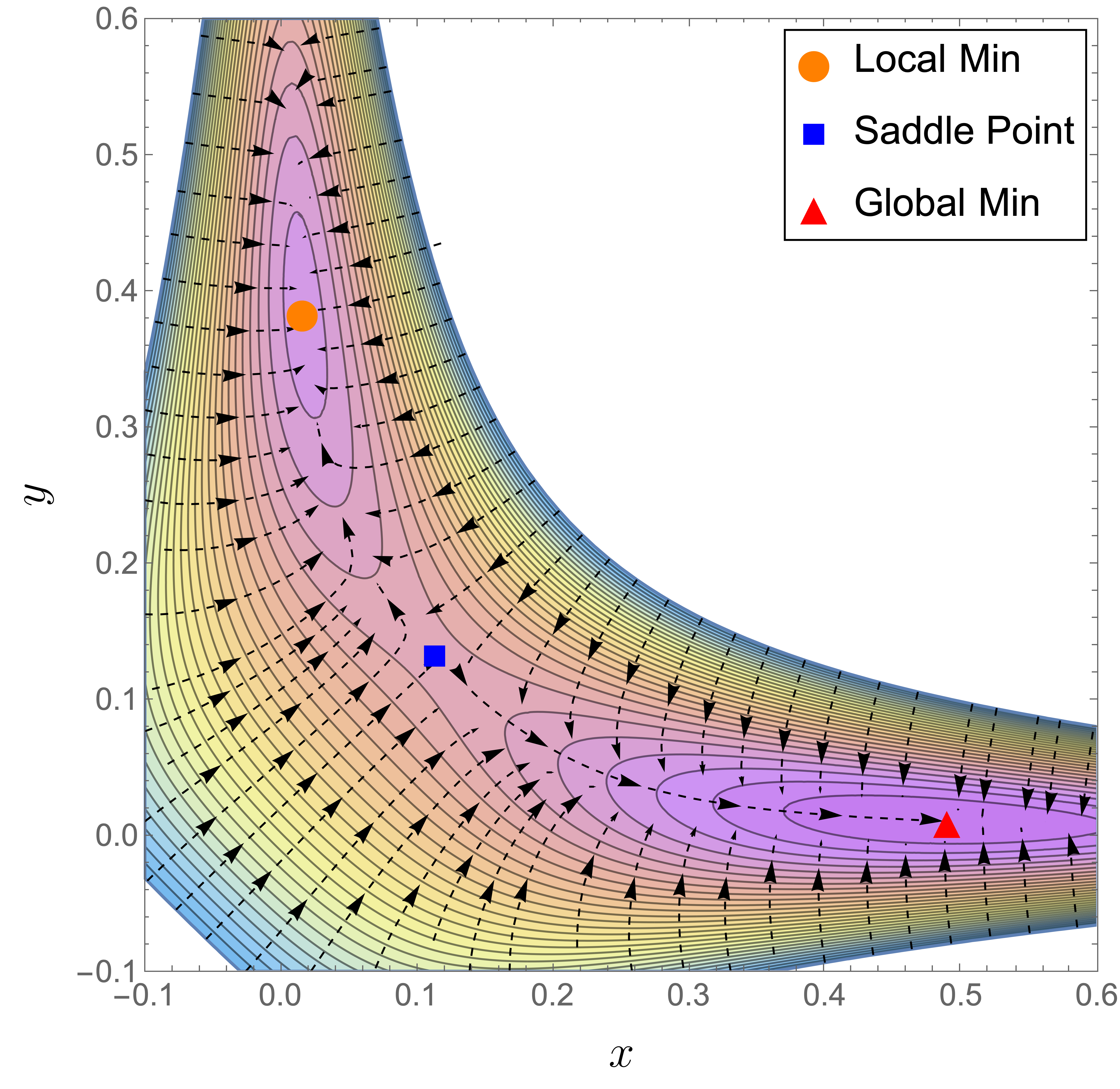}}
    \hfill
    \centering
    \subfigure[Stationary points]{\includegraphics[width=0.4\textwidth]{./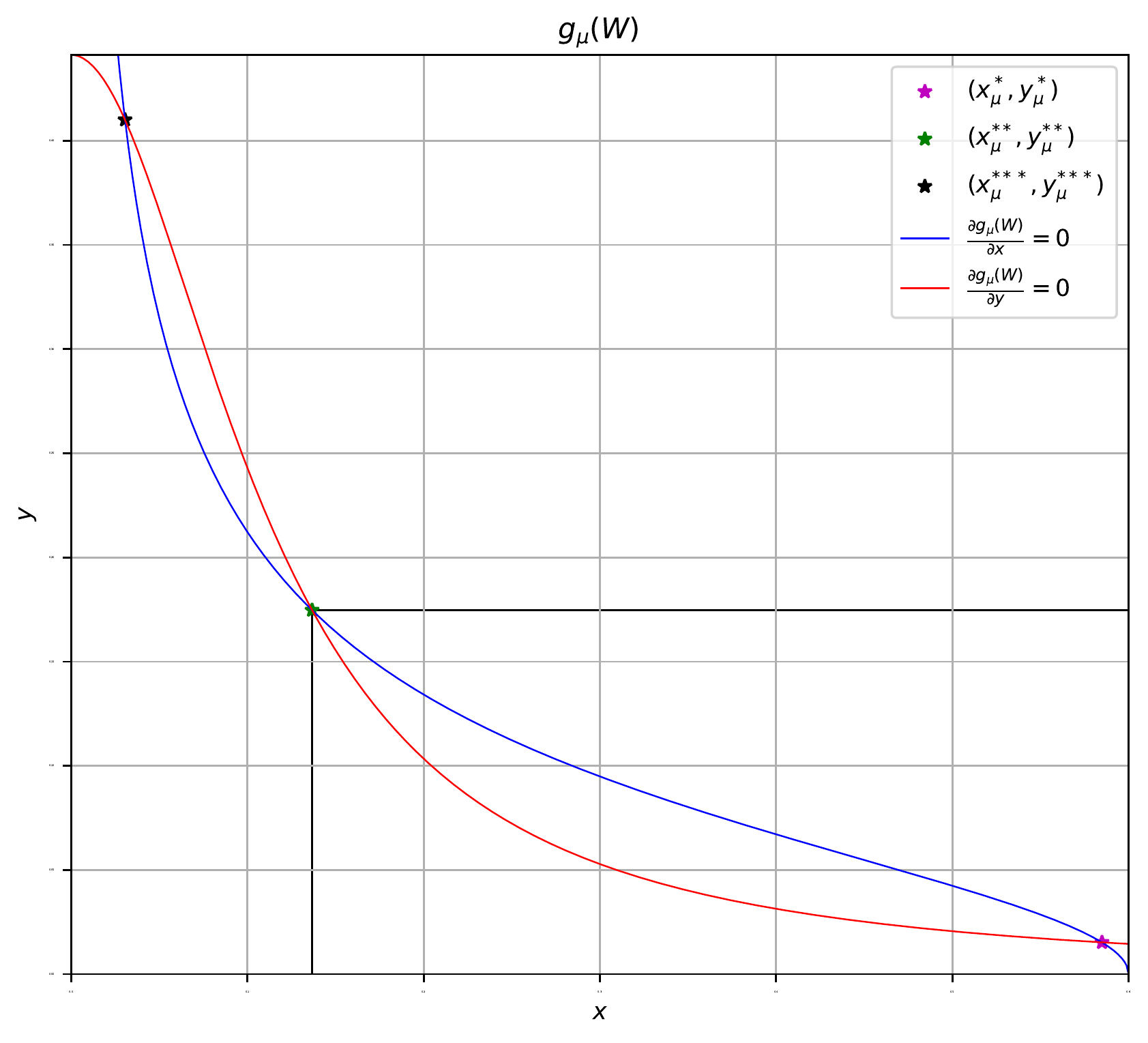}}
    \hfill
    \caption{Visualizing the nonconvex landscape. (a) A contour plot of $g_\mu$ for $a=0.5$ and $\mu=0.005$ (see Section~\ref{sec:prelim} for definitions). We only show a section of the landscape for better visualization. The solid lines represent the contours, while the dashed lines represent the vector field $-\nabla g_\mu$. (b) Stationary points of $g_\mu$, $r(y;\mu)=0$ and $r(x;\mu)=0$ (see Section~\ref{sec:stationary_points} for definitions).}
     \label{fig:g_contour_region}
\end{figure}

	\subsection{Our Contributions}
 More specifically, we make the following contributions:
		\begin{enumerate}
		    \item We present a homotopy-based optimization scheme (Algorithm \ref{alg:main_theory}) to find global minimizers of the program \eqref{eq:general_opt} by iteratively decreasing the penalty coefficient according to a given schedule. Gradient flow is used to find the stationary points of \eqref{eq:general_unconst_opt} at each step, starting from the previous solution.
		      \item  We prove that Algorithm \ref{alg:main_theory} converges \emph{globally} (i.e. regardless of initialization for $W$) to the \emph{global} minimum {(Theorem~\ref{thm:main_thm})}.
            \item We show that the non-convex program \eqref{eq:general_opt} is indeed non-benign, and na\"ive implementation of black-box solvers are likely to get trapped in a bad local minimum. See Figure~\ref{fig:g_contour_region} (a).
		      \item  Experimental results verify our theory, consistently recovering the global minimum of \eqref{eq:general_opt}, regardless of initialization or initial penalty value. We show that our algorithm converges to the global minimum while na\"ive approaches can get stuck.
		\end{enumerate}
The analysis consists of three main parts: First, we explicitly characterize the trajectory of the stationary points of \eqref{eq:general_unconst_opt}. Second, we classify the number and type of all stationary points (Lemma~\ref{lemma:type_of_sp}) and use this to isolate the desired global minimum. Finally, we apply Lyapunov analysis to identify the basin of attraction for each stationary point, which suggests a schedule for the penalty coefficient that ensures that the gradient flow is initialized within that basin at the previous solution.

	\subsection{Related Work}
        The class of problems \eqref{eq:general_opt} falls under the umbrella of score-based methods, where given a score function $f$, the goal is to identify the DAG structure with the lowest score possible \citep{chickering2002,heckerman1995learning}.
        We shall note that learning DAGs is a very popular structure model in a wide range of domains such as biology \citep{sachs2005causal}, genetics \citep{zhang2013integrated}, and causal inference \citep{spirtes2000,Pearl2009}, to name a few.
        
        \paragraph{Score-based methods that consider the combinatorial constraint.}
        Given the ample set of score-based methods in the literature, we briefly mention some classical works that attempt to optimize $f$ by considering the combinatorial DAG constraint. 
        In particular, we have approximate algorithms such as the greedy search method of \citet{chickering2004}, order search methods \citep{teyssier2005,scanagatta2015learning,park2017bayesian}, the LP-relaxation method of \citet{jaakkola10a}, and the dynamic programming approach of \citet{loh2014high}.
        There are also exact methods such as GOBNILP \citep{cussens2012} and Bene \citep{silander2012}, however, these algorithms only scale up to $\approx 30$ nodes.
        
        \paragraph{Score-based methods that consider the continuous non-convex constraint $h$.}
        The following works are the closest to ours since they attempt to solve a problem in the form of \eqref{eq:general_opt}.
        Most of these developments either consider optimizing different score functions $f$ such as ordinary least squares \citep{zheng2018,zheng2020}, the log-likelihood \citep{lachapelle2019gradient,ng2020}, the evidence lower bound \citep{yu2019dag}, a regret function \citep{zhu2020causal}; or consider different differentiable characterizations of acyclicity $h$ \citep{yu2019dag,bello2022dagma}.
        However, none of the aforementioned works provide any type of optimality guarantee.
        Few studies have examined the optimization intricacies of problem \eqref{eq:general_opt}. 
	    \citet{wei2020} investigated the optimality issues and provided \textit{local} optimality guarantees under the assumption of convexity in the score $f$ and linear models. 
	    On the other hand, \citet{ng2022convergence} analyzed the convergence to (local) DAGs of generic methods for solving nonlinear constrained problems, such as the augmented Lagrangian and quadratic penalty methods.
        In contrast to both, our work is the first to study global optimality and the loss landscapes of actual methods used in practice for solving \eqref{eq:general_opt}.

        \paragraph{Bivariate causal discovery.}
        Even though in a two-node model the discrete DAG constraint does not pose a major challenge, the bivariate setting has been subject to major research in the area of causal discovery. 
        See for instance \citep{ni2022bivariate,duong2022bivariate,mooij2014distinguishing,jiao2018bivariate} and references therein.

        \paragraph{Penalty and homotopy methods.}
        There exist classical global optimality guarantees for the penalty method if $f$ and $h$ were convex functions, see for instance \citep{bertsekas1997nonlinear,boyd2004convex,nocedal1999numerical}.
        However, to our knowledge, there are no global optimality guarantees for general classes of non-convex constrained problems, let alone for the specific type of non-convex functions $h$ considered in this work.
        On the other hand, homotopy methods (also referred to as continuation or embedding methods) are in many cases capable of finding better solutions than standard first-order methods for non-convex problems, albeit they typically do not come with global optimality guarantees either.
        When homotopy methods come with global optimality guarantees, they are commonly computationally more intensive as it involves discarding solutions, thus, closely resembling simulated annealing methods, see for instance \citep{dunlavy2005homotopy}.
        Authors in \citep{hazan2016graduated} characterize a family of non-convex functions where a homotopy algorithm provably converges to a global optimum. 
        However, the conditions for such family of non-convex functions are difficult to verify and are very restrictive; moreover, their homotopy algorithm involves Gaussian smoothing, making it also computationally more intensive than the procedure we study here.
        Other examples of homotopy methods in machine learning include \citep{chen2019causal,garrigues2008homotopy,wauthier2015greedy,gargiani2020convergence,iwakiri2022single}, in all these cases, no global optimality guarantees are given.

\section{Preliminaries}
\label{sec:prelim}

The objective $f$ we consider can be easily written down as follows:
\begin{align}
\label{eq:obj}
     f(W) = \frac{1}{2} \EE_X \left[ \|X - W^\top X \|_2^2 \right],
\end{align}
where $X\in\sR^2$ is a random vector and $W\in\sR^{2\times 2}$.
Although not strictly necessary for the developments that follow, we begin by introducing the necessary background on linear SEM that leads to this objective and the resulting optimization problem of interest. 

    \paragraph{The bivariate model.}
    Let $X=(X_1, X_2) \in\sR^2$ denote the random variables in the model, and let $N=(N_1,N_2) \in \sR^2$ denote a vector of independent errors.
    Then a linear SEM over $X$ is defined as $X = \Wt^\top X + N$, where  $\Wt\in\sR^{2\times 2}$ is a weighted adjacency matrix encoding the coefficients in the linear model. 
    In order to represent a valid Bayesian network for $X$ \citep[see e.g.][for details]{Pearl2009,spirtes2000}, the matrix $\Wt$ must be acyclic: More formally, the weighted graph induced by the adjacency matrix $\Wt$ must be a DAG.
    This (non-convex) acyclicity constraint represents the major computational hurdle that must overcome in practice (cf. Remark~\ref{rem:hard}).
    
    The goal is to recover the matrix $\Wt$ from the random vector $X$. Since $\Wt$ is acyclic, we can assume the diagonal of $\Wt$ is zero (i.e. no self-loops). Thus, under the bivariate linear model, it then suffices to consider two parameters $x$ and $y$ that  define the matrix of parameters\footnote{Following the notation in \eqref{eq:general_opt}, for the bivariate model we simply have $\Theta \equiv (x,y)$ and $W(\Theta) \equiv \left(\begin{smallmatrix} 0 & x \\ y & 0 \end{smallmatrix}\right)$.}
    \begin{align}
        W = W(x,y) = \begin{pmatrix} 0 & x \\ y & 0  \end{pmatrix}
    \end{align}
    For notational simplicity, we will use $f(W)$ and $f(x,y)$ interchangeably, similarly for $h(W)$ and $h(x,y)$.
    Without loss of generality, we write the underlying parameter as
    \begin{align}
        \Wt = \begin{pmatrix} 0 & a \\ 0 & 0 \end{pmatrix}
    \end{align}
    which implies
    \begin{align*}
        X = \Wt^\top X + N
        \implies
        \left\{
        \begin{aligned}
        X_1 &= N_1, \\
        X_2 &= a X_1 + N_2. \\
        \end{aligned}
        \right.
    \end{align*}
    In general, we only require $N_i$ to have finite mean and variance, hence we \textit{do not} assume Gaussianity. 
    We assume that $\Var[N_1] = \Var[N_2]$, and for simplicity, we consider $\EE[N]=0$ and $\text{Cov}[N] = I$, where $I$ denotes the identity matrix.
    Finally, in the sequel we assume w.l.o.g. that $a > 0$.

    \paragraph{The population least squares.}
    In this work, we consider the population squared loss defined by \eqref{eq:obj}.
    If we equivalently write $f$ in terms of $x$ and $y$, then we have:
    $f(W) = ((1-ay)^2+y^2+(a-x)^2+1) / 2.$ In fact, the population loss can be substituted with empirical loss. In such a case, our algorithm can still attain the global minimum, $W_\G$, of problem \eqref{eq:bivariate_con}. However, the output $W_\G$ will serve as an empirical estimation of $W_*$. An in-depth discussion on this topic can be found in Appendix \ref{appendix:pop2empirical}
    
    \paragraph{The non-convex function $h$.}
    We use the continuous acyclicity characterization of \citet{yu2019dag}, i.e., $h(W) = \Tr((I + \frac{1}{d} W \circ W)^d) - d$, where $\circ$ denotes the Hadamard product.
    Then, for the bivariate case, we have $h(W) = x^2 y^2 / 2.$
    We note that the analysis presented in this work is not tailored to this version of $h$, that is, we can use the same techniques used throughout this work for other existing formulations of $h$, such as the trace of the matrix exponential \citep{zheng2018}, and the log-det formulation \citep{bello2022dagma}.
    Nonetheless, here we consider that the polynomial formulation of \citet{yu2019dag} is more amenable for the analysis.

    \begin{remark}
        Our restriction to the bivariate case highlights the simplest setting in which this problem exhibits nontrivial behaviour. Extending our analysis to higher dimensions remains a challenging future direction, however, we emphasize that even in two-dimensions this problem is nontrivial. Our approach is similar to that taken in other parts of the literature that started with simple cases (e.g. single-neuron models in deep learning).
    \end{remark}
    
    \begin{remark}
    \label{remark:global_opt}
        It is worth noting that our choice of the population least squares is not arbitrary.
        Indeed, for linear models with identity error covariance, such as the model considered in this work, it is known that the global minimizer of the population squared loss is unique and corresponds to the underlying matrix $\Wt$. See Theorem 7 in \citep{loh2014high}. 
    \end{remark}
    Gluing all the pieces together, we arrive to the following version of \eqref{eq:general_opt} for the bivariate case:
    \begin{align}
    \label{eq:bivariate_con}
        \min_{x,y} f(x,y) \coloneqq \frac{1}{2}((1-ay)^2+y^2+(a-x)^2+1)\; \subjto\; h(x,y) \coloneqq \frac{x^2 y^2}{2} = 0.
    \end{align}
    Moreover, for any $\mu \geq 0$, we have the corresponding version of \eqref{eq:general_unconst_opt} expressed as:
    \begin{align}
    \label{eq:bivariate_uncon}
         \min_{x,y} g_{\mu}(x,y) \coloneqq \mu f(x,y) + h(x,y) = \frac{\mu}{2}((1-ay)^2+y^2+(a-x)^2+1) + \frac{x^2 y^2}{2}.
    \end{align}
    
    To conclude this section, we present a visualization of the landscape of $g_{\mu}(x,y)$ in Figure \ref{fig:g_contour_region} (a), for $a = 0.5$ and $\mu = 0.005$.
    We can clearly observe the non-benign landscape of $g_\mu$, i.e., there exists a spurious local minimum, a saddle point, and the global minimum.
    In particular, we can see that the basin of attraction of the spurious local minimum is comparable to that of the global minimum, which is problematic for a local algorithm such as the gradient flow (or gradient descent) as it can easily get trapped in a local minimum if  initialized in the wrong basin.

\section{A Homotopy-Based Approach and Its Convergence to the Global Optimum}
\label{sec:algorithm}

To fix notation, let us write $W_k:=W_{\mu_k}\coloneqq (\begin{smallmatrix} 0 & x_{\mu_{k}} \\ y_{\mu_{k}} & 0 \end{smallmatrix})$.
and let $W_\G$ denote the global minimizer of \eqref{eq:bivariate_con}.
In this section, we present our main result, which provides conditions under which 
solving a series of unconstrained problems \eqref{eq:bivariate_uncon} with first-order methods will converge to the global optimum $W_\G$ of \eqref{eq:bivariate_con}, in spite of facing non-benign landscapes.
Recall that from Remark \ref{remark:global_opt}, we have that $W_\G = (\begin{smallmatrix} 0 & a \\ 0 & 0 \end{smallmatrix})$.
Since we use gradient flow path to connect $W_{\mu_{k}}$ and $W_{\mu_{k+1}}$, we specify this path in Procedure \ref{alg:gf} for clarity.
{Although the theory here assumes continuous-time gradient flow with $t\to \infty$, see Section~\ref{sec:proof_main} for an iteration complexity analysis for (discrete-time) gradient descent, which is a straightforward consequence of the continuous-time theory.}
\begin{algorithm}[t]
\caption{GradientFlow($f,z_0$)}
\label{alg:gf}
\begin{algorithmic}[1]
\State set $z(0) = z_0$
\State $\frac{d}{dt} z(t) = -\nabla f(z(t))$
\State \Return{$\lim_{t\rightarrow\infty} z(t)$}
\end{algorithmic}
\end{algorithm}

\begin{algorithm}[t]
\DontPrintSemicolon
\SetAlgoLined
\LinesNumbered    
\SetKwFunction{GF}{GradientFlow}
    \caption{Homotopy algorithm for solving \eqref{eq:general_opt}.}
    \label{alg:main_theory}
		\KwIn{Initial $W_0=W(x_0,y_0)$, $\mu_0 \in \left[\frac{a^2}{4(a^2+1)^3}, \frac{a^2}{4}\right)$}
        \KwOut{$\{W_{\mu_{k}}\}_{k=0}^{\infty}$}
        $W_{\mu_0} \gets \GF(g_{\mu_0},W_0)$\;
        \For{$k=1,2,\ldots$}{
            Let $\mu_{k}  = \left(2/a\right)^{2/3}\mu_{k-1}^{4/3}$
            \;
            $W_{\mu_{k}} \gets \GF(g_{\mu_{k}},W_{\mu_{k-1}})$
        }
\end{algorithm}

In Algorithm \ref{alg:main_theory}, we provide an explicit regime of initialization for the homotopy parameter $\mu_0$ and a specific scheduling for $\mu_k$ such that the solution path found by Algorithm \ref{alg:main_theory} will converge to the global optimum of \eqref{eq:bivariate_con}.
This is formally stated in Theorem \ref{thm:main_thm}, whose proof is given in Section \ref{sec:proof_main}.
\begin{theorem}\label{thm:main_thm}
For any initialization $W_0$ and $a\in\sR$, the solution path provided in Algorithm \ref{alg:main_theory} converges to the global optimum of \eqref{eq:bivariate_con}, i.e.,
\[\lim_{k\rightarrow \infty}W_{\mu_k} = W_\G.\]
\end{theorem}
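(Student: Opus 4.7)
The plan is to prove the theorem by induction on $k$, maintaining the invariant $W_{\mu_k} = W_g(\mu_k)$, where $W_g(\mu)$ denotes the ``good'' stationary point of $g_{\mu}$, i.e.\ the unique point on the smooth branch of stationary points satisfying $W_g(\mu) \to W_\G$ as $\mu \to 0^+$. Lemma~\ref{lemma:type_of_sp} already classifies the stationary points of $g_\mu$ (for small $\mu$: a saddle $W_s(\mu)$, a spurious local minimum $W_b(\mu)$, and the global minimum $W_g(\mu)$), and because $g_\mu(W) \to \infty$ as $\|W\| \to \infty$, every gradient-flow trajectory is bounded and converges to one of these points; the real work is therefore to ensure that each warm-started flow lands in the basin of $W_g(\mu_k)$ rather than the other two.

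The first step is to pin down the stationary branches. Setting $\nabla g_\mu = 0$ gives $x = \mu a/(\mu + y^2)$ and $y\bigl[(a^2+1)(\mu+y^2)^2 + \mu a^2\bigr] = a(\mu+y^2)^2$. Using Lemma~\ref{lemma:type_of_sp} together with a leading-order expansion in $\mu$, I isolate three solutions for small $\mu$: the good branch with $y_g(\mu) \approx \mu/a$, a saddle with $y_s(\mu) = \Theta(\mu^{1/3})$, and a spurious minimum with $y_b(\mu)$ bounded away from $0$. Implicit differentiation in $\mu$ along these branches then yields quantitative bounds on both the step-to-step displacement $\|W_g(\mu_{k-1}) - W_g(\mu_k)\|$ and the saddle-to-good gap $\|W_g(\mu_k) - W_s(\mu_k)\|$, which drive the basin analysis.

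The inductive step, which I expect to be the main obstacle, is a basin-of-attraction argument. Given $W_{\mu_{k-1}} = W_g(\mu_{k-1})$, I would construct an explicit region $R_k \subseteq \sR^2$ around $W_g(\mu_k)$ that is (i) positively invariant under $\dot W = -\nabla g_{\mu_k}(W)$, (ii) contains $W_g(\mu_k)$ as its unique stationary point, and (iii) contains the warm start $W_g(\mu_{k-1})$. Parts (i)--(ii) should follow from a Lyapunov argument using $V(W) = g_{\mu_k}(W) - g_{\mu_k}(W_g(\mu_k))$ combined with sign analysis of the partials $\partial_x g_{\mu_k}$ and $\partial_y g_{\mu_k}$ along $\partial R_k$, cutting the good basin off from the stable manifold of $W_s(\mu_k)$. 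Part (iii) is where the schedule enters: the displacement $\|W_g(\mu_{k-1}) - W_g(\mu_k)\|$ scales like $\mu_{k-1}$, whereas the basin radius scales like $\|W_g(\mu_k) - W_s(\mu_k)\| \sim \mu_k^{1/3}$, and the rule $\mu_k = (2/a)^{2/3} \mu_{k-1}^{4/3}$ is calibrated so that the former is dominated by the latter for every $k$; the exponent $4/3$ arises precisely from balancing these two scalings with the required margin.

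For the base case, the interval $\mu_0 \in [a^2/(4(a^2+1)^3),\, a^2/4)$ sits just below the bifurcation at $\mu = a^2/4$ where $W_s$ and $W_b$ collide and vanish, so in this narrow window the basin of $W_g(\mu_0)$ covers essentially all of $\sR^2$ apart from the (still-small) stable manifold of $W_s(\mu_0)$, and gradient flow from any $W_0$ converges to $W_g(\mu_0)$, with the measure-zero exceptional set absorbed by generic saddle-avoidance. Finally, $\mu_0 < a^2/4$ forces $\mu_k < \mu_{k-1}$ for every $k$, and in fact $\mu_k \to 0$ super-exponentially: setting $\nu_k = (2/a)^2 \mu_k$ gives $\nu_k = \nu_{k-1}^{4/3}$ with $\nu_0 \in (0,1)$, so $\nu_k = \nu_0^{(4/3)^k} \to 0$. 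Combined with the inductive invariant $W_{\mu_k} = W_g(\mu_k)$ and continuity of the good branch at $\mu = 0^+$, this yields $W_{\mu_k} \to W_\G$, completing the proof.
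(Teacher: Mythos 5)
Your overall architecture matches the paper's: isolate the good stationary branch $W_g(\mu)=W_\mu^*$, show by induction that each warm-started gradient flow stays in the basin of $W_g(\mu_k)$ via a positively-invariant box and a Lyapunov argument, and let $\mu_k\to 0$ superexponentially. The paper's invariant region is exactly the rectangle $B_\mu = \{x_\mu^{**} < x \le a,\ 0 \le y < y_\mu^{**}\}$, and Lemma~\ref{lemma:go_to_next_optimal} plus Lemma~\ref{lemma:stay_inside_C_3} carry out the sign/Lyapunov argument you sketch. So the skeleton is right. There are, however, two concrete gaps in the quantitative parts.

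First, your explanation of where the exponent $4/3$ comes from does not hold up. You balance a displacement of order $\mu_{k-1}$ (true: $y_g(\mu)\approx \mu/a$) against a basin height of order $\mu_k^{1/3}$ (true: $y_s(\mu)\approx (a\mu)^{1/3}$). That balance gives the constraint $\mu_k \gtrsim \mu_{k-1}^{3}$, i.e.\ a critical exponent of $3$, not $4/3$; any $\mu_k = C\mu_{k-1}^{p}$ with $1<p<3$ satisfies the asymptotic constraint once $\mu$ is small, so the specific value $4/3$ is not determined by your balance. The paper's $4/3$ comes instead from the \emph{finite-$\mu$} bounds in Lemma~\ref{lemma:r_y}: it uses $y_\mu^{**}>\sqrt{\mu}$ and $y_\mu^{*}<\tfrac{(4\mu)^{1/3}}{2}\bigl(a^{1/3}-\sqrt{a^{2/3}-(4\mu)^{1/3}}\bigr)$, the latter being $\lesssim \mu^{2/3}$ after linearizing $\sqrt{1-x}\ge 1-x$. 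Requiring $\sqrt{\mu_{k+1}}$ to dominate this $\mu_k^{2/3}$-type bound forces $\mu_{k+1}\gtrsim\mu_k^{4/3}$, and the constant $(2/a)^{2/3}$ is chosen to make the two sides exactly equal after linearization. In short, the $4/3$ reflects the looseness of the rigorous bounds, not the true asymptotic scalings; a proof built only on the asymptotics $\mu/a$ and $(a\mu)^{1/3}$ would also need uniform nonasymptotic versions of those estimates, which is precisely what Lemma~\ref{lemma:r_y} supplies and what your sketch leaves open.

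Second, your base-case reasoning misreads the bifurcation. The saddle and the spurious minimum do not ``collide at $\mu = a^2/4$'' and then barely survive at $\mu_0$; the three-branch picture only exists for $\mu < \tau$, and Theorems~\ref{thm:detailed_r_y} and \ref{thm:detailed_t_x} give $\tau < \tfrac{a^2}{4(a^2+1)^3}$. Since $\mu_0 \ge \tfrac{a^2}{4(a^2+1)^3} > \tau$, $g_{\mu_0}$ has a \emph{unique} stationary point, so gradient flow from any $W_0$ converges to it with no measure-zero exceptional set and no appeal to generic saddle-avoidance. This distinction matters because the theorem is a for-all-initializations statement, and ``generic'' avoidance would not deliver it; the paper sidesteps the issue entirely by arranging unimodality at $\mu_0$.
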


A few observations regarding Algorithm \ref{alg:main_theory}:
Observe that when the underlying model parameter $a \gg 0$, the regime of initialization for $\mu_0$ is wider; on the other hand, if $a$ is closer to zero then the interval for $\mu_0$ is much narrower. 
As a concrete example, if $a=2$ then it suffices to have $\mu_0 \in [0.008,1)$; whereas if $a=0.1$ then the regime is about $\mu_0 \in [0.0089, 0.01)$.
This matches the intuition that for a ``stronger'' value of $a$ it should be easier to detect the right direction of the underlying model.
Second, although in Line 3 we set $\mu_k$ in a specific manner, it actually suffices to have $$\mu_k \in \Big[(\frac{\mu_{k-1}}{2})^{\nicefrac{2}{3}} ( a^{\nicefrac{1}{3}}-\sqrt{a^{\nicefrac{2}{3}}-(4\mu_{k-1})^{\nicefrac{1}{3}}} )^2, \mu_{k-1} \Big).$$ 
We simply chose a particular expression from this interval for clarity of presentation; see the proof in Section \ref{sec:proof_main} for details.

As presented, Algorithm~\ref{alg:main_theory} is of theoretical nature in the sense that the initialization for $\mu_0$ and the decay rate for $\mu_k$ in Line 3 depend on the underlying parameter $a$, which in practice is unknown. 
{In Algorithm~\ref{alg:main_prac}, we present a modification that is independent of $a$ and $\Wt$. By assuming instead a lower bound on $a$, which is a standard assumption in the literature, we can prove that Algorithm~\ref{alg:main_prac} also converges to the global minimum:}

\begin{algorithm}[t]
\DontPrintSemicolon
\SetAlgoLined
\LinesNumbered
\SetKwFunction{GF}{GradientFlow}
    \caption{Practical (i.e. independent of $a$ and $\Wt$) homotopy algorithm for solving \eqref{eq:general_opt}.}
    \label{alg:main_prac}
		\KwIn{Initial $W_0=W(x_0,y_0)$}
        \KwOut{$\{W_{\mu_{k}}\}_{k=0}^{\infty}$}
        $\mu_0 \gets 1/27$\;
        $W_{\mu_0} \gets \GF(g_{\mu_0},W_0)$\;
        \For{$k=1,2,\ldots$}{
            Let $\mu_{k}  = \left(2/\sqrt{5\mu_0}\right)^{2/3}\mu_{k-1}^{4/3}$
            \;
            $W_{\mu_{k}} \gets \GF(g_{\mu_{k}},W_{\mu_{k-1}})$
        }
\end{algorithm}

\begin{corollary}\label{cor:pratical_converge}
Initialize $\mu_0 = \frac{1}{27}$. If $a>\sqrt{5/27}$ then for any initialization $W_0$, Algorithm~\ref{alg:main_prac} outputs the global optimal solution to \eqref{eq:bivariate_con}, i.e.
\[\lim_{k\rightarrow \infty}W_{\mu_k} = W_\G.\]
\end{corollary}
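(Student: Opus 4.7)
The plan is to verify that Algorithm \ref{alg:main_prac}, under the hypothesis $a>\sqrt{5/27}$, is a legal instance of the homotopy schedule covered by Theorem \ref{thm:main_thm}, after which the conclusion $W_{\mu_k}\to W_\G$ follows immediately. Concretely, I would show (i) that $\mu_0=1/27\in[a^2/(4(a^2+1)^3),a^2/4)$, and (ii) that for every $k\ge 1$ the update $\mu_k=(2/\sqrt{5\mu_0})^{2/3}\mu_{k-1}^{4/3}$ lies in the more general admissible interval
\[ \Big[\big(\tfrac{\mu_{k-1}}{2}\big)^{2/3}\big(a^{1/3}-\sqrt{a^{2/3}-(4\mu_{k-1})^{1/3}}\,\big)^2,\,\mu_{k-1}\Big) \]
flagged in the discussion immediately following Theorem \ref{thm:main_thm}.

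For (i), the upper bound $1/27<a^2/4$ is equivalent to $a>2/\sqrt{27}$, which holds because $\sqrt{5}>2$. The lower bound $a^2/(4(a^2+1)^3)\le 1/27$ I would prove for \emph{every} $a\in\sR$ by a one-variable calculus check: setting $u=a^2$, the map $u\mapsto u/(4(u+1)^3)$ is maximized at $u=1/2$ with value exactly $1/27$. So part (i) is universal in $a$ and the threshold $\sqrt{5/27}$ only enters via the upper endpoint.

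The heart of the argument is (ii), and the main obstacle is the opaque dependence of the lower endpoint on $a$ through $a^{1/3}-\sqrt{a^{2/3}-(4\mu_{k-1})^{1/3}}$. The plan is to rationalize via the conjugate identity
\[ a^{1/3}-\sqrt{a^{2/3}-(4\mu_{k-1})^{1/3}}=\frac{(4\mu_{k-1})^{1/3}}{a^{1/3}+\sqrt{a^{2/3}-(4\mu_{k-1})^{1/3}}}\le\frac{(4\mu_{k-1})^{1/3}}{a^{1/3}}, \]
after which squaring and multiplying by $(\mu_{k-1}/2)^{2/3}$ shows that the lower endpoint of the interval is at most $(2/a)^{2/3}\mu_{k-1}^{4/3}$. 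It therefore suffices to verify $\mu_k\ge(2/a)^{2/3}\mu_{k-1}^{4/3}$, and since $a>\sqrt{5\mu_0}$ gives $(2/\sqrt{5\mu_0})^{2/3}>(2/a)^{2/3}$, this is immediate from the algorithm's update rule. The companion upper bound $\mu_k<\mu_{k-1}$ and the well-definedness of the square root (i.e.\ $\mu_{k-1}\le a^2/4$) reduce to $\mu_{k-1}<5/108$ and $\mu_{k-1}\le 1/27$, both of which follow by a short induction from $\mu_0=1/27=4/108<5/108$ together with the fact that the schedule is contracting whenever $\mu_{k-1}<5/108$. Combining (i) and (ii), Theorem \ref{thm:main_thm} delivers the corollary.
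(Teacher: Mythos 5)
Your proposal is correct and takes essentially the same route as the paper: both reduce the corollary to checking that $\mu_0 = 1/27$ lies in $[a^2/(4(a^2+1)^3), a^2/4)$ and that the schedule satisfies the sufficient condition $(2/a)^{2/3}\mu_{k-1}^{4/3} \le \mu_k < \mu_{k-1}$, which your constant $(2/\sqrt{5\mu_0})^{2/3}$ guarantees precisely because $\sqrt{5\mu_0} = \sqrt{5/27} < a$. The paper packages this as an invocation of Lemma~\ref{lemma:pratical_converge} (treating Algorithm~\ref{alg:main_prac} as the instance of Algorithm~\ref{alg:main_practics} with $\widehat{a}=\sqrt{5\mu_0}$), whereas you unpack the same content directly from the admissible interval after Theorem~\ref{thm:main_thm}; your conjugate identity $a^{1/3}-\sqrt{a^{2/3}-(4\mu)^{1/3}} = (4\mu)^{1/3}/(a^{1/3}+\sqrt{a^{2/3}-(4\mu)^{1/3}})$ is an equivalent, and arguably cleaner, derivation of the same bound the paper obtains from $\sqrt{1-x}\ge 1-x$ in the proof of Theorem~\ref{thm:main_thm}, and your induction on $\mu_{k}<5/108$ makes explicit a monotonicity/well-definedness check the paper leaves implicit.
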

For more details on this modification, see Appendix~\ref{app:practical_alg}.

\section{A Detailed Analysis of the Evolution of the Stationary Points}
\label{sec:stationary_points}

The homotopy approach in Algorithm \ref{alg:main_theory} relies heavily on how the stationary points of \eqref{eq:bivariate_uncon} behave with respect to $\mu_k$.
In this section, we dive deep into the properties of these critical points.

By analyzing the first-order conditions for $g_\mu$, we first narrow our attention to the region $A = \{0\leq x\leq a, 0\leq y\leq \frac{a}{a^2+1}\}.$ By solving the resulting equations, we obtain an equation that only involves the variable $y$:
\begin{align}\label{eq:y}
    r(y;\mu) = \frac{a}{y}-\frac{\mu a^2 }{(y^2+\mu)^2}-(a^2+1).
\end{align}
Likewise, we can find an equation only involving the variable $x$:
\begin{align}\label{eq:x}
    t(x;\mu) = \frac{a}{x}-\frac{\mu a^2}{(\mu(a^2+1)+x^2)^2} - 1.
\end{align}
To understand the behavior of the stationary points of $g_\mu(W)$, we can examine the characteristics of $t(x;\mu)$ in the range $x\in [0,a]$ and the properties of $r(y;\mu)$ in the interval $y\in [0,\frac{a}{a^2+1}]$.

\begin{figure}[t]
\centering
\subfigure[$\mu>\tau$]{\includegraphics[width=.32\textwidth]{./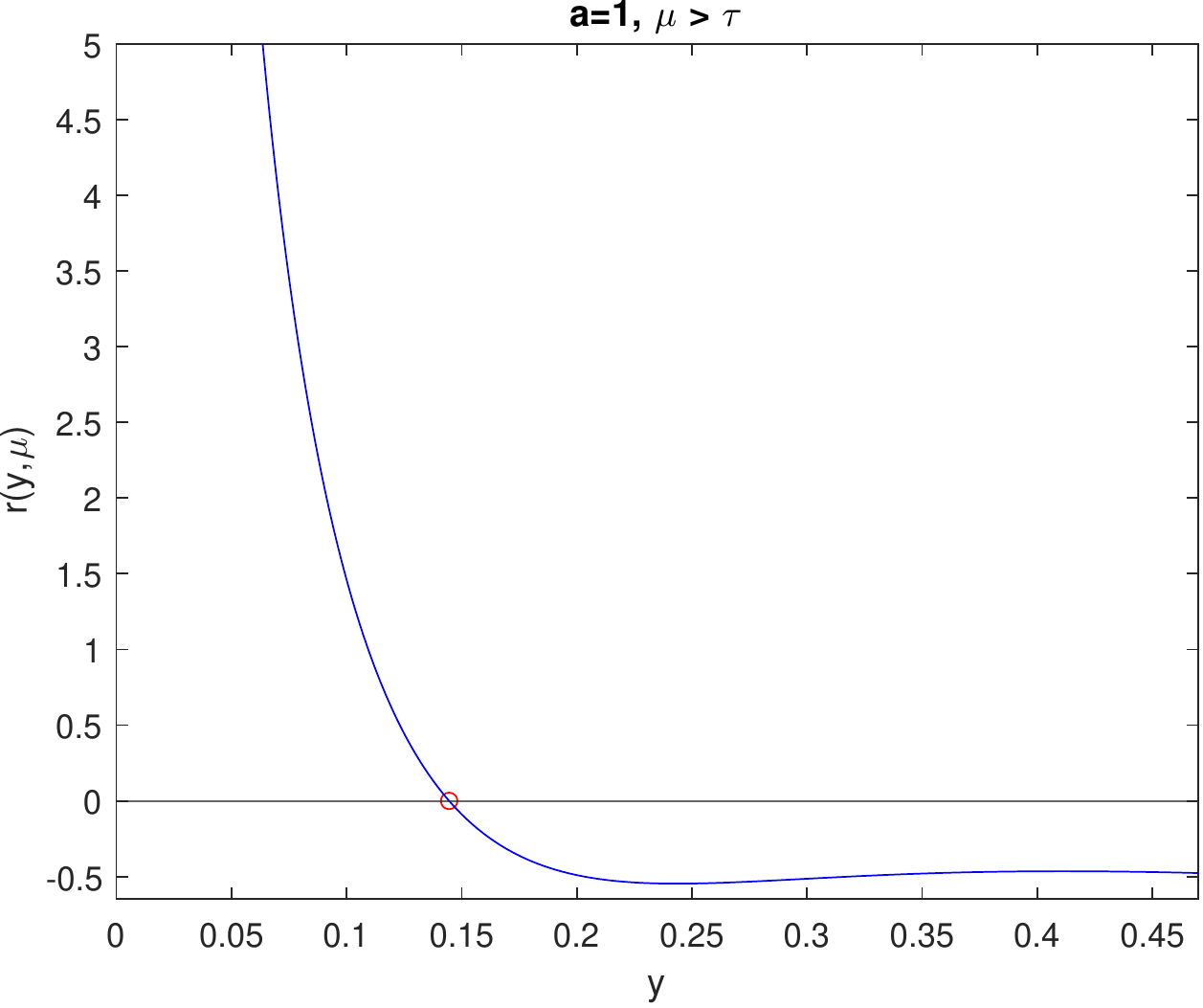}}
\hfill
\subfigure[$\mu=\tau$]{\includegraphics[width=.32\textwidth]{./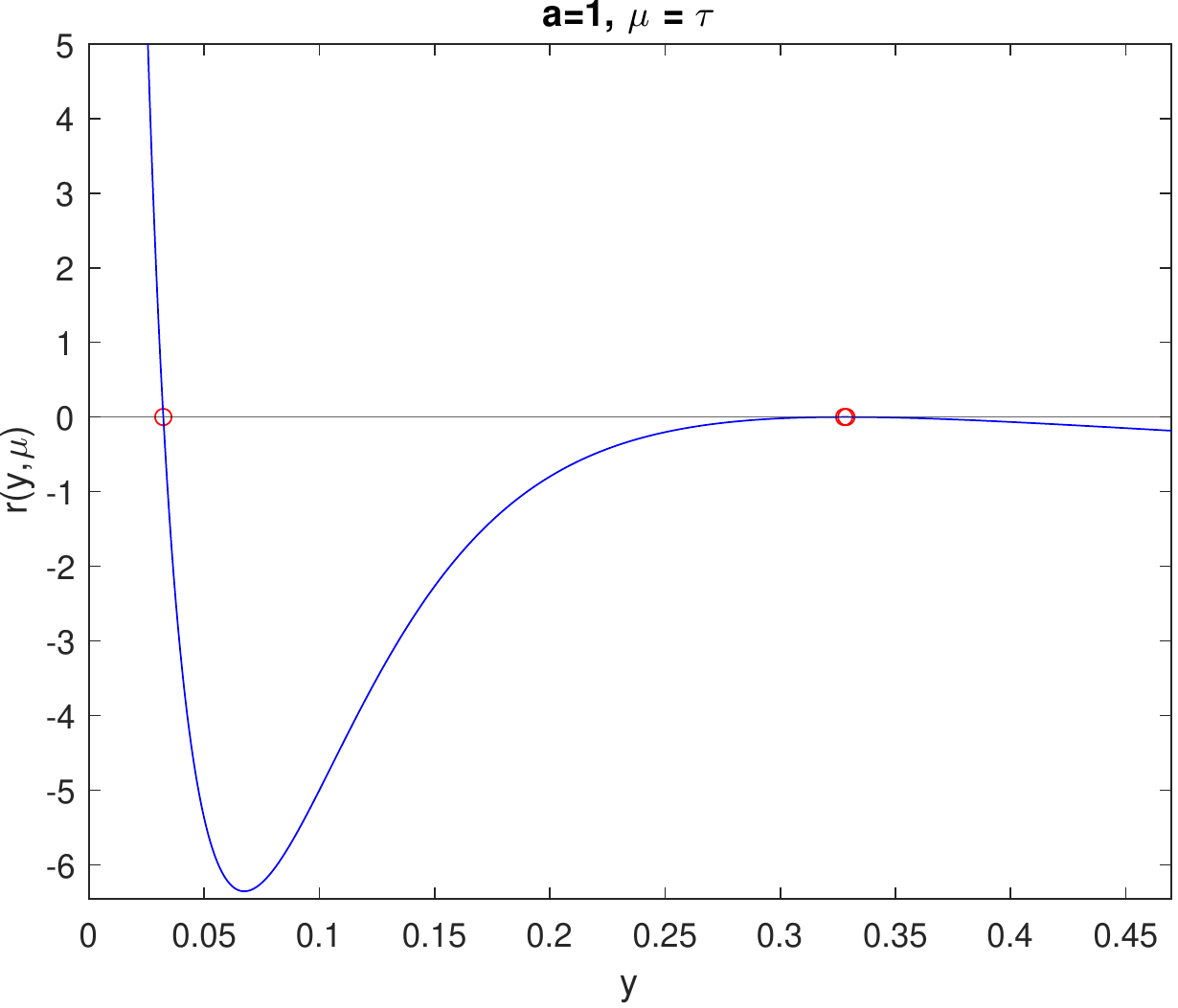}}
\hfill
\subfigure[$\mu<\tau$]{\includegraphics[width=.32\textwidth]{./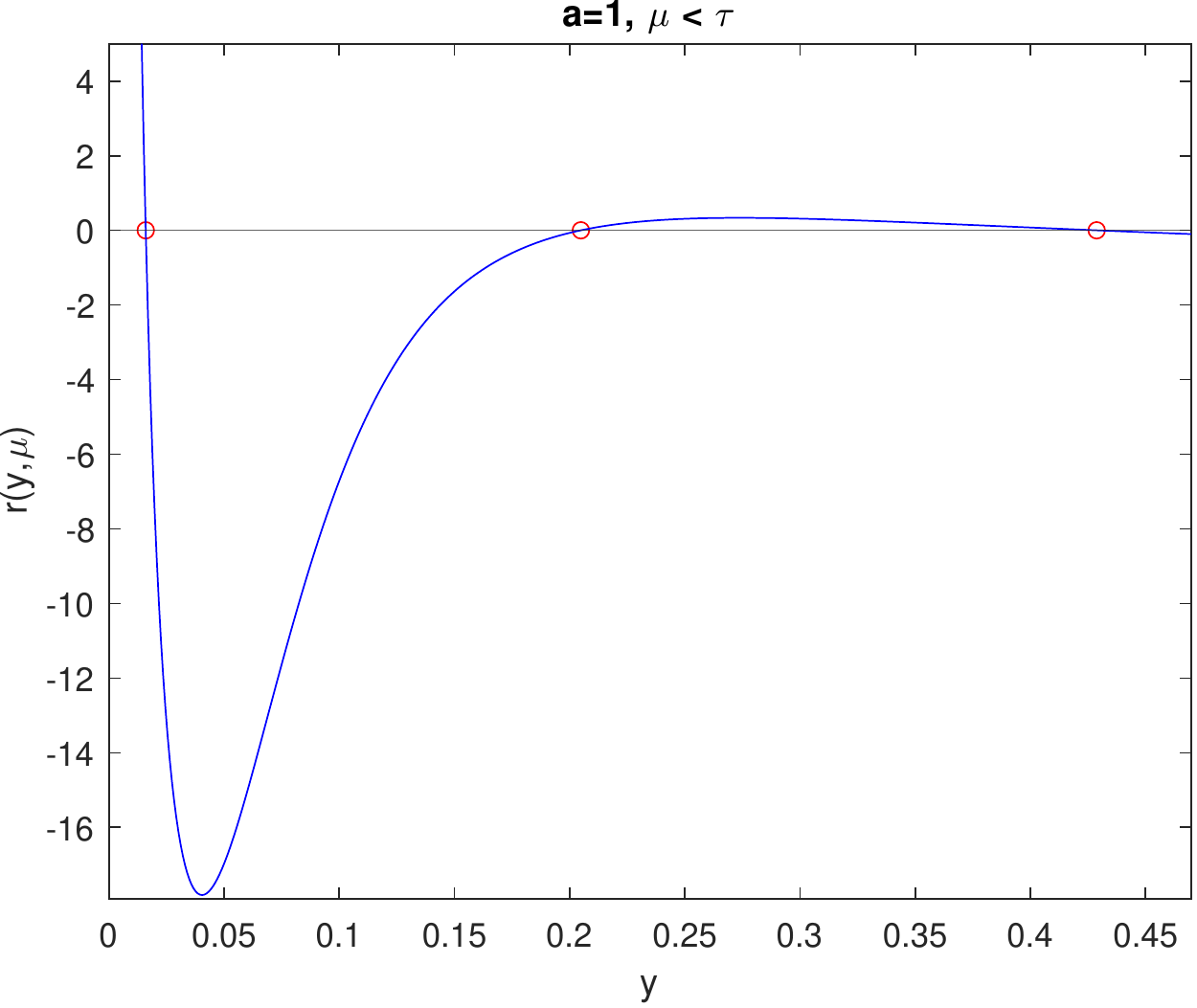}}
\caption{The behavior of $r(y;\mu)$ for different $\mu$.}
\label{fig:behavior_r(y)}
\end{figure}

\begin{figure}[t]
\centering
\subfigure[$\mu>\tau$]{\includegraphics[width=.32\textwidth]{./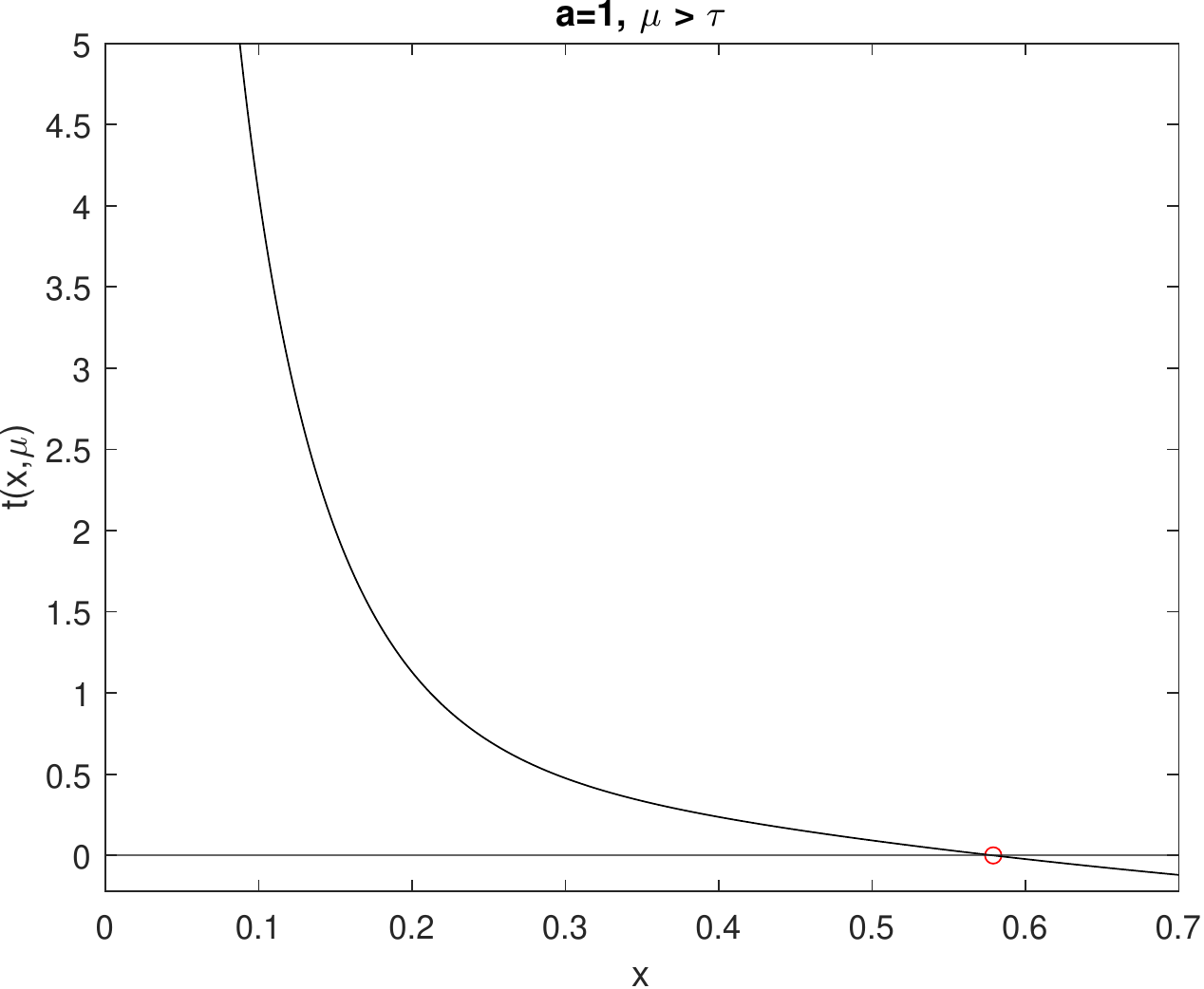}}
\hfill
\subfigure[$\mu=\tau$]{\includegraphics[width=.32\textwidth]{./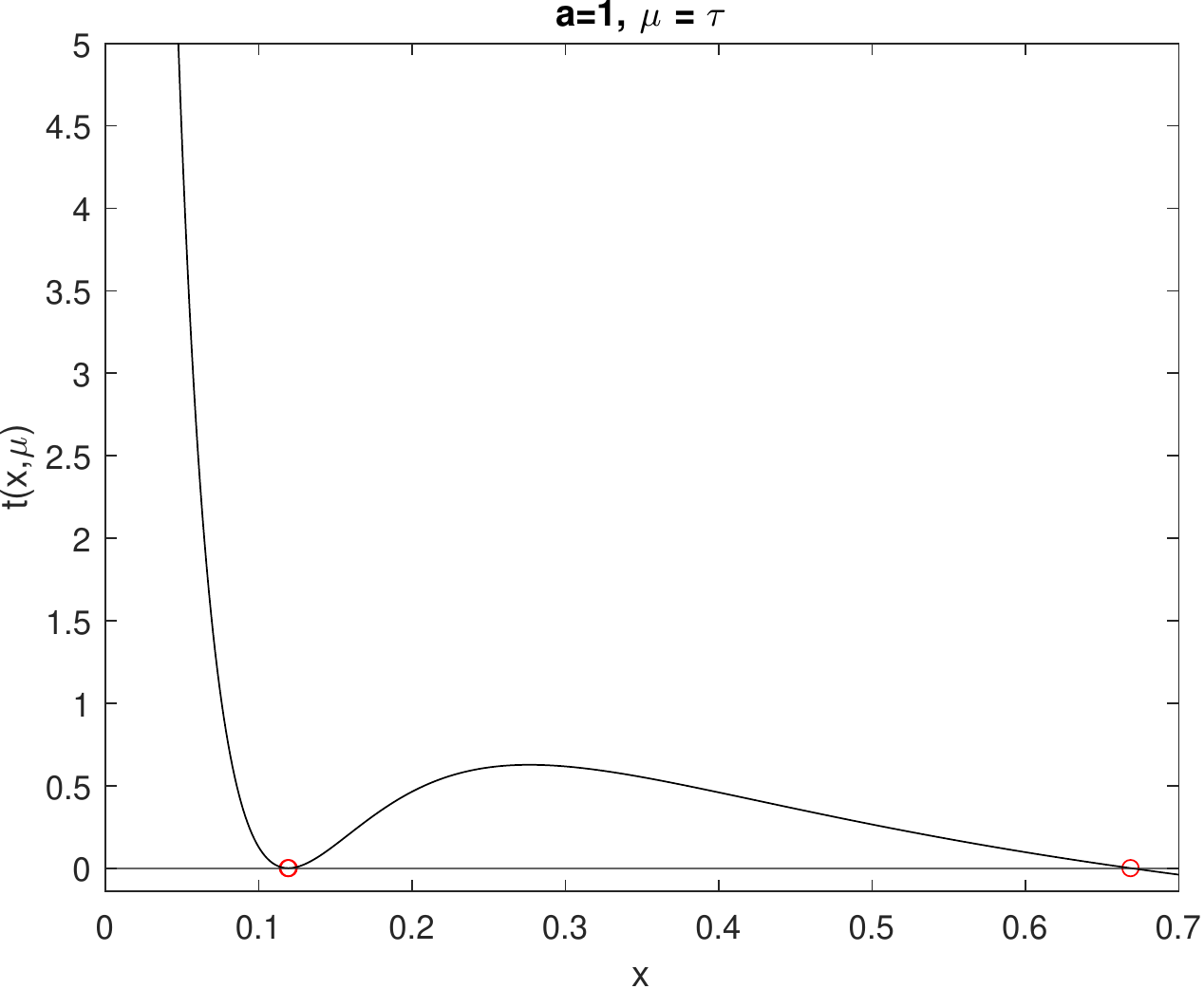}}
\hfill
\subfigure[$\mu<\tau$]{\includegraphics[width=.32\textwidth]{./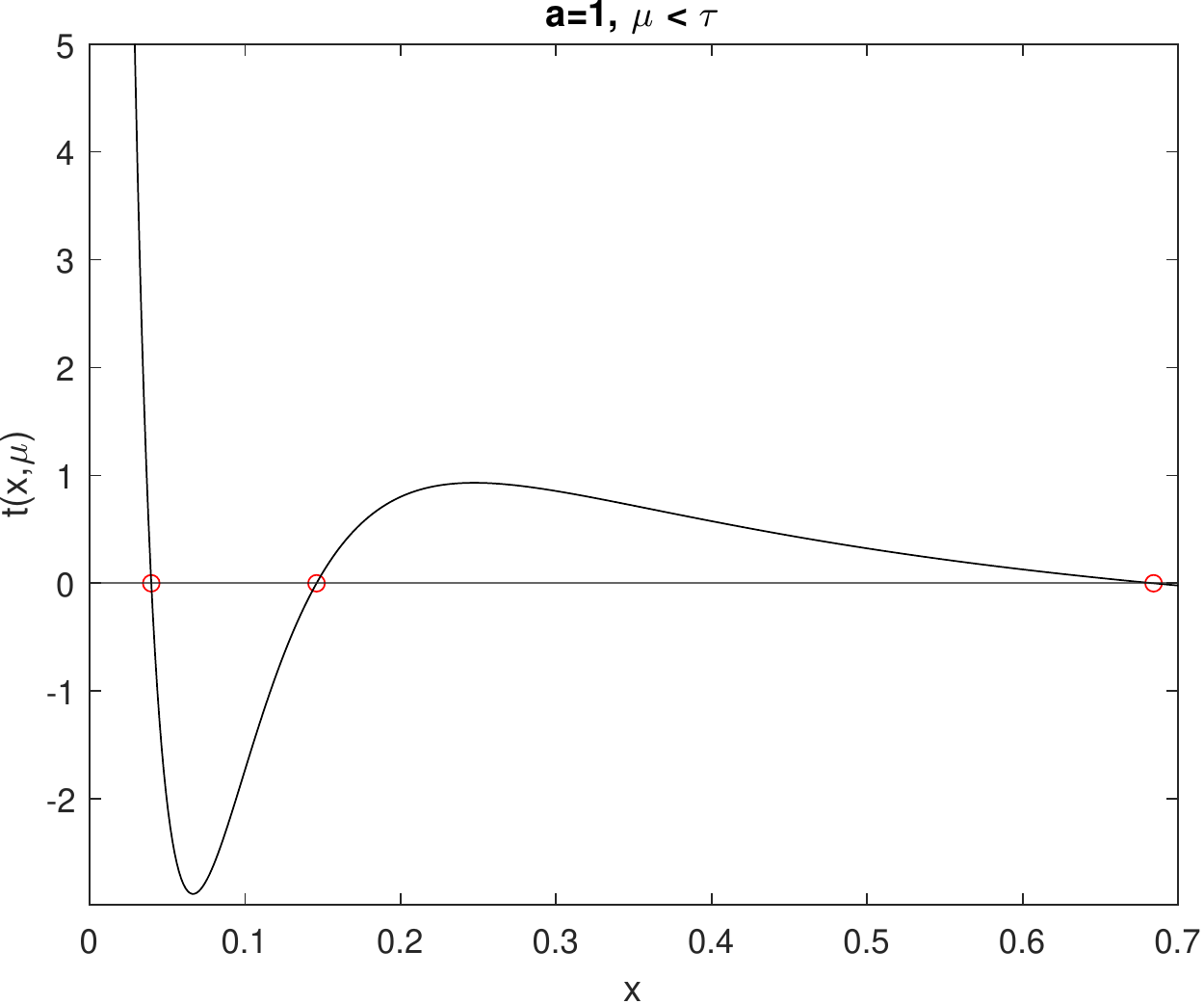}}
\caption{The behavior of $t(x;\mu)$ for different $\mu$.}
\label{fig:behavior_t(x)}
\end{figure}

In Figures \ref{fig:behavior_r(y)} and \ref{fig:behavior_t(x)}, we show the behavior of $r(y;\mu)$ and $t(x;\mu)$ for $a=1$.
{Theorems \ref{thm:detailed_r_y} and \ref{thm:detailed_t_x} in the appendix} establish the existence of a $\tau>0$ with the following useful property:
\begin{corollary}
\label{cor:trajectory}
There exists $\mu < \tau$ such that the equation $\nabla g_\mu(W) = 0$ has three different solutions, denoted as $W_\mu^{*},W_\mu^{**},W_\mu^{***}$. Then,
    \begin{align*}
    \lim_{\mu\rightarrow 0}W_\mu^{*} =
    \begin{bmatrix}
        0&a\\0&0
    \end{bmatrix}
    ,\;
    \lim_{\mu\rightarrow 0}W_\mu^{**} = 
    \begin{bmatrix}
        0&0\\0&0
    \end{bmatrix}
    ,\;
    \lim_{\mu\rightarrow 0}W_\mu^{***} =
    \begin{bmatrix}
        0&0\\\frac{a}{a^2+1}&0
    \end{bmatrix}
\end{align*}
\end{corollary}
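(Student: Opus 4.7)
The plan is to reduce the claim to an asymptotic analysis of the single-variable equation $r(y;\mu)=0$ as $\mu\to 0$. The stationarity condition $\partial_x g_\mu = \mu(x-a) + xy^2 = 0$ forces $x = \mu a/(y^2+\mu)$ at every critical point, so once I identify the $\mu\to 0$ limit of each of the three $y$-roots of $r(\cdot;\mu)$, the matching $x$-limits follow by substitution. Theorem~\ref{thm:detailed_r_y} already guarantees that exactly three such roots exist for every $\mu<\tau$, so the corollary becomes a pure limit calculation.

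Clearing the denominator, $r(y;\mu)=0$ is equivalent to
\[
P(y;\mu)\coloneqq a(y^2+\mu)^2 - \mu a^2 y - (a^2+1)\,y\,(y^2+\mu)^2 = 0.
\]
Setting $\mu=0$ yields $y^4\bigl(a-(a^2+1)y\bigr)=0$, which has an isolated simple root at $y=a/(a^2+1)$ and a root at $y=0$ of multiplicity four. By continuity of the roots in $\mu$, exactly one of the three branches converges to $a/(a^2+1)$ as $\mu\to 0$; the formula $x=\mu a/(y^2+\mu)$ then gives $x\to 0$, which pins down the limit for $W_\mu^{***}$.

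The real work is disentangling the two roots that collapse to $y=0$ and so hide inside the factor $y^4$. The strategy is dominant balance in two scaling regimes distinguished by the ratio $y^2/\mu$. If $y^2\ll\mu$ then $(y^2+\mu)^2\approx\mu^2$, and $r(y;\mu)=0$ reduces at leading order to $a/y\approx a^2/\mu$, forcing $y\sim\mu/a$; substituting into $x=\mu a/(y^2+\mu)$ gives $x\to a$, the limit of $W_\mu^{*}$. If $y^2\gg\mu$ then $(y^2+\mu)^2\approx y^4$, and the leading balance $a/y\approx\mu a^2/y^4$ forces $y\sim(\mu a)^{1/3}$; substituting gives $x\sim(\mu a)^{1/3}\to 0$, the limit of $W_\mu^{**}$.

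The main obstacle is promoting these formal dominant-balance heuristics into rigorous limits. For each predicted scaling I would fix $\epsilon>0$, form the interval $[(1/a-\epsilon)\mu,(1/a+\epsilon)\mu]$ in the first regime and $[(a^{1/3}-\epsilon)\mu^{1/3},(a^{1/3}+\epsilon)\mu^{1/3}]$ in the second, and verify by direct expansion that $P(\cdot;\mu)$ changes sign across each interval for all sufficiently small $\mu$. The intermediate value theorem then supplies a root in each interval, and since these two intervals together with a neighborhood of $a/(a^2+1)$ are disjoint for small $\mu$ while Theorem~\ref{thm:detailed_r_y} allows only three roots in total, the bracketed roots must coincide with $y_\mu^{*}$, $y_\mu^{**}$, and $y_\mu^{***}$. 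Letting $\epsilon\to 0$ yields the asymptotic rates, and substitution into $x=\mu a/(y^2+\mu)$ delivers the claimed matrix limits.
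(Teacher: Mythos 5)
Your proposal is correct, but it takes a genuinely different route from the paper. The paper's proof of this corollary is a one-line citation of Theorems~\ref{thm2:6} and~\ref{thm3:6}, whose proofs proceed by the monotonicity structure of $r$ and $t$: the two small roots $y_\mu^*, y_\mu^{**}$ are squeezed between $0$ and $y_{\mathrm{ub}}\to 0$ (and similarly $x_\mu^{**},x_\mu^{***}$ between $0$ and $x_{\mathrm{ub}}\to 0$), while the remaining roots are pinned by explicit $O(\mu)$ lower bounds such as $\tfrac{a}{a^2+1}(1-c\mu)<y_\mu^{***}$ and $a(1-c\mu)<x_\mu^*$. You instead run a dominant-balance analysis of $r(y;\mu)$ directly, extracting the scalings $y_\mu^*\sim \mu/a$, $y_\mu^{**}\sim(\mu a)^{1/3}$, $y_\mu^{***}\to a/(a^2+1)$, and then recover all three $x$-limits by substituting into the stationarity relation $x=\mu a/(\mu+y^2)$, thereby avoiding a separate analysis of $t(x;\mu)$. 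Both approaches need Theorem~\ref{thm:detailed_r_y}(vi) (or its equivalent) to certify that there are exactly three roots so the three bracketed branches exhaust them. What yours buys is economy (only one equation analyzed, and the $x$-limits come for free) and sharper asymptotic rates; what the paper's buys is that the monotonicity bounds $y_{\mathrm{lb}}, y_{\mathrm{ub}}, x_{\mathrm{lb}}, x_{\mathrm{ub}}$ are already needed elsewhere (e.g., Lemma~\ref{lemma:r_y} and the scheduling bound), so the limits drop out essentially for free once that machinery is in place. The sign checks you sketch (that $r(c\mu;\mu)\to+\infty$ for $c<1/a$ and $\to-\infty$ for $c>1/a$, and $r(d\mu^{1/3};\mu)\to-\infty$ for $d<a^{1/3}$ and $\to+\infty$ for $d>a^{1/3}$) are consistent with the sign pattern forced by the decrease--increase--decrease shape of $r$ in Theorem~\ref{thm2:3}, so the IVT step and the identification of which bracket contains which $y_\mu^{(\cdot)}$ both go through.
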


Note that the interesting regime takes place when $\mu < \tau$.
Then, we characterize the stationary points as either local minima or saddle points:
\begin{lemma}
\label{lemma:type_of_sp}
Let $\mu<\tau$, then $g_{\mu}(W)$ has two local minima at $W_\mu^{*},W_\mu^{***}$, and a saddle point at $W_\mu^{**}$.
\end{lemma}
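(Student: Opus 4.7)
The plan is to reduce the 2D Hessian analysis to a 1D one by exploiting the strict convexity of $g_\mu$ in $x$ at fixed $y$, and then to read off the stability type of each stationary point from the sign pattern of $r(\cdot;\mu)$. Since $\partial_{xx}g_\mu = \mu + y^2 > 0$, for each $y$ the univariate map $x\mapsto g_\mu(x,y)$ has a unique minimizer $x(y;\mu) \coloneqq \mu a/(\mu + y^2)$, which is exactly the curve on which $\partial_x g_\mu = 0$; every stationary point of $g_\mu$ therefore lies on this curve. I would define $G(y) \coloneqq g_\mu(x(y;\mu), y)$ and use the envelope identity $\partial_x g_\mu(x(y),y)\equiv 0$ to get $G'(y) = \partial_y g_\mu(x(y),y)$, so stationary points of $g_\mu$ are in one-to-one correspondence with stationary points of $G$. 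Differentiating $\partial_x g_\mu(x(y),y)\equiv 0$ in $y$ to solve for $x'(y)$ and substituting gives
\begin{align*}
G''(y) \;=\; \partial_{yy}g_\mu - \frac{(\partial_{xy}g_\mu)^2}{\partial_{xx}g_\mu} \;=\; \frac{\det\nabla^2 g_\mu(x(y),y)}{\partial_{xx}g_\mu(x(y),y)}.
\end{align*}
Since $\partial_{xx} g_\mu > 0$ and the trace of $\nabla^2 g_\mu$ is positive, the sign of $G''(y^*)$ decides whether $(x(y^*),y^*)$ is a 2D local minimum ($G''>0$) or a saddle ($G''<0$); a local maximum is ruled out by the positive trace.

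Next, I would substitute $x = \mu a/(\mu+y^2)$ into $\partial_y g_\mu(x,y) = -\mu a + \mu(a^2+1)y + x^2 y$ and simplify to obtain the clean identity
\begin{align*}
G'(y) \;=\; -\mu\, y\, r(y;\mu),
\end{align*}
so for $y>0$ the sign of $G'(y)$ is opposite to that of $r(y;\mu)$. For $\mu<\tau$, the characterization of $r$ behind Corollary~\ref{cor:trajectory} gives three zeros $y_1 < y_2 < y_3$ in $(0, a/(a^2+1))$, together with $r(y;\mu)\to+\infty$ as $y\to 0^+$ and $r(a/(a^2+1);\mu)<0$. The sign pattern of $r$ on $(0, a/(a^2+1))$ is therefore $+,-,+,-$, so $G$ decreases on $(0,y_1)$, increases on $(y_1,y_2)$, decreases on $(y_2,y_3)$, and increases on $(y_3, a/(a^2+1))$. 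Thus $y_1$ and $y_3$ are local minima of $G$ and $y_2$ is a local maximum, which by the previous paragraph means $(x(y_1),y_1)$ and $(x(y_3),y_3)$ are local minima of $g_\mu$ and $(x(y_2),y_2)$ is a saddle.

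The last step is to match the three zeros with the labels $W_\mu^*, W_\mu^{**}, W_\mu^{***}$. Using Corollary~\ref{cor:trajectory}, a short asymptotic expansion of $r(y;\mu)=0$ in the two regimes $y^2\ll\mu$ and $y^2\gg\mu$ identifies the smallest root (where $x\to a$) with $W_\mu^*$, the middle root (where $x,y\to 0$ at rate $(\mu a)^{1/3}$) with $W_\mu^{**}$, and the largest root (where $y\to a/(a^2+1)$) with $W_\mu^{***}$. The main obstacle is extending this matching from the $\mu\to 0$ limit to all $\mu\in(0,\tau)$: one has to argue that the three continuous branches never cross, which follows from the strict monotonicity properties of $r$ in $\mu$ and the fact that $r(\cdot;\mu)$ has exactly three roots throughout $(0,\tau)$, both already established in the appendix theorems invoked by Corollary~\ref{cor:trajectory}.
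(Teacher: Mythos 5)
Your proposal is correct, and it reaches the same key algebraic fact as the paper: the sign of the Hessian determinant at a stationary point is governed by the sign of $r'(y;\mu)$ at the corresponding root, which is determined by the ordering $y_\mu^* < y_{\mathrm{lb}} < y_\mu^{**} < y_{\mathrm{ub}} < y_\mu^{***}$ from Theorem~\ref{thm2:6}. The packaging differs. The paper works directly with $\operatorname{Tr}\nabla^2 g_\mu$ and $\det \nabla^2 g_\mu$, substitutes the stationary-point relations $x = \mu a/(\mu+y^2)$, $y = \mu a/(\mu(a^2+1)+x^2)$ to rewrite $\det$ as $\mu^2 a^2/(xy) - 4x^2y^2$, and then matches the sign of $\det$ to the factor $(4a\mu)^{1/3}y - y^2 - \mu$ controlling $\mathrm{sgn}\, r'$. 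You instead reduce to the 1D profile $G(y) = g_\mu(x(y),y)$ via the envelope identity, observe that $G'' = \det\nabla^2 g_\mu / \partial_{xx}g_\mu$ is exactly the Schur complement, and derive the clean formula $G'(y) = -\mu y\, r(y;\mu)$. The Schur-complement route is conceptually tidier: it removes the need to manipulate $\det$ by hand and makes the role of $r$ transparent (its sign pattern immediately gives the monotonicity of $G$).

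One small rigor point to tighten: you pass from ``$y_1,y_3$ are local minima of $G$, $y_2$ is a local max of $G$'' to conclusions about $g_\mu$ ``by the previous paragraph,'' but that paragraph requires \emph{strict} signs $G''(y_i) \ne 0$, and a sign change of $G'$ alone does not guarantee $G''\ne 0$. The gap is closed either by noting $G''(y_i) = -\mu\, y_i\, r'(y_i;\mu)$ at a root of $r$ and invoking $r'(y_\mu^*)<0$, $r'(y_\mu^{**})>0$, $r'(y_\mu^{***})<0$ from the factorization of $r'$ in Theorem~\ref{thm2:3}/\ref{thm2:6} (this is precisely what the paper uses), or by arguing directly: since $g_\mu(x,y)\ge G(y)$ with equality on the curve, a strict local minimum of $G$ lifts to a local minimum of $g_\mu$, and at $y_2$ the curve direction is decreasing while the $x$-direction is strictly convex, giving a saddle. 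Your closing paragraph on matching labels via asymptotics is unnecessary once you invoke Theorem~\ref{thm2:6}, which already supplies the ordering $y_\mu^{*}<y_\mu^{**}<y_\mu^{***}$ and the corresponding $x$-ordering for all $\mu<\tau$.
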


With the above results, it has been established that $W_{\mu}^*$ converges to the global minimum $W_\G$ as $\mu\rightarrow0$. 
In the following section for the proof of Theorem \ref{thm:main_thm}, we perform a thorough analysis on how to track $W^*_\mu$ and avoid the local minimum at $W^{**}_\mu$ by carefully designing the scheduling for $\mu_k$.

\section{Convergence Analysis: From continuous to discrete}
\label{sec:proof_main}

We now discuss the iteration complexity of our method when gradient descent is used in place of gradient flow. We begin with some preliminaries regarding the continuous-time analysis.

\subsection{Continuous case: Gradient flow}
The key to ensuring the convergence of gradient flow to $W_{\mu}^*$ is to accurately identify the basin of attraction of $W_{\mu}^*$. 
The following lemma provides a region that lies within such basin of attraction.

\begin{lemma}\label{lemma:go_to_next_optimal}
Define $B_\mu= \{(x,y) \mid x_{\mu}^{**}< x\leq a, 0 \leq y<y_{\mu}^{**}\}$. 
Run Algorithm \ref{alg:gf} with input $f = g_\mu(x,y), z_0 = (x(0),y(0))$ where $ (x(0),y(0)) \in B_\mu$, then $\forall t \geq 0$, we have that $(x(t),y(t))\in B_\mu$ and $\lim_{t\rightarrow \infty} (x(t),y(t)) = (x_{\mu}^*,y_{\mu}^*)$.
\end{lemma}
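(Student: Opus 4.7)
The plan is a forward-invariance plus LaSalle argument using $g_\mu$ itself as Lyapunov function. Writing out the flow explicitly,
\begin{align*}
\dot x &= -\partial_x g_\mu = \mu(a - x) - x y^2, \\
\dot y &= -\partial_y g_\mu = \mu a - y\bigl(\mu(1 + a^2) + x^2\bigr),
\end{align*}
I observe that the stationary-point identities from Section~\ref{sec:stationary_points}, namely $\mu(a - x_\mu^{**}) = x_\mu^{**}(y_\mu^{**})^2$ and $\mu a = y_\mu^{**}\bigl(\mu(1+a^2) + (x_\mu^{**})^2\bigr)$, are exactly what will make the ``northwest'' edges of $B_\mu$ pass the invariance test. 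Since $B_\mu$ is bounded, once invariance is in hand the trajectory lies in the compact set $\overline{B_\mu}$ and LaSalle applies.

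The first step checks the vector field on each of the four open edges of the rectangle. On the south edge $\{y = 0\}$ one has $\dot y = \mu a > 0$, and on the east edge $\{x = a\}$ one has $\dot x = -a y^2 \leq 0$. On the west edge $\{x = x_\mu^{**},\ 0 \leq y < y_\mu^{**}\}$, substituting the saddle identity gives
\[
\dot x \;=\; \mu(a - x_\mu^{**}) - x_\mu^{**} y^2 \;=\; x_\mu^{**}\bigl((y_\mu^{**})^2 - y^2\bigr) \;>\; 0,
\]
and symmetrically on the north edge $\{y = y_\mu^{**},\ x_\mu^{**} < x \leq a\}$,
\[
\dot y \;=\; \mu a - y_\mu^{**}\bigl(\mu(1+a^2) + x^2\bigr) \;=\; y_\mu^{**}\bigl((x_\mu^{**})^2 - x^2\bigr) \;<\; 0.
\]
So the field points strictly inward along every open edge, establishing forward invariance of $B_\mu$. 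The only boundary point at which the flow is tangent is the corner $(x_\mu^{**}, y_\mu^{**})$, which is exactly the saddle $W_\mu^{**}$.

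For convergence, the identity $\frac{d}{dt} g_\mu = -\|\nabla g_\mu\|^2 \leq 0$ together with compactness of $\overline{B_\mu}$ implies via LaSalle that the $\omega$-limit set of any trajectory starting in $B_\mu$ is a connected subset of the stationary points of $g_\mu$ contained in $\overline{B_\mu}$. By Corollary~\ref{cor:trajectory} and Lemma~\ref{lemma:type_of_sp}, the only candidates are $W_\mu^*$ (which the stationary equations force to satisfy $0 < y_\mu^* < y_\mu^{**}$ and $x_\mu^{**} < x_\mu^* < a$, placing it in the interior of $B_\mu$) and the corner $W_\mu^{**}$. The main obstacle, which I expect to be the trickiest step, is ruling out the saddle as $\omega$-limit. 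I would invoke the stable-manifold theorem at $W_\mu^{**}$: since $\partial_{xy}^2 g_\mu(W_\mu^{**}) = 2 x_\mu^{**} y_\mu^{**} > 0$, the eigenvector of $\nabla^2 g_\mu(W_\mu^{**})$ corresponding to its positive eigenvalue has both components of the same sign, so the stable manifold is locally tangent to a ``northeast--southwest'' line through the corner and leaves $B_\mu$ immediately on both branches. Combined with the strict inward-pointing condition on $\partial B_\mu$ (which prevents the stable manifold from ever re-entering $B_\mu$, since such a re-entry would correspond to a forward trajectory escaping $B_\mu$ through its boundary), the stable manifold of $W_\mu^{**}$ is disjoint from $B_\mu$. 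Hence no forward trajectory starting in $B_\mu$ converges to $W_\mu^{**}$, and the $\omega$-limit must equal $\{W_\mu^*\}$, yielding the claim.
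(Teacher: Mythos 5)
Your proof is correct but takes a genuinely different route from the paper's. The paper's argument decomposes $B_\mu$ into three subregions $C_{\mu 1}, C_{\mu 2}, C_{\mu 3}$ using the two nullclines $\dot x = 0$ and $\dot y = 0$ (defined through $y_{\mu 1}, y_{\mu 2}, x_{\mu 1}, x_{\mu 2}$ in \eqref{eq:y1}--\eqref{eq:x2}), shows that trajectories started in $C_{\mu 1}$ (where both $\partial_x g_\mu, \partial_y g_\mu > 0$) or $C_{\mu 2}$ (where both $<0$) hit $C_{\mu 3}$ in finite time, and then establishes forward invariance of $C_{\mu 3}$ together with a strict-minimality statement (Lemmas~\ref{lemma:stay_inside_C_3} and \ref{lemma:strict_minima}) to conclude by a Lyapunov argument. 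Your proof instead proves forward invariance of $B_\mu$ itself by checking the flow on each of the four edges---correctly observing that the saddle's stationarity identities make the west and north edges strictly inflow, while the south ($\dot y = \mu a > 0$) and east ($\dot x = -ay^2 \le 0$) edges are handled directly---and then invokes LaSalle on the compact forward-invariant set $\overline{B_\mu}$ to reduce the $\omega$-limit set to one of the two equilibria $W_\mu^*$ or $W_\mu^{**}$, finally ruling out the saddle corner via the eigenvector sign of $\nabla^2 g_\mu(W_\mu^{**})$ and the stable-manifold theorem. Your route is shorter and conceptually cleaner at the price of importing the stable-manifold theorem as a black box; the paper's route is more elementary but needs the auxiliary nullcline decomposition and a separate minimality lemma. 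One phrasing nit: ``a re-entry would correspond to a forward trajectory escaping $B_\mu$'' is not quite the cleanest framing. The tight version of your argument is this: if some $p\in B_\mu$ lay on the stable manifold, then by forward invariance $\phi_t(p)\in B_\mu$ for all $t\ge 0$, while for $t$ large $\phi_t(p)$ lies in the local stable manifold, which your eigenvector-sign computation places in the interior of the two opposite quadrants $\{x>x_\mu^{**},y>y_\mu^{**}\}$ and $\{x<x_\mu^{**},y<y_\mu^{**}\}$ and hence outside $B_\mu$---a contradiction.
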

In Figure \ref{fig:g_contour_region} (b), the lower-right rectangle corresponds to $B_\mu$.
Lemma \ref{lemma:go_to_next_optimal} implies that the gradient flow with any initialization inside $B_{\mu_{k+1}}$ will converge to $W_{\mu_{k+1}}^*$ at last. 
Then, by utilizing the previous solution $W^*_{\mu_k}$ as the initial point, as long as it lies within region $B_{\mu_{k+1}}$, the gradient flow can converge to $W_{\mu_{k+1}}^*$, thereby achieving the goal of tracking $W_{\mu_{k+1}}^*$. 
Following the scheduling for $\mu_k$ prescribed in Algorithm \ref{alg:main_theory} provides a sufficient condition to ensure that will happen. 

The following lemma, with proof in the appendix, is used for the Proof of Theorem \ref{thm:main_thm}. It provides a lower bound for $y_\mu^{**}$ and upper bound for $y_\mu^*.$
\begin{lemma}\label{lemma:r_y}
If $\mu<\tau$, then 
$y_{\mu}^{**}>\sqrt{\mu}$, and $\frac{(4\mu)^{1/3}}{2}\left(a^{1/3}-\sqrt{a^{2/3}-(4\mu)^{1/3}}\right)>y_{\mu}^*$.
\end{lemma}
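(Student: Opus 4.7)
My plan is to work entirely with the one-variable function $r(y;\mu) = a/y - \mu a^2/(y^2+\mu)^2 - (a^2+1)$, whose three positive zeros in the regime $\mu<\tau$ are exactly $y_\mu^*,y_\mu^{**},y_\mu^{***}$. The first step is to pin down the monotonicity of $r(\cdot;\mu)$. Differentiating gives $r'(y;\mu) = -a/y^2 + 4\mu a^2 y/(y^2+\mu)^3$, and setting $r'=0$ reduces (after cube-rooting $4\mu a\, y^3 = (y^2+\mu)^3$) to the quadratic $y^2 - (4\mu a)^{1/3} y + \mu = 0$. For $\mu \leq a^2/4$ (which certainly holds when $\mu<\tau$), this quadratic has two positive roots $\alpha \leq \beta$ with $\alpha\beta = \mu$ and $\alpha + \beta = (4\mu a)^{1/3}$. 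Combined with the boundary limits $r\to+\infty$ as $y\to 0^+$ and $r\to -(a^2+1)$ as $y\to\infty$, this forces $r$ to be strictly decreasing on $(0,\alpha)$, strictly increasing on $(\alpha,\beta)$, and strictly decreasing on $(\beta,\infty)$. Since $\mu<\tau$ produces three zeros, they must interlace the critical points as $y_\mu^* < \alpha < y_\mu^{**} < \beta < y_\mu^{***}$.

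For the bound on $y_\mu^*$, I would then observe the algebraic identity
\[
\frac{(4\mu)^{1/3}}{2}\bigl(a^{1/3}-\sqrt{a^{2/3}-(4\mu)^{1/3}}\bigr) \;=\; \frac{(4\mu a)^{1/3} - \sqrt{(4\mu a)^{2/3} - 4\mu}}{2} \;=\; \alpha,
\]
obtained by absorbing the prefactor $(4\mu)^{1/3}$ into the radical. Combined with the interlacing $y_\mu^* < \alpha$ from the previous step, this is exactly the upper bound in the statement.

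For the bound $y_\mu^{**} > \sqrt{\mu}$, I would first note that AM--GM gives $\alpha \leq \sqrt{\alpha\beta} = \sqrt{\mu} \leq \beta$, so $\sqrt{\mu}$ sits inside $[\alpha,\beta]$ where $r(\cdot;\mu)$ is strictly increasing. It therefore suffices to show $r(\sqrt{\mu};\mu) < 0$, because then monotonicity plus $r(y_\mu^{**};\mu)=0$ forces $\sqrt{\mu} < y_\mu^{**}$. Direct substitution gives $r(\sqrt{\mu};\mu) = a/\sqrt{\mu} - a^2/(4\mu) - (a^2+1)$; multiplying by $-4\mu>0$ turns the target inequality into $Q(s) := 4(a^2+1)s^2 - 4as + a^2 > 0$ evaluated at $s=\sqrt{\mu}$. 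The discriminant of $Q$ is $16a^2 - 16(a^2+1)a^2 = -16a^4 < 0$ and its leading coefficient is positive, so $Q>0$ everywhere, closing the argument.

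The only real obstacles here are algebraic rather than conceptual: recognizing that the expression $\frac{(4\mu)^{1/3}}{2}(a^{1/3}-\sqrt{a^{2/3}-(4\mu)^{1/3}})$ is nothing other than the smaller critical point $\alpha$ of $r(\cdot;\mu)$, which collapses the first bound into a one-line monotonicity consequence, and spotting that the quadratic arising from $r(\sqrt{\mu};\mu)$ has uniformly negative discriminant. Both of these are clean checks once the three-interval monotonicity skeleton of $r$ has been laid out.
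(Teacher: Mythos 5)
Your proof is correct and follows essentially the same route as the paper, which defers this lemma to Theorem~\ref{thm:detailed_r_y}: both arguments hinge on the three-interval monotonicity of $r(\cdot;\mu)$, the identification of $y_{\mathrm{lb}}$ with the smaller critical point (your $\alpha$), and the sign check $r(\sqrt{\mu};\mu)<0$. The only cosmetic deviation is that you verify $r(\sqrt{\mu};\mu)<0$ by a discriminant computation where the paper completes the square, and you have a small sign typo (``multiplying by $-4\mu>0$'' should read ``by $-4\mu<0$, which reverses the inequality''), but neither affects the soundness of the argument.
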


\paragraph{Proof of Theorem \ref{thm:main_thm}.} 
Consider that we are at iteration $k+1$ of Algorithm \ref{alg:main_theory}, then $\mu_{k+1} < \mu_k$.
If $\mu_k>\tau$ and $\mu_{k+1}>\tau$, then there is only one stationary point for $g_{\mu_k}(x,y)$ and $g_{\mu_{k+1}}(x,y)$, thus, \ref{alg:gf} will converge to such stationary point. 
Hence, let us assume $\mu_{k+1} \leq \tau$. 
From Theorem \ref{thm:detailed_t_x} in the appendix, we known that $x_{\mu_{k+1}}^{**}<x_{\mu_{k}}^*$. 
Then, the following relations hold:
\begin{align*}
    y_{\mu_{k+1}}^{**}\overset{(1)}{>}\sqrt{\mu_{k+1}}\geq 
2\left(\frac{\mu_{k}^2}{4a}\right)^{1/3} \overset{(2)}{\geq}\frac{(4\mu_k)^{1/3}}{2}\left(a^{1/3}-\sqrt{a^{2/3}-(4\mu_k)^{1/3}}\right)\overset{(3)}{>}y_{\mu_k}^{*}
\end{align*}
Here (1) and (3) are due to Lemma \ref{lemma:r_y}, and (2) follows from $\sqrt{1-x}\geq 1-x$ for $0\leq x\leq 1$.
Then it implies that $(x_{\mu_k}^{*},y_{\mu_k}^{*})$ is in the region $\{(x,y) \mid x_{\mu_{k+1}}^{**}< x\leq a, 0\leq y<y_{\mu_{k+1}}^{**}\}$. By Lemma \ref{lemma:go_to_next_optimal},  the \ref{alg:gf} procedure will converge to $(x_{\mu_{k+1}}^{*},y_{\mu_{k+1}}^{*})$.
Finally, from Theorems \ref{thm:detailed_r_y} and \ref{thm:detailed_t_x}, if $\lim_{k\rightarrow \infty}\mu_k=0$, then $\lim_{k\rightarrow \infty}x_{\mu_k}^* = a, \lim_{k\rightarrow \infty}y_{\mu_k}^* = 0$, thus, converging to the global optimum, i.e.,
\[\lim_{k\rightarrow \infty}W_{\mu_k} = W_\G.\]
\subsection{Discrete case: Gradient Descent}
In Algorithms \ref{alg:main_theory} and \ref{alg:main_practics}, gradient flow is employed to locate the next stationary points, which is not practically feasible. A viable alternative is to execute Algorithm \ref{alg:main_theory}, replacing the gradient flow with gradient descent. Now, at every iteration $k$, Algorithm \ref{alg:main_theory_gd} uses gradient descent to output $W_{\mu_k,\eps_k}$, a $\eps_k$ stationary point of $g_{\mu_k}$, initialized at $W_{\mu_{k-1},\eps_{k-1}}$, and a step size of $\eta_k = 1/(\mu_k(a^2+1)+3a^2)$. The tolerance parameter $\eps_k$ can significantly influence the behavior of the algorithm and must be controlled for different iterations. A convergence guarantee is established via a simplified theorem presented here. A more formal version of the theorem and a comprehensive description of the algorithm (i.e., Algorithm \ref{alg:main_theory_gd}) can be found in Appendix \ref{appendix:gd}.
\begin{theorem}[Informal]\label{thm:main_theory_gd_simplified}
    For any $\epsd>0$, set $\mu_0$ satisfy a mild condition, and use updating rule $\eps_k = \min\{\beta a\mu_k,\mu_k^{3/2}\}$, $\mu_{k+1} = (2\mu_k^2)^{2/3}\frac{(a+\epsilon_k/\mu_k)^{2/3}}{(a-\epsilon_k/\mu_k)^{4/3}}$, and let $K \equiv K(\mu_0,a,\epsd)\in O\left({\ln \frac{\mu_0}{ a \epsd}}\right)$. Then, for any initialization $W_0$, following the updated procedure above for $k=0,\ldots,K$, we have:
    \begin{align*}
         \|W_{\mu_k,\eps_k}-W_\G\|_2\leq \epsd
    \end{align*}
    that is, $W_{\mu_k,\eps_k}$ is $\epsd$-close in Frobenius norm to global optimum $W_\G$. Moreover, the total number of gradient descent steps is
    upper bounded by $O\left(\left( \mu_0a^2+a^2+\mu_0\right)\left( \frac{1}{ a^6}+ \frac{1}{\epsd^6}\right )\right)$.

\end{theorem}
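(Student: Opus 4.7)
The proof plan is to mirror the continuous-time argument from Section~\ref{sec:proof_main}, with two additional pieces of bookkeeping: (a) controlling the perturbation from using $\eps_k$-approximate stationary points rather than exact ones, and (b) counting the total number of gradient descent steps. First I would establish that $g_{\mu_k}$ is $L_k$-smooth on the region where the iterates stay: since $\nabla^2 g_\mu = \mu \nabla^2 f + \nabla^2 h$ with $\nabla^2 f = \mathrm{diag}(1, a^2+1)$ and $\nabla^2 h$ containing entries of order $x^2, y^2, xy$, a discrete analogue of Lemma~\ref{lemma:go_to_next_optimal} (confining iterates to the box $0\le x\le a,\ 0\le y\le a/(a^2+1)$) yields $L_k \le \mu_k(a^2+1) + 3a^2$, matching the prescribed step size $\eta_k = 1/(\mu_k(a^2+1)+3a^2)$. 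Standard theory for gradient descent on $L$-smooth nonconvex functions then gives $\min_{0\le t<T}\|\nabla g_{\mu_k}(W^{(t)})\|^2 \le 2L_k \Delta_k / T$, where $\Delta_k := g_{\mu_k}(W_{\mu_{k-1},\eps_{k-1}}) - g_{\mu_k}^*$, so an $\eps_k$-approximate stationary point is produced in at most $\ABigO{L_k \Delta_k / \eps_k^2}$ inner iterations.

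Next I would quantify the perturbation from $\eps_k$-stationarity. Near the local minimum $W_{\mu_k}^*$, the Hessian of $g_{\mu_k}$ is approximately $\mu_k\,\mathrm{diag}(1, a^2+1) + \mathrm{diag}(0, (x_{\mu_k}^*)^2)$ since $y_{\mu_k}^*\to 0$; a short argument then shows $g_{\mu_k}$ is locally $\sigma_k$-strongly convex with $\sigma_k = \Omega(\mu_k)$, hence $\|W_{\mu_k,\eps_k} - W_{\mu_k}^*\|_2 \le \eps_k / \sigma_k$. I would then sharpen Lemma~\ref{lemma:r_y} to a quantitative gap between $y_{\mu_{k+1}}^{**}$ and $y_{\mu_k}^*$, showing that the prescribed update $\mu_{k+1} = (2\mu_k^2)^{2/3}(a+\eps_k/\mu_k)^{2/3}/(a-\eps_k/\mu_k)^{4/3}$ forces $\sqrt{\mu_{k+1}}$ to exceed $y_{\mu_k}^* + \eps_k/\sigma_k$ with strict slack. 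This guarantees $W_{\mu_k,\eps_k} \in B_{\mu_{k+1}}$, so the continuous-time tracking argument from the proof of Theorem~\ref{thm:main_thm} applies essentially unchanged to conclude that the next inner loop converges into the basin of $W_{\mu_{k+1}}^*$. The tolerance schedule $\eps_k = \min\{\beta a \mu_k, \mu_k^{3/2}\}$ is calibrated exactly for this condition: the $\beta a \mu_k$ clause keeps $\eps_k/\mu_k$ bounded away from $a$ so the update rule remains well-defined, while the $\mu_k^{3/2}$ clause ensures $\eps_k/\sigma_k = \ABigO{\sqrt{\mu_k}}$, which is compatible with the basin gap of order $\sqrt{\mu_{k+1}} - y_{\mu_k}^*$.

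Finally, to bound $K$, I would observe that as $\mu\to 0$, $y_\mu^*$ and $a - x_\mu^*$ tend to $0$ at rate $\ABigO{\sqrt{\mu}}$ (reading off the leading behaviour of $r(y;\mu)=0$ and $t(x;\mu)=0$ near the global minimum), so driving $\mu_K$ down to $\ABigO{\epsd^2}$ suffices to guarantee $\|W_{\mu_K,\eps_K} - W_\G\|_2 \le \epsd$. The update rule gives $\log(1/\mu_{k+1}) \ge \tfrac{4}{3}\log(1/\mu_k) - \ABigO{1}$, so $\mu_k$ decays doubly-exponentially and $K = \ABigO{\log\log(\mu_0/\epsd)}$, comfortably inside the stated $\ABigO{\log(\mu_0/(a\epsd))}$ bound. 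Summing the per-iteration counts $\ABigO{L_k \Delta_k / \eps_k^2}$ and noting that the last iteration dominates (since $\eps_K$ is of order $\mu_K^{3/2}$, i.e.\ $\epsd^3$, and $L_K$ is at most a constant times $a^2$), the total number of gradient descent steps works out to $\ABigO{(\mu_0 a^2 + a^2 + \mu_0)(1/a^6 + 1/\epsd^6)}$.

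The main obstacle, I expect, will be step two: quantifying the local strong convexity of $g_{\mu_k}$ at $W_{\mu_k}^*$ in a way that degrades gracefully as $\mu_k\to 0$, and then threading this estimate through a sharpened version of Lemma~\ref{lemma:r_y} so that the prescribed tolerance $\eps_k$ is exactly tight enough to keep $W_{\mu_k,\eps_k}$ inside $B_{\mu_{k+1}}$. Every strict inequality used in the continuous-time chain $y_{\mu_{k+1}}^{**} > \sqrt{\mu_{k+1}} \ge 2(\mu_k^2/4a)^{1/3} \ge \cdots > y_{\mu_k}^*$ must be made quantitative, and the schedules for $\mu_k$ and $\eps_k$ must be coupled with the smoothness constant $L_k$ so that the basin-of-attraction requirement and the final complexity count hold simultaneously.
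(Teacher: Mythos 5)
Your proposal reaches the same destination but by a genuinely different route, and the difference is worth examining.

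\textbf{Where you diverge from the paper.} Your central device is local strong convexity: near $W^*_{\mu_k}$ the Hessian has smallest eigenvalue $\Omega(\mu_k)$, so an $\eps_k$-stationary point is within $\eps_k/\sigma_k = O(\sqrt{\mu_k})$ of $W^*_{\mu_k}$, and you combine this with a bound on $\|W^*_{\mu_k} - W_\G\|$ via the triangle inequality. The paper never invokes strong convexity. Instead it works directly with the set of $\eps$-stationary points, proving (Corollary~\ref{cor:boundary}) that $\Aeps$ splits into two connected components $A^1_{\mu,\eps}$ and $A^2_{\mu,\eps}$ whose extremal coordinates are the roots of the perturbed equations $r_\eps, t_\eps, r_{\eps\_}, t_{\eps\_}$, and then bounding $\|W_{\mu_k,\eps_k} - W_\G\|$ purely from the geometry of $A^1_{\mu_k,\eps_k}$. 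Both routes yield $O(\sqrt{\mu_k})$, but yours buys conceptual economy at the price of an extra verification: you must show that the $\eps_k$-stationary point returned by gradient descent actually lies inside the ball where $g_{\mu_k}$ is strongly convex. Strong convexity fails once the off-diagonal term $2xy$ of the Hessian dominates, roughly when $y \gtrsim \sqrt{\mu_k}/2$, while the perturbed root $y^*_{\mu_k,\eps_k}$ can be as large as $\Theta(\mu_k^{2/3})$. This is fine for small $\mu_k$ since $\mu_k^{2/3} < \sqrt{\mu_k}$, but the margin must be checked; the paper sidesteps this entirely by never leaving the explicit root description. More importantly, the strong-convexity bound only applies \emph{after} you know gradient descent landed in the ``correct'' component $A^1_{\mu,\eps}$ rather than $A^2_{\mu,\eps}$. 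That selection is exactly the content of the paper's Lemma~\ref{lemma:go_to_next_optimal_gd} (the discrete analogue of Lemma~\ref{lemma:go_to_next_optimal}), and your phrase ``applies essentially unchanged'' underestimates the work there: the discrete proof needs the partition $\Aeps = A^1 \cup A^2$, the ordering $\max_\mu x^{**}_{\mu,\eps} < \min_\mu x^*_{\mu,\eps}$ (Lemma~\ref{lemma:no_cross_x}), and the smoothness constant calibrated to the step size, none of which is automatic from the continuous argument.

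\textbf{Two small errors that do not sink the argument but should be fixed.} First, you assert $y^*_\mu$ and $a - x^*_\mu$ decay at rate $O(\sqrt{\mu})$; the fixed-point equations actually give $a - x^*_\mu = O(\mu)$ and $y^*_\mu = O(\mu^{2/3})$. This is faster than you claim, so it does not harm the final bound (the $O(\sqrt{\mu_k})$ rate comes entirely from the $\eps_k$-stationarity slack $\eps_k/\mu_k = \sqrt{\mu_k}$), but the stated threshold ``drive $\mu_K$ to $O(\epsd^2)$'' should be rederived from the correct exponents. Second, your claim that ``the last iteration dominates'' the gradient-step count is only half the story. The last iteration supplies the $1/\epsd^6$ term, but the $1/a^6$ term in the stated complexity comes from the \emph{early} iterations, where $\eps_k = \beta a \mu_k$ and $\mu_k$ is bounded below by $\beta^2 a^2$, so that $1/\eps_k^2 \le 1/(\beta^6 a^6)$. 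The paper telescopes the objective differences over these early iterations to get a contribution of order $g_{\mu_0}/(\beta^6 a^6)$; a purely last-term argument misses this.

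\textbf{One thing you get that the paper does not exploit.} You correctly observe that the exact update $\mu_{k+1} \approx (2/a)^{2/3}\mu_k^{4/3}$ decays doubly-exponentially, giving $K = O(\log\log(\mu_0/\epsd))$. The paper deliberately weakens this to the single-exponential $\mu_{k+1} \le (1-\delta)\mu_k$ (Condition~\ref{cond:2}), which is all that is needed for the stated $O(\ln(\mu_0/(a\epsd)))$ bound and simplifies the case analysis. Your sharper observation is correct and consistent, but the total gradient-step bound is dominated by the $1/\eps_K^2 = 1/\mu_K^3$ factor at the final iteration, so the tighter $K$ bound does not improve the headline complexity.

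Overall the plan is sound and would produce a correct proof, but the step you flag as ``the main obstacle'' is indeed where the work is: the coupling between strong convexity radius, the partition of $\Aeps$, and the schedule for $\eps_k$ needs to be made explicit, and the paper's Corollary~\ref{cor:boundary}, Lemma~\ref{lemma:go_to_next_optimal_gd}, and Conditions~\ref{cond:1}--\ref{cond:3} are exactly the machinery that makes that coupling rigorous.
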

\section{Experiments} 
We conducted experiments to verify that Algorithms \ref{alg:main_theory} and \ref{alg:main_practics} both converge to the global minimum of \eqref{eq:bivariate_uncon}. Our purpose is to illustrate two main points: First, we compare our updating scheme as given in Line 3 of Algorithm \ref{alg:main_theory} against a faster-decreasing updating scheme for $\mu_k$.
In Figure \ref{fig:differet_scheduling} we illustrate how a naive faster decrease of $\mu$ can lead to spurious a local minimum. 
Second, in Figure \ref{fig:differet_start}, we show that regardless of the initialization, Algorithms \ref{alg:main_theory} and \ref{alg:main_practics} always return the global minimum. 
In the supplementary material, we provide additional experiments where the gradient flow is replaced with gradient descent. For more details, please refer to Appendix \ref{appendix:exp}.
\begin{figure}[H]
    \centering
    \includegraphics[width=0.35\linewidth]{./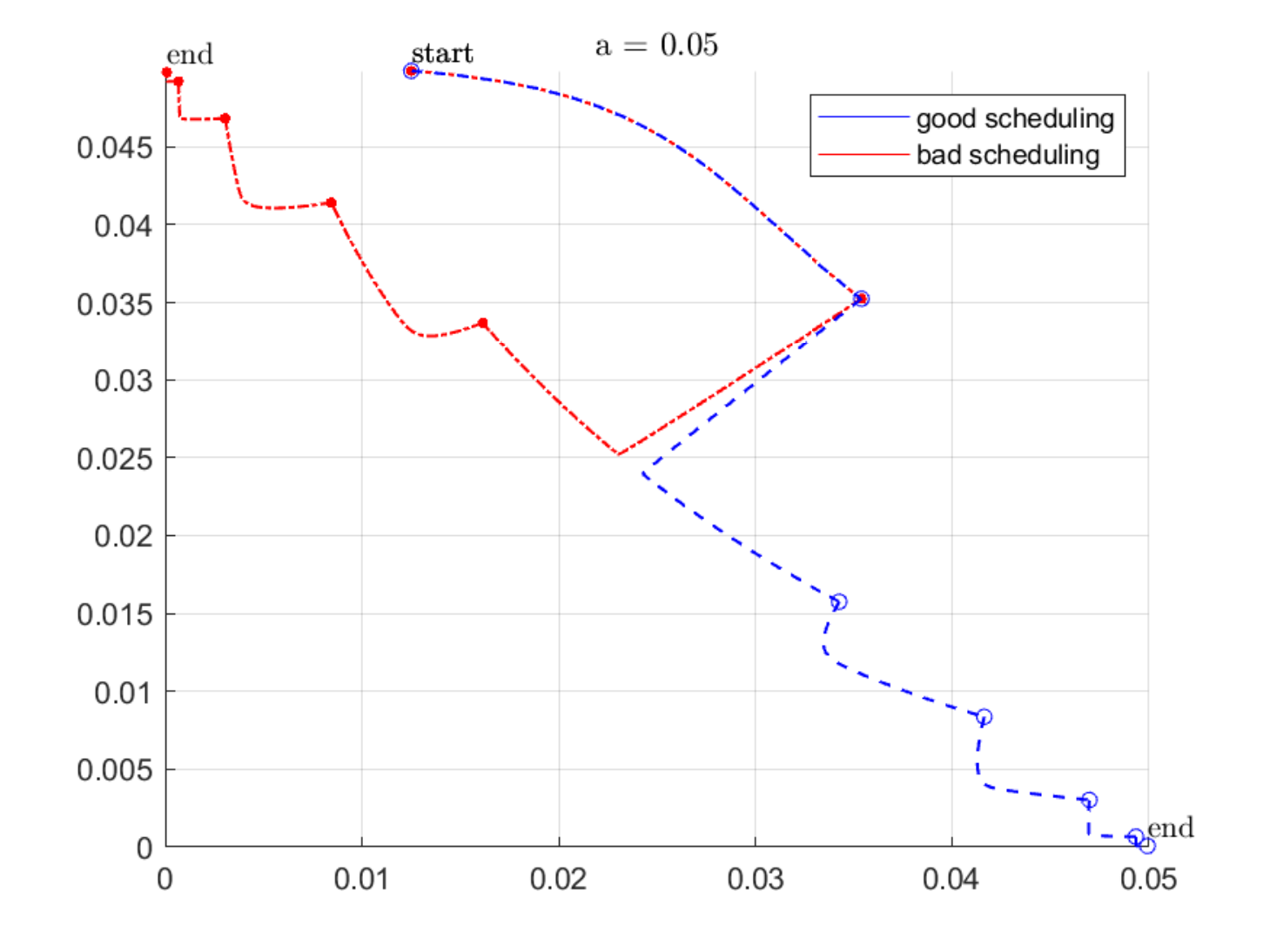}
    \caption{Trajectory of the gradient flow path for two different update rules for $\mu_k$ with same initialization and $\mu_0$. 
    Here, ``good scheduling'' uses Line 3 of Algorithm \ref{alg:main_theory}, while ``bad scheduling'' uses a faster decreasing scheme for $\mu_k$ which leads the path to a spurious local minimum.}
    \label{fig:differet_scheduling}
\end{figure}

\begin{figure}[!htb]
\hfill
\centering
\subfigure[Random Initialization 1]{\includegraphics[width=4.5cm]{./plot/random_start3.pdf}}
\hfill
\centering
\subfigure[Random Initialization 2]{\includegraphics[width=4.5cm]{./plot/random_start4.pdf}}
\hfill
\centering
\subfigure[Random Initialization 3]{\includegraphics[width=4.5cm]{./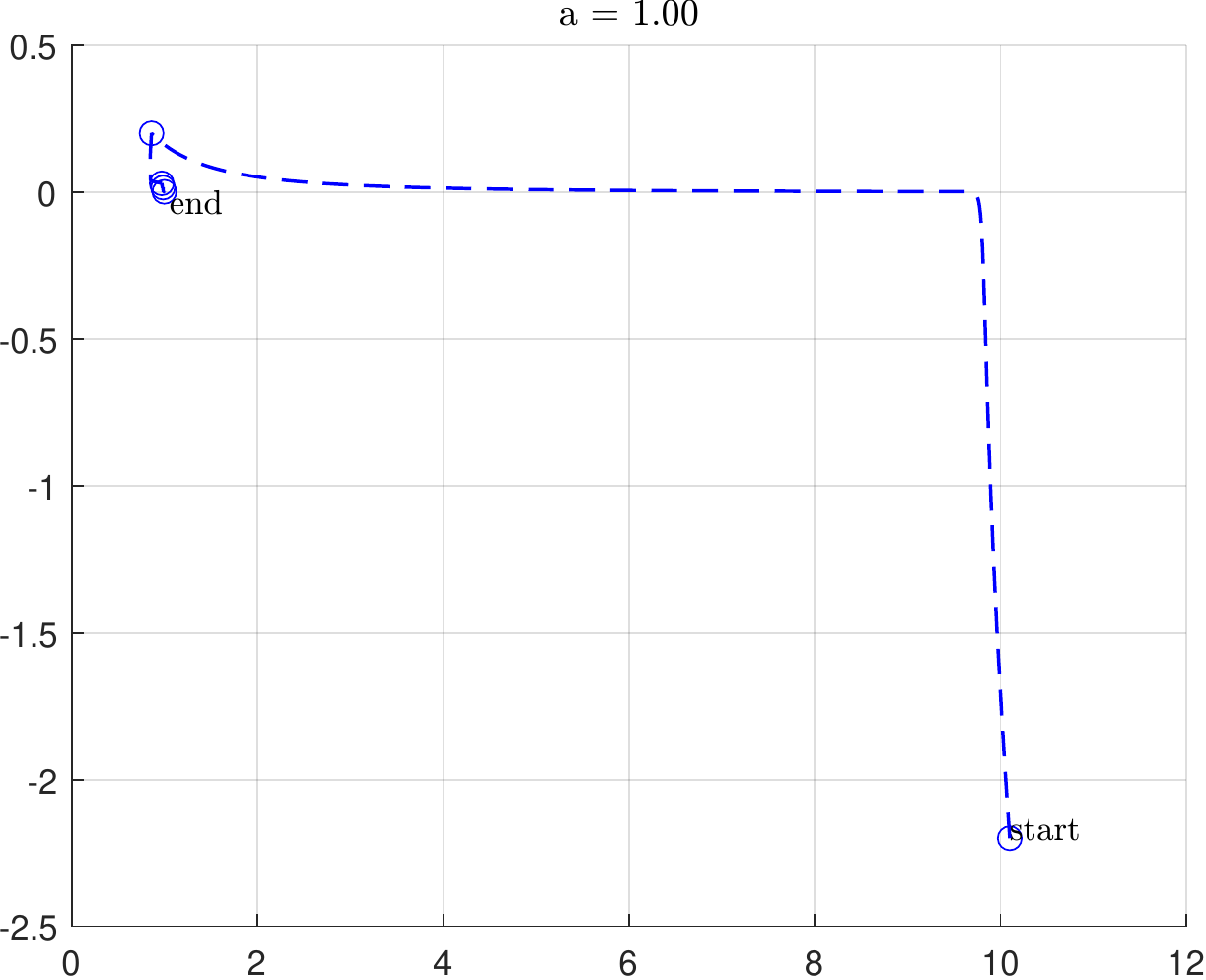}}
\hfill
\caption{Trajectory of the gradient flow path with the different initializations. We observe that under a proper scheduling for $\mu_k$, they all converge to the global minimum.}
    \label{fig:differet_start}
\end{figure}

\bibliography{main}
\bibliographystyle{agsm}

\appendix

\section{Practical Implementation of Algorithm \ref{alg:main_theory}}
\label{app:practical_alg}
We present a practical implementation of our homotopy algorithm in Algorithm~\ref{alg:main_practics}. The updating scheme for $\mu_k$ is now independent of the parameter $a$, but as presented, the initialization for $\mu_0$ still depends on $a$. This is for the following reason: It is possible to make the updating scheme independent of $a$ \emph{without imposing any additional assumptions on $a$}, as evidenced by Lemma~\ref{lemma:pratical_converge} below. The initialization for $\mu_0$, however, is trickier, and we must consider two separate cases:
\begin{enumerate}
    \item \emph{No assumptions on $a$.} In this case, if $a$ is too small, then the problem becomes harder and the initial choice of $\mu_0$ matters.
    \item \emph{Lower bound on $a$.} If we are willing to accept a lower bound on $a$, then there is an initialization for $\mu_0$ that does not depend on $a$.
\end{enumerate}
In Corollary~\ref{cor:pratical_converge}, we illustrate this last point with the additional condition that $a>\sqrt{5/27}$. This essentially amounts to an assumption on the minimum signal, and is quite standard in the literature on learning SEM.

\begin{algorithm}[t]
\DontPrintSemicolon
\SetAlgoLined
\LinesNumbered
\SetKwFunction{GF}{GradientFlow}
    \caption{Find a path $\{W_{\mu_k}\}$ via a particular scheduling for $\mu_k$ when $a$ is unknown.}
    \label{alg:main_practics}
		\KwIn{$\mu_0 \in\left[\frac{a^2}{4(a^2+1)^3},\frac{a^2}{4}\right)$, $\varepsilon>0$} 
                \KwOut{$\{W_{\mu_{k}}\}_{k=0}^{\infty}$}
        $\widehat{a} \gets \sqrt{4(\mu_0+\varepsilon)}$\tcp*{$\forall  \varepsilon\geq 0$ s.t. $\widehat{a}<a$} 
        $W_{\mu_0} \gets \GF(g_{\mu_0},\Zero)$ \;
        \For{$k=1,2,\ldots$}{
            Let $\mu_{k+1} \in \left[\left(2/\widehat{a}\right)^{2/3}\mu_k^{4/3},\mu_k\right)$
            \;
            $W_{\mu_{k+1}} \gets \GF(g_{\mu_{k+1}},W_{\mu_k})$
        }
		\KwRet{$\{W_{\mu_{k}}\}_{k=0}^{\infty}$}
\end{algorithm}

\begin{lemma}\label{lemma:pratical_converge}
    Under the assumption $\frac{a^2}{4(a^2+1)^3}\leq \mu_0<\frac{a^2}{4}$, the Algorithm \ref{alg:main_practics} outputs the global optimal solution to \eqref{eq:bivariate_con}, i.e.
\[\lim_{k\rightarrow \infty}W_{\mu_k} = W_\G.\]
\end{lemma}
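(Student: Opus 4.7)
The plan is to follow the same skeleton as the proof of Theorem \ref{thm:main_thm}, replacing $a$ by the surrogate $\widehat{a}$ throughout the scheduling analysis. The key observation is that the constraint $\widehat{a} < a$ makes the factor $(2/\widehat{a})^{1/3}$ strictly larger than $(2/a)^{1/3}$, so every inequality used to keep the iterate inside the basin of attraction of the next stationary point $W_{\mu_{k+1}}^{*}$ is \emph{strengthened}, not weakened, by the substitution. First I would verify that $\widehat{a}$ is admissible: since $\mu_0 < a^2/4$, one can pick $\varepsilon > 0$ with $\mu_0 + \varepsilon < a^2/4$, giving $\widehat{a} = 2\sqrt{\mu_0 + \varepsilon} < a$; the same choice also yields $\mu_0 < \widehat{a}^2/4$, so the scheduling interval $\bigl[(2/\widehat{a})^{2/3}\mu_k^{4/3},\mu_k\bigr)$ is non-empty at $k=0$ and, by an easy induction, remains non-empty for all $k$ because $\{\mu_k\}$ is strictly decreasing.

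The initialization step is handled exactly as in Theorem \ref{thm:main_thm}: the lower bound $\mu_0 \geq a^2/(4(a^2+1)^3)$ places $\mu_0$ in the regime in which the basin of attraction of $W_{\mu_0}^{*}$ contains every admissible starting point (in particular $\Zero$ and any user-supplied $W_0$), so no new argument is needed to guarantee that gradient flow of $g_{\mu_0}$ converges to $W_{\mu_0}^{*}$.

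The inductive step is the heart of the proof. Given $W_{\mu_k}^{*}$, I would show $W_{\mu_k}^{*} \in B_{\mu_{k+1}}$ so that Lemma \ref{lemma:go_to_next_optimal} applies and the gradient flow of $g_{\mu_{k+1}}$ initialized at $W_{\mu_k}^{*}$ converges to $W_{\mu_{k+1}}^{*}$. The $x$-coordinate bound $x_{\mu_{k+1}}^{**} < x_{\mu_k}^{*}$ is inherited from Theorem \ref{thm:detailed_t_x} for any admissible $\mu_{k+1}$. For the $y$-coordinate bound $y_{\mu_k}^{*} < y_{\mu_{k+1}}^{**}$, I would run the chain
\begin{align*}
y_{\mu_{k+1}}^{**}
> \sqrt{\mu_{k+1}}
\geq \bigl(\tfrac{2}{\widehat{a}}\bigr)^{1/3}\mu_k^{2/3}
> \bigl(\tfrac{2}{a}\bigr)^{1/3}\mu_k^{2/3}
\geq \tfrac{(4\mu_k)^{1/3}}{2}\Bigl(a^{1/3}-\sqrt{a^{2/3}-(4\mu_k)^{1/3}}\Bigr)
> y_{\mu_k}^{*},
\end{align*}
where the outer two inequalities are Lemma \ref{lemma:r_y}, the second uses the scheduling constraint $\mu_{k+1} \geq (2/\widehat{a})^{2/3}\mu_k^{4/3}$, the third uses $\widehat{a} < a$, and the fourth uses $\sqrt{1-z} \geq 1 - z$ on $z \in [0,1]$ applied with $z = (4\mu_k)^{1/3}/a^{2/3}$. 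This reproduces the chain from the proof of Theorem \ref{thm:main_thm} with the scheduling factor now governed by $\widehat{a}$ instead of $a$.

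Finally, I would argue $\mu_k \to 0$. Every admissible $\mu_{k+1}$ satisfies $\mu_{k+1} < \mu_k$, and at the lower endpoint $\mu_{k+1} = (2/\widehat{a})^{2/3}\mu_k^{4/3}$ the ratio $\mu_{k+1}/\mu_k = (4\mu_k/\widehat{a}^2)^{1/3}$ is bounded strictly below $1$ by $(4\mu_0/\widehat{a}^2)^{1/3} < 1$, giving superlinear decay; Corollary \ref{cor:trajectory} then yields $W_{\mu_k}^{*} \to W_\G$. The main obstacle is the same as in Theorem \ref{thm:main_thm}, namely the inductive $y$-coordinate bound; however, the transition from $a$ to $\widehat{a}$ is essentially free precisely because $\widehat{a} < a$ only helps the chain. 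The only genuinely new bookkeeping is the choice of $\varepsilon$ and the verification that the scheduling interval remains non-empty at every step.
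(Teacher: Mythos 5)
Your proposal is correct and follows essentially the same route as the paper's proof: both reduce the lemma to the analysis of Theorem~\ref{thm:main_thm} by observing that $\widehat{a} < a$ forces $(2/\widehat{a})^{2/3}\mu_{k-1}^{4/3} > (2/a)^{2/3}\mu_{k-1}^{4/3}$, so the practical scheduling interval $\bigl[(2/\widehat{a})^{2/3}\mu_{k-1}^{4/3},\mu_{k-1}\bigr)$ is contained in the admissible interval $\bigl[(2/a)^{2/3}\mu_{k-1}^{4/3},\mu_{k-1}\bigr)$ from Theorem~\ref{thm:main_thm}. You spell out more bookkeeping (the explicit $y$-coordinate chain, admissibility of $\widehat{a}$, non-emptiness of the interval, and that $\mu_k\to 0$ at the lower endpoint), but the core observation is identical to the paper's.
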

It turns out that the assumption in Lemma \ref{lemma:pratical_converge} is not overly restrictive, as there exist pre-determined sequences of $\{\mu_k\}_{k=0}^\infty$ that can ensure the effectiveness of Algorithm \ref{alg:main_practics} for any values of $a$ greater than a certain threshold.

\section{From Population Loss to Empirical Loss}\label{appendix:pop2empirical}
The transformation from population loss to empirical can be thought from two components. First, with a given empirical loss, Algorithms \ref{alg:main_theory} and \ref{alg:main_prac} still attain the global minimum, $W_\G$, of problem \ref{eq:bivariate_con}, but now the output from the Algorithm is an empirical estimator $\hat{a}$, rather than ground truth $a$, Theorem \ref{thm:main_thm} and Corollary \ref{cor:pratical_converge} would continue to be valid. Second, the global optimum, $W_\G$, of the empirical loss possesses the same DAG structure as the underlying $W_*$. The finite-sample findings in Section 5 (specifically, Lemmas 18 and 19) of \citet{loh2014high} offer sufficient conditions on the sample size to ensure that the DAG structures of $W_\G$ and $W_*$ are identical.

\section{From Continuous to Discrete: Gradient Descent}\label{appendix:gd}
Previously, gradient flow was employed to address the intermediate problem \eqref{eq:bivariate_uncon}, a method that poses implementation challenges in a computational setting. In this section, we introduce Algorithm \ref{alg:main_theory_gd} that leverages gradient descent to solve \eqref{eq:bivariate_uncon} in each iteration. This adjustment serves practical considerations.
\begin{algorithm}[!htb]
	    \DontPrintSemicolon
	    \SetAlgoLined
	    \LinesNumbered
	    \SetKwFunction{GF}{xx}
	    \caption{Gradient Descent($f,\eta,W_0,\eps$)}
	    \label{alg:gd}
			\KwIn{function $f$, step size $\eta$, initial point $W_0$, tolerance $\epsilon$}
	        \KwOut{$W_t$}
                $t\gets 0$\;
	        \While{$\|\nabla f(W_t)\|_2>\epsilon$}{
	               $W_{t+1}\gets W_t-\eta\nabla f(W_t)$ 
	            \;
	            $t \gets t+1$
	        }
	\end{algorithm}
\begin{algorithm}[!ht]
\DontPrintSemicolon
\SetAlgoLined
\LinesNumbered    
\SetKwFunction{GD}{Gradient Descent}
    \caption{Homotopy algorithm using gradient descent for solving \eqref{eq:general_opt}.}
    \label{alg:main_theory_gd}
		\KwIn{Initial $W_{-1}=W(x_{-1},y_{-1})$, $\mu_0 \in \left[\frac{a^2}{4(a^2+1)^3} \frac{(1+\beta)^4}{(1-\beta)^2},\frac{a^2}{4}\frac{(1-\delta)^3(1-\beta)^4}{(1+\beta)^2}\right)$, $\eta_0 = \frac{1}{\mu_0(a^2+1)+3a^2}$, $\eps_0 = \min\{\beta a\mu_0,\mu_0^{3/2}\}$  }
        \KwOut{$\{W_{\mu_{k}}\}_{k=0}^{\infty}$}
        $W_{\mu_0,\eps_0} \gets \GD(g_{\mu_0},\eta_0,W_{-1},\eps_0)$\;
        \For{$k=1,2,\ldots$}{
            Let $\mu_{k} = (2\mu_{k-1}^2)^{2/3}\frac{(a+\epsilon_{k-1}/\mu_{k-1})^{2/3}}{(a-\epsilon_{k-1}/\mu_{k-1})^{4/3}}$\;
            Let $\eta_k = \frac{1}{\mu_k(a^2+1)+3a^2}$\;
            Let $\eps_k = \min\{\beta a\mu_k,\mu_k^{3/2}\}$
            \;
            $W_{\mu_{k},\eps_k} \gets \GD(g_{\mu_{k}},\eta_{k},W_{\mu_{k-1}},\eps_{k})$
        }
\end{algorithm}
We start with the convergence results of Gradient Descent.
\begin{definition}\label{def:smooth}
    $f$ is $L$-smooth, if $f$ is differentiable and $\forall x,y\in \text{dom}(f)$ such that $\|\nabla f(x)-\nabla f(y)\|_2\leq L \|x-y\|_2$. 
\end{definition}
\begin{theorem}[\citealt{nesterov2018lectures}]\label{thm:gd}
If function $f$ is $L$-smooth, then Gradient Descent (Algorithm \ref{alg:gd}) with step size $\eta = 1/L$, finds an $\epsilon$-first-order stationary point (i.e. $\|\nabla f(x)\|_2\leq \epsilon$) in $2L(f(x^0)-f^*)/\epsilon^2$ iterations.
\end{theorem}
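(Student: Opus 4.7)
The plan is to give the standard textbook argument based on the quadratic upper bound ("descent lemma") that follows from $L$-smoothness, applied to the gradient descent update with step size $1/L$.

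First, I would invoke the descent lemma: since $f$ is $L$-smooth, for any $x,y \in \mathrm{dom}(f)$,
\[
f(y) \leq f(x) + \nabla f(x)^\top (y-x) + \tfrac{L}{2}\|y-x\|_2^2.
\]
This is the immediate consequence of Definition~\ref{def:smooth} via the fundamental theorem of calculus applied to $t \mapsto f(x+t(y-x))$. I would then specialize to $x = W_t$ and $y = W_{t+1} = W_t - \eta \nabla f(W_t)$ with $\eta = 1/L$. A routine calculation yields the per-step descent
\[
f(W_{t+1}) \leq f(W_t) - \tfrac{1}{2L}\|\nabla f(W_t)\|_2^2.
\]

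Next, I would rearrange and telescope from $t=0$ to $T-1$, obtaining
\[
\sum_{t=0}^{T-1} \|\nabla f(W_t)\|_2^2 \;\leq\; 2L\bigl(f(W_0) - f(W_T)\bigr) \;\leq\; 2L\bigl(f(W_0) - f^*\bigr),
\]
where the last inequality uses $f(W_T) \geq f^*$. In particular, the minimum gradient norm satisfies
\[
\min_{0\leq t \leq T-1}\|\nabla f(W_t)\|_2^2 \;\leq\; \frac{2L(f(W_0)-f^*)}{T}.
\]
Setting the right-hand side to $\epsilon^2$ and solving for $T$ gives $T \leq 2L(f(W_0)-f^*)/\epsilon^2$ iterations suffice to guarantee that some iterate (and in particular, by the monotone decrease of $f(W_t)$, the stopping iterate produced by the while-loop in Algorithm~\ref{alg:gd}) satisfies $\|\nabla f(W_t)\|_2 \leq \epsilon$.

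This is a classical result and I do not expect any real obstacles; the only subtlety worth checking is that the stopping rule in Algorithm~\ref{alg:gd} tests the current gradient norm rather than tracking the running minimum, but since the descent inequality implies $f(W_t)$ is non-increasing and the telescoping bound above forces at least one iterate within $T$ steps to have small gradient, the while-loop terminates no later than this iterate. No additional structure beyond smoothness is needed, so convexity or further assumptions play no role.
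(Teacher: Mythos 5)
Your proof is correct and takes essentially the same route as the paper: the descent lemma from $L$-smoothness, specialization to the step $W_{t+1}=W_t-\tfrac{1}{L}\nabla f(W_t)$ to obtain the per-step decrease $f(W_{t+1})\leq f(W_t)-\tfrac{1}{2L}\|\nabla f(W_t)\|_2^2$, telescoping, and bounding the running minimum of the gradient norms. Your closing remark about why the while-loop stopping rule (testing the current gradient, not the running minimum) is still covered by the argument is a small but welcome addition that the paper leaves implicit.
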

One of the pivotal factors influencing the convergence of gradient descent is the selection of the step size. Theorem \ref{thm:gd} select a step size $\eta= \frac{1}{L}$. Therefore, our initial step is to determine the smoothness of $g_{\mu}(W)$ within our region of interest, $ A = \{0\leq x\leq a, 0\leq y\leq \frac{a}{a^2+1}\}$.
\begin{lemma}\label{lemma:smooth_para}
     Consider the function $g_{\mu}(W)$ as defined in Equation \ref{eq:bivariate_uncon} within the region $A = \{0\leq x\leq a, 0\leq y\leq \frac{a}{a^2+1}\}$. It follows that for all $\mu\geq 0$, the function $g_{\mu}(W)$ is $\mu(a^2+1)+3a^2$-smooth.
\end{lemma}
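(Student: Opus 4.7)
The plan is to bound the operator norm of the Hessian $\nabla^2 g_\mu$ uniformly on the rectangle $A$, and then invoke the standard fact that a bound on the Hessian operator norm over a convex set implies Lipschitz-continuous gradients with the same constant, i.e.\ smoothness in the sense of Definition~\ref{def:smooth}. The region $A$ is a (convex) rectangle, so this last step is automatic by the mean value theorem applied coordinate-wise to $\nabla g_\mu$.

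First, I would compute the Hessian directly from
\[
g_\mu(x,y) = \tfrac{\mu}{2}\bigl((1-ay)^2+y^2+(a-x)^2+1\bigr) + \tfrac{x^2 y^2}{2}.
\]
A direct differentiation yields
\[
\nabla^2 g_\mu(x,y) = \begin{pmatrix} \mu + y^2 & 2xy \\ 2xy & \mu(a^2+1) + x^2 \end{pmatrix},
\]
which is symmetric with non-negative entries throughout $A$ (since $x,y \ge 0$ there).

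Next, I would bound the spectral norm of this Hessian. Since $\nabla^2 g_\mu$ is symmetric, one has $\|\nabla^2 g_\mu\|_2 \le \sqrt{\|\nabla^2 g_\mu\|_1 \|\nabla^2 g_\mu\|_\infty} = \|\nabla^2 g_\mu\|_\infty$, i.e.\ the spectral norm is at most the maximum absolute row sum. Using the constraints of $A$, namely $0 \le x \le a$ and $0 \le y \le a/(a^2+1)$, I get
\[
2xy \le \frac{2a^2}{a^2+1} \le 2a^2, \qquad x^2 \le a^2, \qquad y^2 \le \frac{a^2}{(a^2+1)^2} \le a^2.
\]
Substituting these into the two row sums gives row~1 bounded by $\mu + 3a^2$ and row~2 bounded by $\mu(a^2+1) + 3a^2$. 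The second dominates since $\mu a^2 \ge 0$, so $\|\nabla^2 g_\mu(x,y)\|_2 \le \mu(a^2+1)+3a^2$ for every $(x,y) \in A$.

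Finally, because $A$ is convex and $g_\mu \in C^2$, for any two points $z_1,z_2 \in A$ the fundamental theorem of calculus gives
\[
\nabla g_\mu(z_2) - \nabla g_\mu(z_1) = \int_0^1 \nabla^2 g_\mu\bigl(z_1 + t(z_2-z_1)\bigr)(z_2-z_1)\,dt,
\]
which by the uniform spectral-norm bound above yields $\|\nabla g_\mu(z_2) - \nabla g_\mu(z_1)\|_2 \le \bigl(\mu(a^2+1)+3a^2\bigr)\|z_2-z_1\|_2$, establishing the claim. I do not anticipate any serious obstacle: the computation is routine, and the only mildly delicate step is recognizing that row~$2$ of the Hessian is the dominant one and that the symmetric-matrix inequality $\|M\|_2 \le \|M\|_\infty$ is exactly the right tool to convert the row-sum estimate into a spectral-norm estimate without having to diagonalize explicitly.
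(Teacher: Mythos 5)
Your proof is correct and reaches the same constant as the paper, but the route to the spectral-norm bound is different. The paper's proof first decomposes $\nabla^2 g_\mu$ into the diagonal part $\mathrm{diag}(\mu,\,\mu(a^2+1))$ plus the rank-correction $\left(\begin{smallmatrix} y^2 & 2xy \\ 2xy & x^2 \end{smallmatrix}\right)$, applies the triangle inequality for the operator norm, and then computes the exact eigenvalues of the second block, $\tfrac{1}{2}\bigl((x^2+y^2)\pm\sqrt{(x^2+y^2)^2+12x^2y^2}\bigr)$, bounding the larger one by $3a^2$ via $x^2,y^2\le a^2$. You instead bound the full Hessian in one shot using the symmetric-matrix inequality $\|M\|_2\le\sqrt{\|M\|_1\|M\|_\infty}=\|M\|_\infty$ (maximum absolute row sum), substituting the entry-wise bounds $x^2\le a^2$, $y^2\le a^2$, $2xy\le 2a^2$ and observing that row two dominates. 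Both are valid; your row-sum argument avoids the eigenvalue calculation entirely and is arguably simpler, while the paper's decomposition makes transparent that the $\mu$-dependent part contributes exactly $\mu(a^2+1)$ to the smoothness constant (its operator norm is computed, not estimated). It is a pleasant coincidence that the cruder $\|M\|_\infty$ bound lands on precisely the same constant $\mu(a^2+1)+3a^2$; neither bound is tight in general, but they agree here. Your final step converting the Hessian bound to a gradient Lipschitz bound via the fundamental theorem of calculus on the convex rectangle $A$ is also fine and is essentially what the paper cites from \citet{boyd2004convex,nesterov2018lectures}.
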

Since gradient descent is limited to identifying the $\eps$ stationary point of the function.
Thus, we study the gradient of $g_{\mu}(W) = \mu f(W)+h(W)$, i.e. $\nabla g_{\mu}(W)$ has the following form 
\begin{align*}
    \nabla g_\mu(W) = \begin{pmatrix}\mu(x-a)+y^2x\\ \mu(a^2+1)y-a\mu+yx^2\end{pmatrix}
\end{align*}
As gradient descent is limited to identifying the $\eps$ stationary point of the function, we, therefore, focus on $\|g_\mu(W)\|_2\leq \eps$. This can be expressed in the subsequent manner:
\begin{align*}
    \|\nabla g_{\mu}(W)\|_2\leq \epsilon\Rightarrow  -\epsilon \leq \mu(x-a)+y^2x<\epsilon\quad \text{and } -\epsilon\leq \mu(a^2+1)y-a\mu+yx^2\leq \epsilon
\end{align*}
As a result, 
\begin{align*}
    \{(x,y)\mid \|\nabla g_{\mu}(W)\|_2\leq \epsilon\}\subseteq \{(x,y)\mid \frac{\mu a-\epsilon}{\mu+y^2}\leq x\leq \frac{\mu a+ \epsilon}{\mu+y^2},\frac{\mu a -\epsilon}{x^2+\mu(a^2+1)}\leq y\leq \frac{\mu a+\epsilon}{x^2+\mu(a^2+1)}\}
\end{align*}
Here we denote such region as $\Aeps$
\begin{align}\label{eq:A_eps}
    \Aeps = \{(x,y)\mid \frac{\mu a-\epsilon}{\mu+y^2}\leq x\leq \frac{\mu a+ \epsilon}{\mu+y^2},\frac{\mu a -\epsilon}{x^2+\mu(a^2+1)}\leq y\leq \frac{\mu a+\epsilon}{x^2+\mu(a^2+1)}\}
\end{align}
Figure \ref{fig:illustration} and \ref{fig:illustration_local} illustrate the region $\Aeps$.
\begin{center}
    \begin{figure}[H]
        \centering
        \includegraphics[width = 0.6\linewidth]{./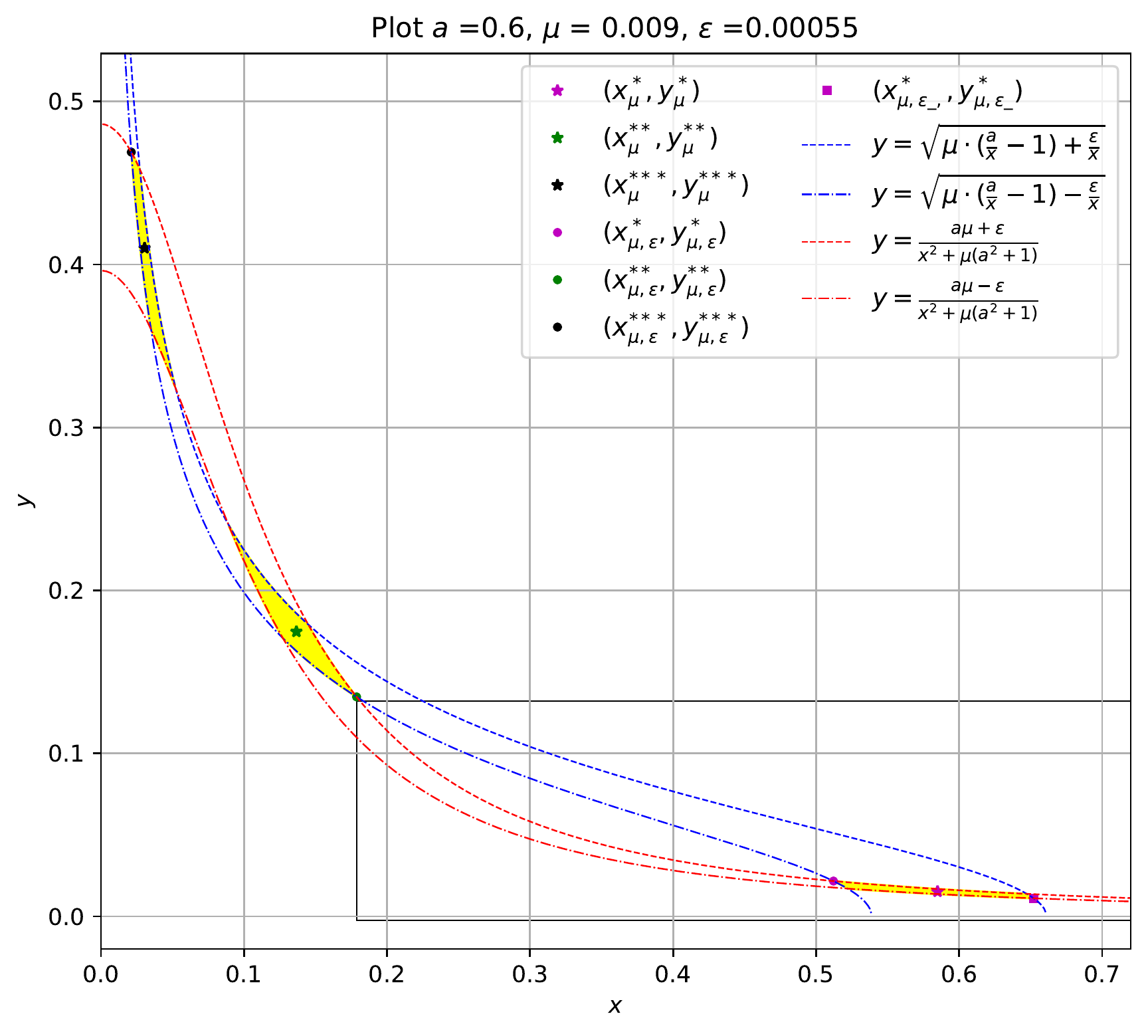}
        \caption{An example of $\Aeps$ is depicted for $a = 0.6$, $\mu = 0.009$, and $\eps = 0.00055$. The yellow region signifies $\eps$ stationary points, denoted as $\Aeps$ and defined by Equation \eqref{eq:A_eps}. $\Aeps$ is the disjoint union of $A^1_{\mu,\eps}$ and $A^2_{\mu,\eps}$, which are defined by Equations \eqref{rg:A_eps_1} and \eqref{rg:A_eps_2}, respectively. }
        \label{fig:illustration}
    \end{figure}
\end{center}
\begin{center}
    \begin{figure}[H]
        \centering
        \includegraphics[width = 0.6\linewidth]{./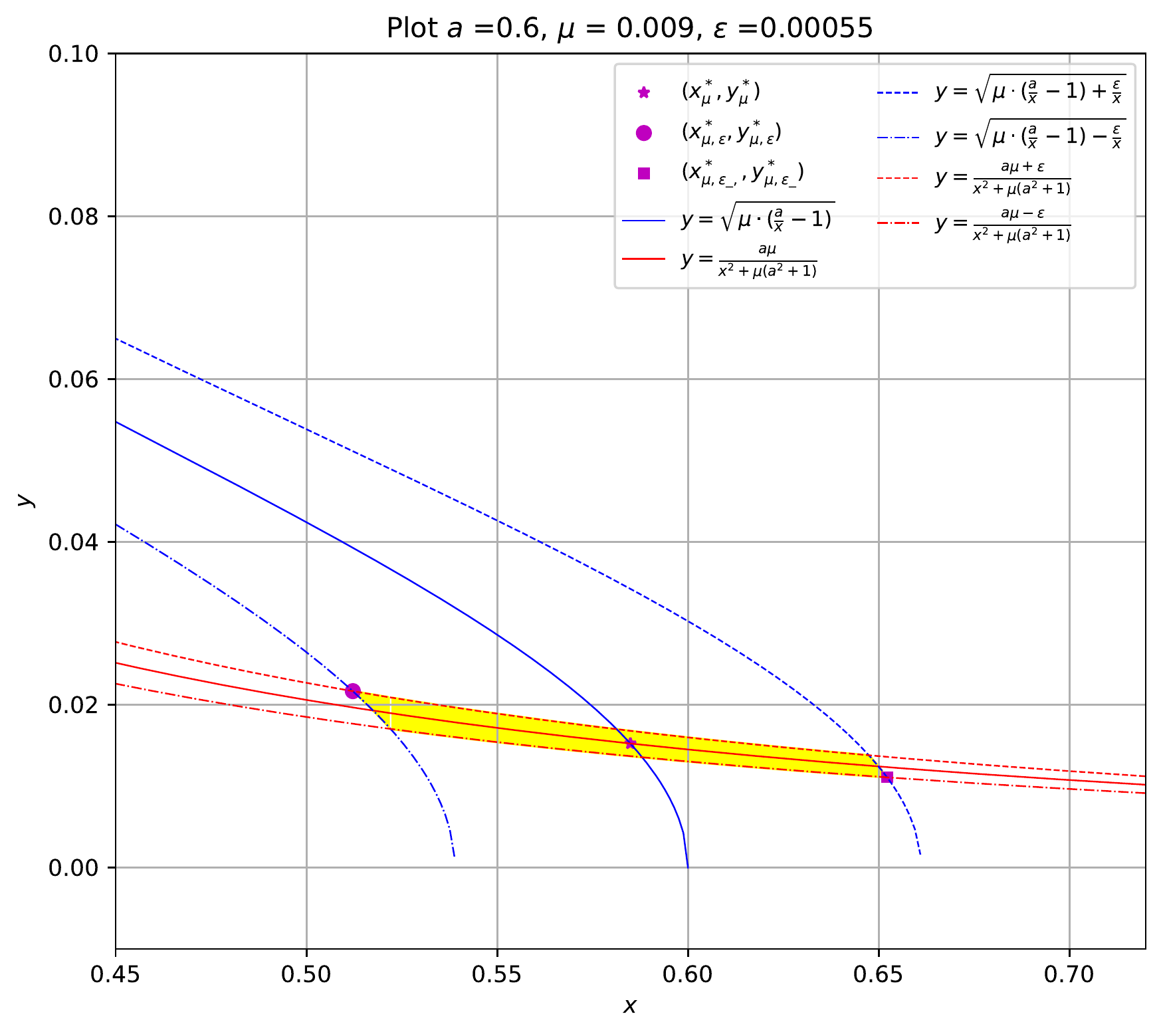}
        \caption{Here is a localized illustration of $\Aeps$ that includes the point $(x_{\mu}^*,y_{\mu}^*)$. This region, referred to as $A^1_{\mu,\eps}$, is defined in Equation \eqref{rg:A_eps_1}.}
        \label{fig:illustration_local}
    \end{figure}
\end{center}
Given that the gradient descent can only locate $\eps$ stationary points within the region $\Aeps$ during each iteration, the boundary of $\Aeps$ becomes a critical component of our analysis. To facilitate clear presentation, it is essential to establish some pertinent notations.
\begin{itemize}
    \item 
    \begin{subnumcases}{  }
    x = \frac{\mu a }{\mu+y^2} \label{eq:stationary_x}\\
    y = \frac{\mu a }{\mu(a^2+1)+x^2} \label{eq:stationary_y}
    \end{subnumcases}
    If the system of equations yields only a single solution, we denote this solution as $(x_{\mu}^*,y_{\mu}^*)$.  If it yields two solutions, these solutions are denoted as
    $(x_{\mu}^*,y_{\mu}^*),(x_{\mu}^{**},y_{\mu}^{**})$, with $x_{\mu}^{**}<x_{\mu}^{*}$. In the event that there are three distinct solutions to the system of equations, these solutions are denoted as $(x_{\mu}^*,y_{\mu}^*),(x_{\mu}^{**},y_{\mu}^{**}), (x_{\mu}^{***},y_{\mu}^{***})$, where $x_{\mu}^{***}<x_{\mu}^{**}<x_{\mu}^{*}$.
    \item 
     \begin{subnumcases}{  }
    x = \frac{\mu a-\epsilon}{\mu+y^2} \label{eq:stationary_x_eps}\\
    y = \frac{\mu a +\epsilon}{\mu(a^2+1)+x^2} \label{eq:stationary_y_eps}
    \end{subnumcases}
    If the system of equations yields only a single solution, we denote this solution as $(x_{\mu,\eps}^*,y_{\mu,\eps}^*)$.  If it yields two solutions, these solutions are denoted as
    $(x_{\mu,\eps}^*,y_{\mu,\eps}^*),(x_{\mu,\eps}^{**},y_{\mu,\eps}^{**})$, with $x_{\mu,\eps}^{**}<x_{\mu,\eps}^{*}$. In the event that there are three distinct solutions to the system of equations, these solutions are denoted as $(x_{\mu,\eps}^*,y_{\mu,\eps}^*),(x_{\mu,\eps}^{**},y_{\mu,\eps}^{**}), (x_{\mu,\eps}^{***},y_{\mu,\eps}^{***})$, where $x_{\mu,\eps}^{***}<x_{\mu,\eps}^{**}<x_{\mu,\eps}^{*}$.
    \item 
     \begin{subnumcases}{  }
    x = \frac{\mu a+\epsilon}{\mu+y^2} \label{eq:stationary_x_eps_}\\
    y = \frac{\mu a -\epsilon}{\mu(a^2+1)+x^2} \label{eq:stationary_y_eps_}
    \end{subnumcases}
    If the system of equations yields only a single solution, we denote this solution as $(x_{\mu,\eps\_}^*,y_{\mu,\eps\_}^*)$.  If it yields two solutions, these solutions are denoted as
    $(x_{\mu,\eps\_}^*,y_{\mu,\eps\_}^*),(x_{\mu,\eps\_}^{**},y_{\mu,\eps\_}^{**})$, with $x_{\mu,\eps\_}^{**}<x_{\mu,\eps\_}^{*}$. In the event that there are three distinct solutions to the system of equations, these solutions are denoted as $(x_{\mu,\eps\_}^*,y_{\mu,\eps\_}^*),(x_{\mu,\eps\_}^{**},y_{\mu,\eps\_}^{**}), (x_{\mu,\eps\_}^{***},y_{\mu,\eps\_}^{***})$, where $x_{\mu,\eps\_}^{***}<x_{\mu,\eps\_}^{**}<x_{\mu,\eps\_}^{*}$.
\end{itemize}
\begin{remark}
There always exists at least one solution to the above system of equations. 
When $\mu$ is sufficiently small, the above system of equations always yields three solutions, as demonstrated in Theorem \ref{thm:detailed_r_y}, and Theorem \ref{thm:detailed_t_x_beta}.
\end{remark}

The parameter $\eps$ can substantially influence the behavior of the systems of equations \eqref{eq:stationary_x_eps},\eqref{eq:stationary_y_eps} and \eqref{eq:stationary_x_eps_},\eqref{eq:stationary_y_eps_}. A crucial consideration is to ensure that $\eps$ remains adequately small. To facilitate this, we introduce a new parameter, $\beta$, whose specific value will be determined later. At this stage, we merely require that $\beta$ should lie within the interval $(0,1)$. We further impose a constraint on $\eps$ to satisfy the following inequality:
\begin{align}\label{eq:contrl_eps}
    \eps \leq  \beta a\mu
\end{align}

Following the same procedure when we deal with $\eps = 0$. Let us substitute \eqref{eq:stationary_x_eps} into \eqref{eq:stationary_y_eps}, then we obtain an equation that only involves the variable $y$ 
\begin{align}\label{eq:y_eps}
    r_\eps(y;\mu) =& \frac{a+\epsilon/\mu}{y}-(a^2+1)-\frac{(\mu a-\epsilon)^2/\mu}{(y^2+\mu)^2}
\end{align}
Let us substitute \eqref{eq:stationary_y_eps} into \eqref{eq:stationary_x_eps}, then we obtain an equation that only involves the variable $x$ 
\begin{align}\label{eq:x_eps}
    t_\eps(x;\mu) = &\frac{a-\epsilon/\mu}{x}-1-\frac{(\mu a +\epsilon)^2/\mu}{(\mu(a^2+1)+x^2)^2}
\end{align}
Proceed similarly for equations \eqref{eq:stationary_x_eps_} and \eqref{eq:stationary_y_eps_}.
\begin{align}\label{eq:y_eps_}
    r_{\eps\_}(y;\mu) =& \frac{a-\epsilon/\mu}{y}-(a^2+1)-\frac{(\mu a+\epsilon)^2/\mu}{(y^2+\mu)^2}
\end{align}
\begin{align}\label{eq:x_eps_}
    t_{\eps\_}(x;\mu) = &\frac{a+\epsilon/\mu}{x}-1-\frac{(\mu a -\epsilon)^2/\mu}{(\mu(a^2+1)+x^2)^2}
\end{align}
Given the substantial role that the system of equations \ref{eq:stationary_x_eps} and \ref{eq:stationary_y_eps} play in our analysis, the existence of $\eps$ in these equations complicates the analysis, this can be avoided by considering the worst-case scenario, i.e., when $\eps = \beta a\mu$. With this particular choice of $\eps$, we can reformulate \eqref{eq:y_eps} and \eqref{eq:x_eps} as follows, denoting them as $r_\beta(y;\eps)$ and $r_\beta(x;\eps)$ respectively.
\begin{align}\label{eq:y_beta}
    r_\beta(y;\mu) =& \frac{a(1+\beta)}{y}-(a^2+1)-\frac{\mu a^2(1-\beta)^2}{(y^2+\mu)^2}
\end{align}
\begin{align}\label{eq:x_beta}
    t_\beta(x;\mu) = &\frac{a(1-\beta)}{x}-1-\frac{\mu a^2(1+\beta)^2}{(\mu(a^2+1)+x^2)^2}
\end{align}
The functions $r_\eps(y;\mu)$, $r_{\eps\_}(y;\mu)$, and $r_\beta(y;\mu)$ possess similar properties to $r(y;\mu)$ as defined in Equation \eqref{eq:y}, with more details available in Theorem \ref{thm:detailed_r_y_eps} and \ref{thm:detailed_r_y_beta}. Additionally, the functions $t_\eps(x;\mu)$, $t_{\eps\_}(x;\mu)$, and $t_\beta(x;\mu)$ share similar characteristics with $t(x;\mu)$ as defined in Equation \eqref{eq:x}, with more details provided in Theorem \ref{thm:detailed_t_x_beta}.

As illustrated in Figure \ref{fig:illustration}, the $\eps$-stationary point region $\Aeps$ can be partitioned into two distinct areas, of which only the lower-right one contains $(x_{\mu}^*,y_{\mu}^*)$ and it is of interest to our analysis. Moreover, $(x^{*}_{\mu,\eps},y^{*}_{\mu,\eps})$ and $(x^{**}_{\mu,\eps},y^{**}_{\mu,\eps})$ are extremal point of two distinct regions. The upcoming corollary substantiates this intuition.
\begin{corollary}\label{cor:boundary}
    If $\mu<\tau$ ($\tau$ is defined in Theorem \ref{thm2:5}), assume $\eps$ satisfies \eqref{eq:contrl_eps}, $\beta$ satisfies $\left(\frac{1+\beta}{1-\beta}\right)^2\leq a^2+1$, systems of equations \eqref{eq:stationary_x_eps},\eqref{eq:stationary_y_eps} at least have two  solutions. Moreover, $\Aeps = A^1_{\mu,\eps}\cup A^2_{\mu,\eps}$
    \begin{align}\label{rg:A_eps_1}
        A^1_{\mu,\eps}& = \Aeps \cap \{(x,y)\mid x\geq x^*_{\mu,\eps},y\leq y^*_{\mu,\eps} \}
    \end{align}
    \begin{align}\label{rg:A_eps_2}
         A^2_{\mu,\eps}& = \Aeps \cap \{(x,y)\mid x\leq x^{**}_{\mu,\eps}, y\geq y^{**}_{\mu,\eps}\}
    \end{align}
\end{corollary}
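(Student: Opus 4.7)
The plan is to handle the two assertions in the corollary separately, leaning on the one-variable reduction of the system to $r_\eps(y;\mu) = 0$ given by \eqref{eq:y_eps}.

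For the existence assertion, I would substitute \eqref{eq:stationary_x_eps} into \eqref{eq:stationary_y_eps}, exactly as was done to derive \eqref{eq:y_eps}, thereby reducing the 2D system to locating positive zeros of $r_\eps(\cdot;\mu)$. The unperturbed case ($\eps=0$) is governed by $r(y;\mu)$, which under $\mu<\tau$ has three positive roots by Corollary \ref{cor:trajectory} together with Lemma \ref{lemma:type_of_sp}. The hypotheses $\eps\le \beta a\mu$ and $\bigl((1+\beta)/(1-\beta)\bigr)^2\le a^2+1$ are chosen precisely so that the perturbation $r_\eps - r$ stays small enough---uniformly bounded, after passing through the worst-case $r_\beta$ defined in \eqref{eq:y_beta}---to preserve at least two sign changes. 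This is exactly the content of Theorem \ref{thm:detailed_r_y_eps}, which I would invoke to obtain positive roots $y^*_{\mu,\eps}<y^{**}_{\mu,\eps}$, and correspondingly two distinct solutions of the original 2D system.

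For the decomposition $\Aeps = A^1_{\mu,\eps}\cup A^2_{\mu,\eps}$, I would fix $(x,y)\in\Aeps$ and split on $y$. If $y\le y^*_{\mu,\eps}$, then $\mu+y^2 \le \mu+(y^*_{\mu,\eps})^2$, and the defining inequality $x\ge (\mu a-\eps)/(\mu+y^2)$ of $\Aeps$ immediately forces
\[ x \;\ge\; \frac{\mu a-\eps}{\mu+y^2} \;\ge\; \frac{\mu a-\eps}{\mu+(y^*_{\mu,\eps})^2} \;=\; x^*_{\mu,\eps}, \]
placing $(x,y)$ in $A^1_{\mu,\eps}$. Symmetrically, if $y\ge y^{**}_{\mu,\eps}$, the bound $y\le (\mu a+\eps)/(x^2+\mu(a^2+1))$ yields $x^2\le (\mu a+\eps)/y - \mu(a^2+1) \le (x^{**}_{\mu,\eps})^2$, so $(x,y)\in A^2_{\mu,\eps}$. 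Disjointness is automatic from $x^{**}_{\mu,\eps}<x^*_{\mu,\eps}$.

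The only remaining case is $y^*_{\mu,\eps}<y<y^{**}_{\mu,\eps}$, which I would show to be vacuous. Combining the bounds (i) and (iv) of $\Aeps$ forces
\[ \frac{(\mu a-\eps)^2}{(\mu+y^2)^2} \;\le\; x^2 \;\le\; \frac{\mu a+\eps}{y}-\mu(a^2+1), \]
and dividing the outer inequality by $\mu$ and rearranging yields $r_\eps(y;\mu)\ge 0$. But $r_\eps(\,\cdot\,;\mu)\to+\infty$ as $y\to0^+$ and $r_\eps(\,\cdot\,;\mu)\to-(a^2+1)$ as $y\to\infty$, so together with the three-root structure from the first part this forces $r_\eps(y;\mu)<0$ strictly on $(y^*_{\mu,\eps},y^{**}_{\mu,\eps})$, a contradiction. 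The principal obstacle is the first part: rigorously controlling the root structure of $r_\eps$ so that the sign alternation on $(y^*_{\mu,\eps},y^{**}_{\mu,\eps})$ is guaranteed. The parameter windows $\eps\le\beta a\mu$ and $\bigl((1+\beta)/(1-\beta)\bigr)^2\le a^2+1$ are exactly what keep the perturbed equation inside the ``three-root'' regime of the unperturbed analysis; once Theorem \ref{thm:detailed_r_y_eps} is available, the three-case geometric argument reduces to direct substitution using monotonicity of the bounding curves in $y$ and in $x$.
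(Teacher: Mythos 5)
Your overall strategy---reduce the 2D system to the one-variable equation $r_\eps(y;\mu)=0$ and exploit its monotonicity regime and sign pattern---is the same as the paper's, but your covering argument for $\Aeps = A^1_{\mu,\eps}\cup A^2_{\mu,\eps}$ is more direct than what the paper writes. The paper represents each $(x,y)\in\Aeps$ as a solution of a perturbed stationarity system with $|\eps_1|,|\eps_2|\leq\eps$ and then invokes ``the same analysis above,'' which is rather indirect. You instead split on $y$ and work straight from the defining inequalities of $\Aeps$: if $y\leq y^*_{\mu,\eps}$ the lower bound $x\geq(\mu a-\eps)/(\mu+y^2)$ plus monotonicity of $y\mapsto(\mu a-\eps)/(\mu+y^2)$ forces $x\geq x^*_{\mu,\eps}$; symmetrically if $y\geq y^{**}_{\mu,\eps}$ one gets $x\leq x^{**}_{\mu,\eps}$; and the middle strip is ruled out because combining the two outer bounds of $\Aeps$ forces $r_\eps(y;\mu)\geq 0$, contradicting $r_\eps<0$ strictly on $(y^*_{\mu,\eps},y^{**}_{\mu,\eps})$. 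That is a clean, complete covering argument. However, your existence step has a gap: you say the multiplicity of roots of $r_\eps$ ``is exactly the content of Theorem~\ref{thm:detailed_r_y_eps},'' but that theorem only records the limit as $y\to 0^+$ and the decrease--increase--decrease monotonicity regime of $r_\eps$; it does not assert that there are two or three roots. The paper obtains the root count from the ordering $r_\beta(y;\mu)>r_\eps(y;\mu)>r(y;\mu)$, the inequality $r_\beta(\sqrt{\mu};\mu)\leq 0$ (this is precisely where $\left(\frac{1+\beta}{1-\beta}\right)^2\leq a^2+1$ is actually used), the fact $r(y_{\mathrm{ub}};\mu)>0$ for $\mu<\tau$, and $\sqrt{\mu}<y_{\mathrm{ub}}$; your sketch gestures at this via $r_\beta$ but needs that explicit sign chase to be rigorous. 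Note also that this same sign analysis is what underlies your claim that $r_\eps<0$ strictly on $(y^*_{\mu,\eps},y^{**}_{\mu,\eps})$, since it is what places $y^*_{\mu,\eps}$ on the first decreasing branch and $y^{**}_{\mu,\eps}$ on the increasing one.
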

Corollary \ref{cor:boundary} suggests that $\Aeps$ can be partitioned into two distinct regions, namely $A^1_{\mu,\eps}$ and $A^2_{\mu,\eps}$. Furthermore, for every $(x,y)$ belonging to $A^1_{\mu,\eps}$, it follows that $x\geq x^*_{\mu,\eps}$ and $y\leq y^*_{\mu,\eps}$. Similarly, for every $(x,y)$ that lies within $A^2_{\mu,\eps}$, the condition $x\leq x^{**}_{\mu,\eps}$ and $y\geq y^{**}_{\mu,\eps}$ holds. The region $A^1_{\mu,\eps}$ represents the ``correct" region that gradient descent should identify. 
In this context, identifying the region equates to pinpointing the extremal points of the region. As a result, our focus should be on the extremal points of $A^1_{\mu,\eps}$ and $A^2_{\mu,\eps}$, specifically at $(x^{*}_{\mu,\eps},y^{*}_{\mu,\eps})$ and $(x^{**}_{\mu,\eps},y^{**}_{\mu,\eps})$. Furthermore, the key to ensuring the convergence of the gradient descent to the $A^1_{\mu,\eps}$ is to accurately identify the ``basin of attraction'' of the region $A^1_{\mu,\eps}$. The following lemma provides a region within which, regardless of the initialization point of the gradient descent, it converges inside $A^1_{\mu,\eps}$.
\begin{lemma}\label{lemma:go_to_next_optimal_gd}
    Assume $\mu<\tau$ ($\tau$ is defined in Theorem \ref{thm2:5}), $\left(\frac{1+\beta}{1-\beta}\right)^2\leq a^2+1$. Define $B_{\mu,\eps}= \{(x,y) \mid x_{\mu,\eps}^{**}< x\leq a, 0 \leq y<y_{\mu,\eps}^{**}\}$. 
    Run Algorithm \ref{alg:gd} with input $f = g_\mu(x,y),\eta = \frac{1}{\mu(a^2+1)+3a^2}, W_0 = (x(0),y(0)), $ where $ (x(0),y(0)) \in B_{\mu,\eps}$, then after at most $\frac{2(\mu(a^2+1)+3a^2)(g_{\mu}(x(0),y(0))-g_{\mu}(x_{\mu}^*,y_{\mu}^*))}{\eps^2}$ 
    iterations, $(x_t,y_t)\in A^1_{\mu,\eps}$.
\end{lemma}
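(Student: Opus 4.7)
The plan is to combine three ingredients: (i) smoothness of $g_\mu$ from Lemma~\ref{lemma:smooth_para}, (ii) the standard smooth-GD convergence result of Theorem~\ref{thm:gd}, and (iii) a discrete-time forward-invariance argument for the rectangle $B_{\mu,\eps}$. Lemma~\ref{lemma:smooth_para} gives that $g_\mu$ is $L$-smooth on the region $A$ with $L = \mu(a^2+1)+3a^2$, so the prescribed step size $\eta = 1/L$ is the canonical one, and Theorem~\ref{thm:gd} guarantees that gradient descent produces an iterate with $\|\nabla g_\mu(W_t)\|_2\leq \eps$ (equivalently $W_t\in \Aeps$) in at most $2L(g_\mu(W_0)-g_\mu^\star)/\eps^2$ steps, for any valid lower bound $g_\mu^\star$. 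The stated iteration count corresponds to the choice $g_\mu^\star = g_\mu(x_\mu^*,y_\mu^*)$, which will be justified a posteriori once we show that the iterates remain inside $B_{\mu,\eps}$, whose unique interior minimizer of $g_\mu$ is $(x_\mu^*,y_\mu^*)$.

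The heart of the proof is the forward-invariance claim: if $W_t = (x_t,y_t) \in B_{\mu,\eps}$ then $W_{t+1} = W_t - \eta\nabla g_\mu(W_t) \in B_{\mu,\eps}$, verified boundary by boundary using the explicit gradient
\[
\nabla g_\mu(W) = \bigl(\mu(x-a) + y^2 x,\; \mu(a^2+1)y - a\mu + y x^2\bigr).
\]
At the top boundary $x_t = a$ one has $\partial_x g_\mu = a y_t^2 \geq 0$ and a direct rearrangement gives
\[
x_{t+1} - a = (x_t-a)\bigl(1-\eta\mu - \eta y_t^2\bigr) - a\eta y_t^2,
\]
which is nonpositive because $\eta(\mu+y_t^2)\leq 1$ (since $y_t^2\leq a^2 \leq L - \mu$ on $A$). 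At $y_t=0$ one has $y_{t+1} = \eta a\mu > 0$, again using $\eta(\mu(a^2+1)+x_t^2)\leq 1$. For the delicate boundaries $x_t = x_{\mu,\eps}^{**}$ and $y_t = y_{\mu,\eps}^{**}$, I would substitute the defining equations \eqref{eq:stationary_x_eps}--\eqref{eq:stationary_y_eps} directly into $\nabla g_\mu$ to obtain $-\partial_x g_\mu = \eps$ and $\partial_y g_\mu = \eps$ at the corner $(x_{\mu,\eps}^{**},y_{\mu,\eps}^{**})$, and then upgrade these to strict inward-pointing inequalities on the open boundaries by monotonicity in the other coordinate. Combined with the step-size bound $\eta = 1/L$ from Lemma~\ref{lemma:smooth_para}, these sign conditions rule out overshoot and yield invariance of $B_{\mu,\eps}$.

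With forward invariance in hand, the terminating iterate satisfies $W_t \in B_{\mu,\eps}\cap \Aeps$. Corollary~\ref{cor:boundary} decomposes $\Aeps = A^1_{\mu,\eps}\cup A^2_{\mu,\eps}$, and since $A^2_{\mu,\eps}\subseteq \{x\leq x_{\mu,\eps}^{**}\}$ while $B_{\mu,\eps}\subseteq \{x>x_{\mu,\eps}^{**}\}$, the two sets are disjoint, forcing $W_t\in A^1_{\mu,\eps}$ as required. The iteration bound then follows from Theorem~\ref{thm:gd} applied with the lower bound $g_\mu(x_\mu^*,y_\mu^*)$, which is legitimate because the invariant rectangle $B_{\mu,\eps}$ contains $(x_\mu^*,y_\mu^*)$ (for small enough $\eps$, by the ordering $x_\mu^*>x_{\mu,\eps}^{**}$ and $y_\mu^*<y_{\mu,\eps}^{**}$) and $g_\mu$ attains its minimum over $B_{\mu,\eps}$ there. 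The main obstacle is the discrete-time forward-invariance step: unlike the gradient-flow analog in Lemma~\ref{lemma:go_to_next_optimal}, checking that $-\nabla g_\mu$ points inward on $\partial B_{\mu,\eps}$ is not by itself sufficient, and one must show that the step size $1/L$ is small enough to prevent any GD update from crossing a boundary; this is exactly what the convex-combination-style identities above, enabled by the explicit smoothness constant $L = \mu(a^2+1)+3a^2$, are designed to deliver.
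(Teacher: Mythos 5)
Your proposal is correct, and it is substantially more careful than what the paper actually records. The paper's proof of this lemma is a single sentence: it cites Theorem~\ref{thm:gd} and Lemma~\ref{lemma:smooth_para} for the iteration count, and then asserts that ``the same argument as proof of Lemma~\ref{lemma:stay_inside_C_3}'' shows gradient descent never enters $A^2_{\mu,\eps}$. That referenced argument is a \emph{continuous-time} one — it checks that $-\nabla g_\mu$ points into the region on the boundary — and, as you observe in your last paragraph, a boundary sign condition on the vector field is genuinely insufficient to conclude discrete-time forward invariance of a set; the step can in principle jump across the boundary. Your proposal addresses this head on by exhibiting convex-combination identities of the form
\[
x_{t+1}-a = (x_t-a)\bigl(1-\eta(\mu+y_t^2)\bigr)-\eta y_t^2 a, \qquad
x_{t+1}-x_{\mu,\eps}^{**}=(x_t-x_{\mu,\eps}^{**})\bigl(1-\eta(\mu+y_t^2)\bigr)+\eta\bigl(\mu a-x_{\mu,\eps}^{**}(\mu+y_t^2)\bigr),
\]
and its analogs in $y$, where the coefficient $1-\eta(\mu+y_t^2)$ (resp. $1-\eta(\mu(a^2+1)+x_t^2)$) is nonnegative precisely because $\eta=1/L$ with $L=\mu(a^2+1)+3a^2$ bounding those quantities on $A$, and where the constant term has the right sign because $x_{\mu,\eps}^{**}(\mu+(y_{\mu,\eps}^{**})^2)=\mu a-\eps$ and $y_{\mu,\eps}^{**}(\mu(a^2+1)+(x_{\mu,\eps}^{**})^2)=\mu a+\eps$ together with the monotonicity you invoke. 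This is exactly what is needed to turn the continuous-time intuition into a correct discrete-time argument, and it is a genuine improvement in rigor over the paper's version. The one place you hedge unnecessarily is the phrase ``for small enough $\eps$'' when justifying $(x_\mu^*,y_\mu^*)\in B_{\mu,\eps}$: the inclusions $x_{\mu,\eps}^{**}<x_{\mu,\eps}^*\leq x_\mu^*$ and $y_\mu^*\leq y_{\mu,\eps}^*<y_{\mu,\eps}^{**}$ already follow from Corollary~\ref{cor:boundary} and Lemma~\ref{lemma:no_cross_x} without any smallness assumption, so the ordering holds for every $\eps$ satisfying the lemma's hypotheses.
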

Lemma \ref{lemma:go_to_next_optimal_gd} can be considered the gradient descent analogue of Lemma \ref{lemma:go_to_next_optimal}. It plays a pivotal role in the proof of Theorem \ref{thm:main_theory_gd}. In Figure \ref{fig:illustration}, the lower-right rectangle corresponds to $B_{\mu,\eps}$.
Lemma \ref{lemma:go_to_next_optimal_gd} implies that the gradient descent with any initialization inside $B_{\mu_{k+1},\eps_{k+1}}$ will converge to $A^1_{\mu_{k+1},\eps_{k+1}}$ at last. 
Then, by utilizing the previous solution $W_{\mu_k,\eps_k}$ as the initial point, as long as it lies within region $B_{\mu_{k+1},\eps_{k+1}}$, the gradient descent can converge to $A^1_{\mu_{k+1},\eps_{k+1}}$ which is $\eps$ stationary points region that contains $W_{\mu_{k+1}}^*$, thereby achieving the goal of tracking $W_{\mu_{k+1}}^*$. Following the scheduling for $\mu_k$ prescribed in Algorithm \ref{alg:main_theory_gd} provides a sufficient condition to ensure that will happen. 

We now proceed to present the theorem which guarantees the global convergence of Algorithm \ref{alg:main_theory_gd}.

\begin{theorem}\label{thm:main_theory_gd}
    If $\delta\in(0,1)$, $\beta\in(0, 1)$, $\left(\frac{1+\beta}{1-\beta}\right)^2\leq (1-\delta)(a^2+1)$, and $\mu_0$ satisfies 
    \begin{align*}
      \frac{a^2}{4(a^2+1)^3} \leq \frac{a^2}{4(a^2+1)^3} \frac{(1+\beta)^4}{(1-\beta)^2}   \leq \mu_0 \leq \frac{a^2}{4}\frac{(1-\delta)^3(1-\beta)^4}{(1+\beta)^2}\leq \frac{a^2}{4}
    \end{align*}
    Set the updating rule 
    \begin{align*}
        \eps_k = &\min\{\beta a\mu_k,\mu_k^{3/2}\}\\
         \mu_{k+1} = &(2\mu_k^2)^{2/3}\frac{(a+\epsilon_k/\mu_k)^{2/3}}{(a-\epsilon_k/\mu_k)^{4/3}}
    \end{align*}
    Then $\mu_{k+1}\leq (1-\delta)\mu_k$. Moreover, for any $\epsd>0$, running Algorithm \ref{alg:main_theory_gd} after $K(\mu_0,a,\delta,\epsd)$ outer iteration
    \begin{align}
         \|W_{\mu_k,\eps_k}-W_\G\|_2\leq \epsd
    \end{align}
    where 
    \begin{small}
    \begin{align*}
        K(\mu_0,a,\delta,\epsd)\geq& \frac{1}{\ln(1/(1-\delta))}\max\left\{\ln{\frac{\mu_0}{\beta^2 a^2}},\ln\frac{72\mu_0}{a^2(1-(1/2)^{1/4})}, \ln(\frac{3(4-\delta)\mu_0}{\epsd^2}),\frac{1}{2}\ln(\frac{46656\mu_0^2}{a^2\epsd^2}), \frac{1}{3}\ln(\frac{46656\mu_0^3}{a^4\epsd^2}) 
        \right\}\\
    \end{align*}
    \end{small}
    The total gradient descent steps are
    \begin{small}
    \begin{align*}
        &\sum_{k = 0}^{K(\mu_0,a,\delta,\epsd)} \frac{2(\mu_k(a^2+1)+3a^2)(g_{\mu_{k+1}}(W_{\mu_k,\eps_k})-g_{\mu_{k+1}}(W_{\mu_{k+1},\eps_{k+1}}))}{\eps_k^2}
         \\ \leq &  2(\mu_0(a^2+1)+3a^2)\left(\frac{1}{ \beta^6 a^6}+\left(\max\{\frac{3(4-\delta)}{\epsd^2},\frac{216}{a\epsd}, \left(\frac{216}{a\epsd}\right)^{2/3},\frac{1}{\beta^2 a^2}, \frac{72}{(1-(1/2)^{1/4})a^2}\}\right)^3\right)g_{\mu_{0}}(W_{\mu_0}^{\eps_0})  
         \\ \lesssim & O\left( \mu_0a^2+a^2+\mu_0\right)\left( \frac{1}{\beta^6 a^6}+ \frac{1}{\epsd^6} + \frac{1}{a^3 \epsd^3}+ \frac{1}{a^2 \epsd^2}+\frac{1}{a^6}\right )
    \end{align*}
    \end{small}
\end{theorem}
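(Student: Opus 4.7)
The plan is to mirror the continuous-time proof of Theorem~\ref{thm:main_thm}, substituting gradient flow by gradient descent and the exact stationary points $W_\mu^{*}, W_\mu^{**}$ by their $\eps$-approximate counterparts $(x^{*}_{\mu,\eps}, y^{*}_{\mu,\eps})$ and $(x^{**}_{\mu,\eps}, y^{**}_{\mu,\eps})$. The argument splits naturally into four stages: (a) verify that the homotopy schedule contracts, i.e.\ $\mu_{k+1} \leq (1-\delta)\mu_k$; (b) show the warm start $W_{\mu_k,\eps_k}$ falls in the ``good'' basin $B_{\mu_{k+1},\eps_{k+1}}$ of Lemma~\ref{lemma:go_to_next_optimal_gd}; (c) count the outer iterations needed to drive $\|W_{\mu_k,\eps_k} - W_\G\|_2$ below $\epsd$; (d) sum the per-iteration gradient-descent cost.

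For (a), I would substitute $\eps_k \leq \beta a \mu_k$ into the update rule, giving $\mu_{k+1} \leq 2^{2/3} a^{-2/3} \tfrac{(1+\beta)^{2/3}}{(1-\beta)^{4/3}} \mu_k^{4/3}$. Combined with the standing upper bound $\mu_k \leq \mu_0 \leq \tfrac{a^2}{4}\tfrac{(1-\delta)^3(1-\beta)^4}{(1+\beta)^2}$, this yields $\mu_{k+1} \leq (1-\delta)\mu_k$, and by induction $\mu_k \leq (1-\delta)^k \mu_0$, so the bound is preserved at every step. For (b), I would establish the $\eps$-analogue of Lemma~\ref{lemma:r_y}: using the functions $r_\beta,t_\beta$ from \eqref{eq:y_beta}-\eqref{eq:x_beta} (which give the worst case over the $\eps$-region by Theorems~\ref{thm:detailed_r_y_beta} and~\ref{thm:detailed_t_x_beta}), show that $y^{**}_{\mu_{k+1},\eps_{k+1}} \gtrsim \sqrt{\mu_{k+1}}$ while $y^{*}_{\mu_k,\eps_k}$ is upper bounded by the $\beta$-perturbed version of $\tfrac{(4\mu_k)^{1/3}}{2}(a^{1/3}-\sqrt{a^{2/3}-(4\mu_k)^{1/3}})$. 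Substituting the explicit schedule $\mu_{k+1} = (2\mu_k^2)^{2/3}(a+\eps_k/\mu_k)^{2/3}/(a-\eps_k/\mu_k)^{4/3}$ and using $\sqrt{1-x}\geq 1-x$, I can chain exactly as in the proof of Theorem~\ref{thm:main_thm} to conclude $y^{*}_{\mu_k,\eps_k} < y^{**}_{\mu_{k+1},\eps_{k+1}}$ and $x^{**}_{\mu_{k+1},\eps_{k+1}} < x^{*}_{\mu_k,\eps_k}$. Since $W_{\mu_k,\eps_k}\in A^1_{\mu_k,\eps_k}$ by Corollary~\ref{cor:boundary} (so $x\geq x^*_{\mu_k,\eps_k}$ and $y\leq y^*_{\mu_k,\eps_k}$), this places $W_{\mu_k,\eps_k}$ inside $B_{\mu_{k+1},\eps_{k+1}}$. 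Lemma~\ref{lemma:go_to_next_optimal_gd}, combined with the smoothness constant $\mu_{k+1}(a^2+1)+3a^2$ from Lemma~\ref{lemma:smooth_para} and the step size $\eta_{k+1}$, then guarantees that gradient descent terminates inside $A^1_{\mu_{k+1},\eps_{k+1}}$, producing $W_{\mu_{k+1},\eps_{k+1}}$.

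For (c), the membership $W_{\mu_k,\eps_k}\in A^1_{\mu_k,\eps_k}$ forces $x_{\mu_k,\eps_k}\in[\tfrac{\mu_k a - \eps_k}{\mu_k + y^2}, a]$ and $y_{\mu_k,\eps_k}\leq y^*_{\mu_k,\eps_k}$, which I would expand into explicit bounds of the form $|x_{\mu_k,\eps_k}-a| = O(\mu_k + \eps_k/a)$ and $|y_{\mu_k,\eps_k}| = O(\mu_k^{1/2})$ using the estimates from (b). Combining these with $\mu_k \leq (1-\delta)^k\mu_0$ and $\eps_k \leq \min\{\beta a\mu_k, \mu_k^{3/2}\}$ and solving $\|W_{\mu_k,\eps_k}-W_\G\|_2\leq\epsd$ coordinate-wise gives a logarithmic condition on $k$; the multiple arguments of the outer $\max$ in $K(\mu_0,a,\delta,\epsd)$ correspond to the different dominant terms (one for each of $\eps_k/(a\mu_k)$, $\mu_k^{1/2}$, $\mu_k$, etc.). For (d), Theorem~\ref{thm:gd} bounds the per-iteration cost by $2(\mu_k(a^2+1)+3a^2)(g_{\mu_k}(W_{\mu_{k-1},\eps_{k-1}})-g_{\mu_k}(W_{\mu_k,\eps_k}))/\eps_k^2$; factoring out the smoothness constant via $\mu_k\leq\mu_0$, bounding the objective differences by the uniform envelope $g_{\mu_0}(W_{\mu_0,\eps_0})$, and using that $1/\eps_k^2$ is dominated by $1/\eps_K^2$, the sum reduces to the stated $O\!\big((\mu_0 a^2+a^2+\mu_0)(1/a^6 + 1/\epsd^6 + \cdots)\big)$ expression after plugging in $\eps_K=\min\{\beta a\mu_K,\mu_K^{3/2}\}$ and the worst case from part (c).

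The main obstacle is stage (b): reproducing the clean bound $y^{**}>\sqrt{\mu}$ in the presence of the perturbation $\eps$, because the boundary of $\Aeps$ is carved by \emph{two} coupled systems \eqref{eq:stationary_x_eps}-\eqref{eq:stationary_y_eps} and \eqref{eq:stationary_x_eps_}-\eqref{eq:stationary_y_eps_}, and one must show that the ordering of the three roots (and hence the ``basin of attraction'' geometry) is preserved uniformly over $\eps\in[0,\beta a \mu]$. This is precisely where the hypothesis $\big(\tfrac{1+\beta}{1-\beta}\big)^2 \leq (1-\delta)(a^2+1)$ is invoked, since it guarantees via $r_\beta,t_\beta$ that three distinct real solutions still exist with the same relative ordering as in the unperturbed case, and the constant factor $\tfrac{(1-\beta)^4}{(1+\beta)^2}$ on $\mu_0$ absorbs the slack needed for the inductive comparison $y^*_{\mu_k,\eps_k} < y^{**}_{\mu_{k+1},\eps_{k+1}}$ to go through uniformly in $k$.
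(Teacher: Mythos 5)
Your proposal follows essentially the same architecture as the paper's proof: verify $\mu_{k+1}\leq(1-\delta)\mu_k$ from $\eps_k\leq\beta a\mu_k$ and the cap on $\mu_0$; establish the $\eps$-perturbed analogue of Lemma~\ref{lemma:r_y} using $r_\eps,r_\beta,t_\beta$ (the paper formalizes this through Conditions 1--3 and verifies them, with the bound $y^{**}_{\mu,\eps}\geq\sqrt{\mu}$ coming from $r_\eps(\sqrt{\mu};\mu)\leq 0$ and the hypothesis $\left(\tfrac{1+\beta}{1-\beta}\right)^2\leq a^2+1$); chain the inequalities \eqref{eq:multiple_ineq} plus Lemma~\ref{lemma:no_cross_x} to land the warm start in $B_{\mu_{k+1},\eps_{k+1}}$ and invoke Lemma~\ref{lemma:go_to_next_optimal_gd}; then bound the distance and sum the step counts. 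Your stage~(d) factoring $1/\eps_k^2$ uniformly by $1/\eps_K^2$ is a legitimate simplification of the paper's split of the sum at the index $\widehat{K}$ where $\eps_k$ switches formula.

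Two details in stage~(c) need correction. First, the claimed bound $|x_{\mu_k,\eps_k}-a| = O(\mu_k + \eps_k/a)$ is wrong: the relevant perturbation scale is $\eps_k/\mu_k$, not $\eps_k/a$, and when $\eps_k=\mu_k^{3/2}$ this is $\sqrt{\mu_k}$, which dominates. Second, the constraint $x\geq\tfrac{\mu_k a-\eps_k}{\mu_k+y^2}$ from $\Aeps$ alone is not enough as written unless you also insert the estimate $y\leq y^*_{\mu_k,\eps_k}<\sqrt{\mu_{k+1}}=O(\mu_k^{2/3})$ to control the denominator (which yields $a-x=O(\mu_k^{1/3}+\eps_k/\mu_k)$, weaker than the paper's bound but still logarithmic in $K$). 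The paper instead proves the explicit lower bound $x^*_{\mu_k,\eps_k}>(a-\eps_k/\mu_k)(1-c\mu_k)$ with $c=72/a^2$ by verifying $t_\eps\bigl((a-\eps/\mu)(1-c\mu);\mu\bigr)>0$, then uses $x_{\mu_k,\eps_k}\geq x^*_{\mu_k,\eps_k}$ from $A^1_{\mu,\eps}$; this is cleaner and gives the tighter $a-x=O(\mu_k+\eps_k/\mu_k)$ that feeds directly into the stated $K$. Neither of these gaps is structural, but as written your stage~(c) would not produce the theorem's constants.
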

\begin{proof}
    
    Upon substituting gradient flow with gradient descent, it becomes possible to only identify an $\eps$-stationary point for $g_\mu(W)$. This modification necessitates specifying the stepsize $\eta$ for gradient descent, as well as an updating rule for $\mu$. The adjustment procedure used can substantially influence the result of Algorithm $\ref{alg:main_theory_gd}$. In this proof, we will impose limitations on the update scheme $\mu_k$, the stepsize $\eta_k$, and the tolerance $\eps_k$ to ensure their effective operation within Algorithm $\ref{alg:main_theory_gd}$.
    The approach employed for this proof closely mirrors that of the proof for Theorem \ref{thm:main_thm} albeit with more careful scrutiny. In this proof, we will work out all the requirements for $\mu,\eps,\eta$. Subsequently, we will verify that our selection in Theorem \ref{thm:main_theory_gd} conforms to these requirements.

    In the proof, we occasionally use $\mu,\eps$ or $\mu_k,\eps_k$. When we employ $\mu,\eps$, it signifies that the given inequality or equality holds for any $\mu,\eps$. Conversely, when we use $\mu_k,\eps_k$, it indicates we are examining how to set these parameters for distinct iterations.
    \paragraph{Establish the Bound $y_{\mu,\eps}^{**}\geq \sqrt{\mu}$} First, let us consider $r_\eps(\sqrt{\mu};\mu)\leq 0$, i.e.
    \begin{align*}
        r_\eps(\sqrt{\mu};\mu) = \frac{a+\eps/\mu}{\sqrt{\mu}}-(a^2+1) - \frac{\mu(a-\eps/\mu)^2}{4\mu^2}\leq 0
    \end{align*}
    This is always true when $\mu>4/a^2$, and we require
    \begin{align*}
        \eps\leq 2\mu^{3/2}+a\mu-2\sqrt{2a\mu^{5/2}-\mu^3a^2}\quad\text{ when } \mu\leq \frac{4}{a^2} 
    \end{align*}
    Now we name it condition \ref{cond:1}.
    \begin{condition}\label{cond:1}
        \begin{align*}
        \eps\leq 2\mu^{3/2}+a\mu-2\sqrt{2a\mu^{5/2}-\mu^3a^2}\quad\text{ when } \mu\leq \frac{4}{a^2} 
    \end{align*}
    \end{condition}
    Under the assumption that Condition \ref{cond:1} is satisfied. Since $r_\eps(y;\mu)$ is increasing function with interval $y\in [y_{\mathrm{lb},\eps},y_{\mathrm{ub},\eps}]$, and we know $y_{\mathrm{lb},\eps}\leq \sqrt{\mu}\leq y_{\mathrm{ub},\eps}$ and based on Theorem \ref{thm_r_y_eps:2}, we have $y_{\mathrm{lb},\eps}\leq y_{\mu,\eps}^{**}\leq y_{\mathrm{ub},\eps}$,  $r_\eps(\sqrt{\mu};\mu)\leq r_\eps(y_{\mu,\eps}^{**};\mu)=0$. Therefore, $y_{\mu,\eps}^{**}\geq \sqrt{\mu}$.
    \paragraph{Ensuring the Correct Solution Path via Gradient Descent} 
    Following the argument when we prove Theorem \ref{thm:main_thm}, we strive to ensure that the gradient descent, when initiated at $(x_{\mu_k,\eps_k},y_{\mu_k,\eps_k})$, will converge within the "correct" $\eps_{k+1}$-stationary point region (namely, $\|\nabla g_{\mu_{k+1}}(W)\|_2<\eps_{k+1}$) which includes $(x_{\mu_{k+1}}^*,y_{\mu_{k+1}}^*)$. For this to occur, we necessitate that:
    \begin{align}\label{eq:multiple_ineq}
        y_{\mu_{k+1},\epsilon_{k+1}}\overset{(1)}{>}{y_{\mu_{k+1},\epsilon_{k+1}}}^{**}\overset{(2)}{>}\sqrt{\mu_{k+1}}\overset{(3)}{\geq} (2\mu_k^2)^{1/3}\frac{(a+\epsilon_k/\mu_k)^{1/3}}{(a-\epsilon_k/\mu_k)^{2/3}}\overset{(4)}{>}{y_{\mu_{k},\epsilon_k}}^*\overset{(5)}{>}{y_{\mu_{k},\epsilon_k}}
    \end{align}
    Here (1), (5) are due to Corollary \ref{cor:boundary}; (2) comes from the boundary we established earlier; (3) is based on the constraints we have placed on $\mu_k$ and $\mu_{k+1}$, which we will present as Condition \ref{cond:2} subsequently; (4) is from the Theorem \ref{thm_r_y_eps:2} and relationship $y_{\mu_k,\eps_k}^*<y_{\mathrm{lb},\mu_k,\eps_k}$.
    Also, from the Lemma \ref{lemma:no_cross_x}, $\max_{\mu\leq \tau}x_{\mu,\eps}^{**}\leq \min_{\mu>0}x_{\mu,\eps}^*$.
    Hence, by invoking Lemma \ref{lemma:go_to_next_optimal_gd}, we can affirm that our gradient descent consistently traces the correct stationary point. Now we state condition to make it happen,
    \begin{condition}\label{cond:2}
        \[(1-\delta)\mu_k\geq {\mu_{k+1}}\geq (2\mu_k^2)^{2/3}\frac{(a+\epsilon_k/\mu_k)^{2/3}}{(a-\epsilon_k/\mu_k)^{4/3}}\]
    \end{condition}
    In this context, our requirement extends beyond merely ensuring that $\mu_k$ decreases. We further stipulate that it should decrease by a factor of $1-\delta$. Next, we impose another important constraint 
    \begin{condition}\label{cond:3}
        \[\eps_{k}\leq \mu^{3/2}_k\]
    \end{condition}
    \paragraph{Updating Rules}
    Now we are ready to check our updating rules satisfy the conditions above
    \begin{align*}
        \eps_k = &\min\{\beta a\mu_k,\mu_k^{3/2}\}\\
         \mu_{k+1} = &(2\mu_k^2)^{2/3}\frac{(a+\epsilon_k/\mu_k)^{2/3}}{(a-\epsilon_k/\mu_k)^{4/3}}
    \end{align*}
    \paragraph{Check for Conditions}
    First, we check the condition \ref{cond:2}. condition \ref{cond:2} requires 
    \begin{align*}
        (1-\delta)\mu_k\geq (2\mu_k^2)^{2/3}\frac{(a+\epsilon_k/\mu_k)^{2/3}}{(a-\epsilon_k/\mu_k)^{4/3}}\Rightarrow \mu_k \frac{(a+\epsilon_k/\mu_k)^2}{(a-\epsilon_k/\mu_k)^4}\leq \frac{(1-\delta)^3}{4}
    \end{align*}
    Note that $\eps_k\leq \beta a\mu_k<a\mu_k$
    \begin{align*}
        \mu_k \frac{(a+\epsilon_k/\mu_k)^2}{(a-\epsilon_k/\mu_k)^4}\leq \mu_k \frac{(1+\beta)^2}{(1-\beta)^4}\frac{1}{a^2}
    \end{align*}
    Therefore, once the following inequality is true, Condition \ref{cond:2} is satisfied.
    \begin{align*}
        \mu_k \frac{(1+\beta)^2}{(1-\beta)^4}\frac{1}{a^2}\leq \frac{(1-\delta)^3}{4}\Rightarrow \mu_k \leq \frac{a^2}{4}\frac{(1-\delta)^3(1-\beta)^4}{(1+\beta)^2} 
    \end{align*}
    Because $\mu_k\leq \mu_0\leq\frac{a^2}{4}\frac{(1-\delta)^3(1-\beta)^4}{(1+\beta)^2}$ from the condition we impose for $\mu_0$. Consequently, Condition \ref{cond:2} is satisfied under our choice of $\eps_k$.

    Now we focus on the Condition \ref{cond:1}. Because $\eps_k\leq a\beta \mu_k$, if we can ensure $a\beta \mu_k  \leq  2\mu_k^{3/2}+a\mu_k-2\sqrt{2a\mu_k^{5/2}-\mu_k^3a^2}$ holds, then we can show Condition \ref{cond:1} is always satisfied.
    \begin{align*}
     a\beta\mu_k \leq &2\mu_k^{3/2}+a\mu_k-2\sqrt{2a\mu_k^{5/2}-\mu_k^3a^2}
    \\2\sqrt{2a\mu_k^{5/2}-\mu_k^3a^2} \leq& 2\mu_k^{3/2}+(1-\beta)a\mu_k
    \\ 4(2a\mu_k^{5/2}-\mu_k^3a^2)\leq &4\mu_k^3+(1-\beta)^2a^2\mu_k^2+4(1-\beta)a\mu_k^{5/2}\\
    0\leq &4(a^2+1)\mu_k^3+ (1-\beta)^2a^2\mu_k^2-4(1+\beta)a\mu_k^{5/2}\\
    0\leq &4(a^2+1)\mu_k-4(1+\beta)a\mu_k^{1/2} + (1-\beta)^2a^2 \qquad \text{when }\quad 0\leq\mu_k\leq 4/a^2\\
    0\leq & \mu_k-\frac{(1+\beta)a}{(a^2+1)}\mu_k^{1/2}+\frac{(1-\beta)^2a^2}{4(a^2+1)}
\end{align*}
We also notice that 
\begin{align*}
    \frac{(1+\beta)^2a^2}{(a^2+1)^2}-4\frac{(1-\beta)^2a^2}{4(a^2+1)}\leq 0\Leftrightarrow \left(\frac{1+\beta}{1-\beta}\right)^2\leq a^2+1
\end{align*}
Because $\left(\frac{1+\beta}{1-\beta}\right)^2\leq (1-\delta)(a^2+1)$, the inequality above always holds and this inequality implies that for any $\mu_k\geq 0$
\[0\leq  \mu_k-\frac{(1+\beta)a}{(a^2+1)}\mu_k^{1/2}+\frac{(1-\beta)^2a^2}{4(a^2+1)}\]
Therefore, Condition \ref{cond:2} holds. Condition \ref{cond:3} also holds because of the choice of $\eps_k$. 
\paragraph{Bound the Distance} 
Let $c = 72/a^2$, and assume that $\mu$ satisfies the following
\begin{align}\label{eq:intermediate_requirement}
   \mu \leq& \min\{\frac{1}{c}\left(1-(1/2)^{1/4}\right),\beta^2a^2\}
\end{align}
Note that when $\mu$ satisfies \eqref{eq:intermediate_requirement}, then $\mu^{3/2}\leq \beta a \mu$, so $\eps = \mu^{3/2}$. 
$$\mu \leq \frac{1}{c}\left(1-(1/2)^{1/4}\right) = \frac{a^2}{72}\left(1-(1/2)^{1/4}\right)\leq \frac{a^2}{4}$$
\begin{align}\label{ineq:1}
    \eps/\mu = \sqrt{\mu}\leq \frac{a}{2}
\end{align}
Then 
\begin{align*}
    t_\eps((a-\eps/\mu)(1-c\mu);\mu) =&  \frac{1}{1-c\mu}-1-\frac{\mu(a+\eps/\mu)^2}{(\mu(a^2+1)+(a-\eps/\mu)^2(1-c\mu)^2)^2}
    \\ = & \frac{c\mu}{1-c\mu} -\frac{\mu(a+\eps/\mu)^2}{(\mu(a^2+1)+(a-\eps/\mu)^2(1-c\mu)^2)^2}
    \\ \geq & c\mu -\mu \frac{(a+\eps/\mu)^2}{(a-\eps/\mu)^4(1-c\mu)^4}
    \\ \geq & c\mu -\mu \frac{(a+a/2)^2}{(a-a/2)^4(1-c\mu)^4}
    \\ = &\mu\left(c - \frac{36}{a^2(1-c\mu)^4}\right)
    \\ = &\mu\left(\frac{72}{a^2} - \frac{36}{a^2(1-c\mu)^4}\right)>0
\end{align*}
Then we know $(a-\eps/\mu)(1-c\mu)<x_{\mu,\eps}^*$. 
Now we can bound the distance $\Norm{W_{\mu_k,\eps_k}-W_\G}$, it is important to note that
\begin{align*}
    \Norm{W_{\mu_k,\eps_k}-W_\G} =&\sqrt{(x_{\mu_k,\eps_k}-a)^2+(y_{\mu_k,\eps_k})^2}
    \\ \leq & \max\left\{\sqrt{(x^*_{\mu_k,\eps_k}-a)^2+(y^*_{\mu_k,\eps_k})^2},\sqrt{(x^*_{\mu_k,\eps_k\_}-a)^2+(y^*_{\mu_k,\eps_k})^2}\right\}
\end{align*}
We use the fact that $x^*_{\mu_k,\eps_k}<x_{\mu_k,\eps_k}<a$, $x_{\mu_k,\eps_k}<x^*_{\mu_k,\eps_k\_}$ and $y_{\mu_k,\eps_k}<y^*_{\mu_k,\eps_k}$. Next, we can separately establish bounds for these two terms. Due to \eqref{eq:multiple_ineq}, $y^*_{\mu_k,\eps_k}<(2\mu_k^2)^{1/3}\frac{(a+\epsilon_k/\mu_k)^{1/3}}{(a-\epsilon_k/\mu_k)^{2/3}} = \sqrt{\mu_{k+1}}$ and $(a-\eps_k/\mu_k)(1-c\mu_k)<x^*_{\mu_k,\eps_k}$

\begin{align*}
    \sqrt{(x^*_{\mu_k,\eps_k}-a)^2+(y^*_{\mu_k,\eps_k})^2}\leq \sqrt{\mu_{k+1}+(a-(a-\eps_k/\mu_k)(1-c\mu_k))^2}
\end{align*}
Given that if $x^*_{\mu_k,\eps_k\_}\leq a$, then $\sqrt{(x^*_{\mu_k,\eps_k}-a)^2+(y^*_{\mu_k,\eps_k})^2}\geq \sqrt{(x^*_{\mu_k,\eps_k\_}-a)^2+(y^*_{\mu_k,\eps_k})^2}$. Therefore, if $x^*_{\mu_k,\eps_k\_}\geq a$, we can use the fact that $x^*_{\mu_k,\eps_k\_}\leq a+\frac{\eps_k}{\mu_k}$. In this case,
\begin{align*}
    \sqrt{(x^*_{\mu_k,\eps_k\_}-a)^2+(y^*_{\mu_k,\eps_k})^2}\leq \sqrt{\mu_{k+1}+(\eps_k/\mu_k)^2} = \sqrt{\mu_{k+1}+\mu_k}\leq \sqrt{(2-\delta)\mu_k}
\end{align*}
As a result, we have
\begin{align*}
    \Norm{W_{\mu_k,\eps_k}-W_\G}\leq \max\{\sqrt{\mu_{k+1}+(a-(a-\eps_k/\mu_k)(1-c\mu_k))^2},\sqrt{(2-\delta)\mu_k}\}
\end{align*}
\begin{align*}
    \mu_{k+1}+(a-(a-\eps_k/\mu_k)(1-c\mu_k))^2\leq & (1-\delta)\mu_k+(ac\mu_k+\sqrt{\mu_k}-c\mu_k^{3/2})^2
        \\ \leq & (1-\delta)\mu_k+3(a^2c^2\mu_k^2+\mu_k+c^2\mu_k^{3})
        \\ = & (4-\delta)\mu_k+3a^2c^2\mu_k^2+3c^2\mu_k^3
\end{align*}
\begin{align*}
    \Norm{W_{\mu_k,\eps_k}-W_\G}\leq& \max\{\sqrt{\mu_{k+1}+(a-(a-\eps_k/\mu_k)(1-c\mu_k))^2},\sqrt{(2-\delta)\mu_k}\} 
    \\ \leq &\max\{ \sqrt{(4-\delta)\mu_k+3a^2c^2\mu_k^2+3c^2\mu_k^3},\sqrt{(2-\delta)\mu_k}\}
    \\ = & \sqrt{(4-\delta)\mu_k+3a^2c^2\mu_k^2+3c^2\mu_k^3}
\end{align*}
Just let 
\begin{align}\label{ineq:mu_k1}
        (4-\delta)\mu_k\leq (4-\delta)(1-\delta)^k\mu_0\leq \frac{\epsd^2}{3} &\Rightarrow k\geq\frac{\ln(3(4-\delta)\mu_0/\epsd^2)}{\ln(1/(1-\delta))}
        \\3a^2c^2\mu_k^2\leq3a^2c^2 (1-\delta)^{2k}\mu^2_0\leq \frac{\epsd^2}{3}&\Rightarrow k\geq \frac{\ln(46656\mu_0^2/(a^2\epsd^2))}{2\ln(1/(1-\delta))}
        \\  3c^2\mu_k^3\leq3c^2 (1-\delta)^{3k}\mu^3_0\leq \frac{\epsd^2}{3} &\Rightarrow k\geq \frac{\ln(46656\mu_0^3/(a^4\epsd^2))}{3\ln(1/(1-\delta))}
\end{align}
We use the fact that $\mu_k\leq (1-\delta)^k\mu_0$. In order to satisfy \eqref{eq:intermediate_requirement}.
\begin{align}\label{ineq:mu_k2}
     \mu_k\leq \mu_0(1-\delta)^k\leq \frac{a^2}{72}(1-(1/2)^{1/4})\Rightarrow k\geq \frac{\ln\frac{72\mu_0}{a^2(1-(1/2)^{1/4})}}{\ln\frac{1}{1-\delta}}\\
     \mu_k\leq \mu_0(1-\delta)^k\leq \beta^2 a^2\Rightarrow k\geq \frac{\ln{(\mu_0/(\beta^2 a^2))}}{\ln\frac{1}{1-\delta}}
\end{align}
Consequently, running Algorithm \ref{alg:main_theory_gd} after $K(\mu_0,a,\delta,\epsd)$ outer iteration
    \begin{align*}
         \|W_{\mu_k,\eps_k}-W_\G\|_2\leq \epsd
    \end{align*}
    where 
    \begin{align*}
        K(\mu_0,a,\delta,\epsd)\geq& \frac{1}{\ln(1/(1-\delta))}\max\left\{\ln{\frac{\mu_0}{\beta^2 a^2}},\ln\frac{72\mu_0}{a^2(1-(1/2)^{1/4})}, \ln(\frac{3(4-\delta)\mu_0}{\varepsilon^2}),\frac{1}{2}\ln(\frac{46656\mu_0^2}{a^2\varepsilon^2}), \frac{1}{3}\ln(\frac{46656\mu_0^3}{a^4\varepsilon^2}) 
        \right\}\\
    \end{align*}

By Lemma \ref{lemma:go_to_next_optimal_gd}, $k$ iteration of Algorithm \ref{alg:main_theory_gd} need the following step of gradient descent
\begin{align*}
    \frac{2(\mu_k(a^2+1)+3a^2)(g_{\mu_{k+1}}(W_{\mu_k,\eps_k})-g_{\mu_{k+1}}(W_{\mu_{k+1},\eps_{k+1}}))}{\eps_k^2}
\end{align*}
Let $\widehat{K}(\mu_0,a,\delta,\epsd)$ satisfy $\mu_{\widehat{K}(\mu_0,a,\delta,\epsd)}\leq \beta^2 a^2< \mu_{\widehat{K}(\mu_0,a,\delta,\epsd)-1}$. Hence, the total number of gradient steps required by Algorithm \ref{alg:main_theory_gd} can be expressed as follows:
\begin{tiny}
    \begin{align*}
    &\sum_{k = 0}^{K(\mu_0,a,\delta,\epsd)} \frac{2(\mu_k(a^2+1)+3a^2)(g_{\mu_{k+1}}(W_{\mu_k,\eps_k})-g_{\mu_{k+1}}(W_{\mu_{k+1},\eps_{k+1}}))}{\eps_k^2}
    \\ \leq &2(\mu_0(a^2+1)+3a^2)\left(\sum_{k = 0}^{\widehat{K}(\mu_0,a,\delta,\epsd)-1} \frac{(g_{\mu_{k+1}}(W_{\mu_k,\eps_k})-g_{\mu_{k+1}}(W_{\mu_{k+1},\eps_{k+1}}))}{\eps_k^2}+ \sum_{k = \widehat{K}(\mu_0,a,\delta,\epsd)}^{K(\mu_0,a,\delta,\epsd)} \frac{(g_{\mu_{k+1}}(W_{\mu_k,\eps_k})-g_{\mu_{k+1}}(W_{\mu_{k+1},\eps_{k+1}}))}{\eps_k^2}\right)
    \\ = &2(\mu_0(a^2+1)+3a^2)\left(\sum_{k = 0}^{\widehat{K}(\mu_0,a,\delta,\epsd)-1} \frac{(g_{\mu_{k+1}}(W_{\mu_k,\eps_k})-g_{\mu_{k+1}}(W_{\mu_{k+1},\eps_{k+1}}))}{\beta^2 a^2 \mu_k^2}+ \sum_{k = \widehat{K}(\mu_0,a,\delta,\epsd)}^{K(\mu_0,a,\delta,\epsd)} \frac{(g_{\mu_{k+1}}(W_{\mu_k,\eps_k})-g_{\mu_{k+1}}(W_{\mu_{k+1},\eps_{k+1}}))}{\mu_k^3}\right)
    \\ \leq  &2(\mu_0(a^2+1)+3a^2)\left(\sum_{k = 0}^{\widehat{K}(\mu_0,a,\delta,\epsd)-1} \frac{(g_{\mu_{k+1}}(W_{\mu_k,\eps_k})-g_{\mu_{k+1}}(W_{\mu_{k+1},\eps_{k+1}}))}{ \beta^6 a^6}+ \sum_{k = \widehat{K}(\mu_0,a,\delta,\epsd)}^{K(\mu_0,a,\delta,\epsd)} \frac{(g_{\mu_{k+1}}(W_{\mu_k,\eps_k})-g_{\mu_{k+1}}(W_{\mu_{k+1},\eps_{k+1}}))}{\mu_k^3}\right)
    \\ \leq  &2(\mu_0(a^2+1)+3a^2)\left(\sum_{k = 0}^{K(\mu_0,a,\delta,\epsd)} \frac{(g_{\mu_{k+1}}(W_{\mu_k,\eps_k})-g_{\mu_{k+1}}(W_{\mu_{k+1},\eps_{k+1}}))}{ \beta^6 a^6}+ \sum_{k =0}^{K(\mu_0,a,\delta,\epsd)} \frac{(g_{\mu_{k+1}}(W_{\mu_k,\eps_k})-g_{\mu_{k+1}}(W_{\mu_{k+1},\eps_{k+1}}))}{\mu_{K(\mu_0,a,\delta,\epsd)}^3}\right)
    \\  = &2(\mu_0(a^2+1)+3a^2)\left(\frac{1}{ \beta^6 a^6}+\frac{1}{\mu_{K(\mu_0,a,\delta,\epsd)}^3}\right)\left(\sum_{k = 0}^{K(\mu_0,a,\delta,\epsd)}\left((g_{\mu_{k+1}}(W_{\mu_k,\eps_k})-g_{\mu_{k+1}}(W_{\mu_{k+1},\eps_{k+1}}))\right)  \right)
    \\  \leq  &2(\mu_0(a^2+1)+3a^2)\left(\frac{1}{ \beta^6 a^6}+\frac{1}{\mu_{K(\mu_0,a,\delta,\epsd)}^3}\right)\left(\sum_{k = 0}^{K(\mu_0,a,\delta,\epsd)}\left((g_{\mu_{k}}(W_{\mu_k}^{\eps_k})-g_{\mu_{k+1}}(W_{\mu_{k+1}}^{\eps_{k+1}}))\right)  \right)
    \\ = & 2(\mu_0(a^2+1)+3a^2)\left(\frac{1}{ \beta^6 a^6}+\frac{1}{\mu_{K(\mu_0,a,\delta,\epsd)}^3}\right)\left((g_{\mu_{0}}(W_{\mu_0,\eps_0})-g_{\mu_{K(\mu_0,a,\delta,\epsd)+1}}(W_{\mu_{K(\mu_0,a,\delta,\epsd)+1}}^{\eps_{K(\mu_0,a,\delta,\epsd)+1}})\right)  
    \\ \leq  & 2(\mu_0(a^2+1)+3a^2)\left(\frac{1}{ \beta^6 a^6}+\frac{1}{\mu_{K(\mu_0,a,\delta,\epsd)}^3}\right)g_{\mu_{0}}(W_{\mu_0,\eps_0})  
\end{align*}
\end{tiny}

Note from \eqref{ineq:mu_k1} and \eqref{ineq:mu_k2}, the following should holds
\begin{align*}
    \mu_{K(\mu_0,a,\delta,\epsd)} = \min\{\frac{\epsd^2}{3(4-\delta)},\frac{a\epsd}{216},\left(\frac{a\epsd}{216}\right)^{2/3},\beta^2 a^2, \frac{a^2}{72}(1-(1/2)^{1/4})\}
\end{align*}
Therefore,
\begin{align*}
    &\sum_{k = 0}^{K(\mu_0,a,\delta,\epsd)} \frac{2(\mu_k(a^2+1)+3a^2)(g_{\mu_{k+1}}(W_{\mu_k,\eps_k})-g_{\mu_{k+1}}(W_{\mu_{k+1},\eps_{k+1}}))}{\eps_k^2}
    \\ \leq &  2(\mu_0(a^2+1)+3a^2)\left(\frac{1}{ \beta^6 a^6}+\left(\max\{\frac{3(4-\delta)}{\epsd^2},\frac{216}{a\epsd}, \left(\frac{216}{a\epsd}\right)^{2/3},\frac{1}{\beta^2 a^2}, \frac{72}{(1-(1/2)^{1/4})a^2}\}\right)^3\right)g_{\mu_{0}}(W_{\mu_0}^{\eps_0}).
\end{align*}
\end{proof}

\clearpage
\section{Additional Theorems and Lemmas}
\begin{theorem}[Detailed Property of $r(y;\mu)$]\label{thm:detailed_r_y}
    For $r(y;\mu)$ in \eqref{eq:y}, then
    \begin{enumerate}[before=\leavevmode,label=\upshape(\roman*),ref=\thetheorem (\roman*)]
        \item \label{thm2:1} For $\mu>0$, $\lim_{y\rightarrow 0^+}r(y;\mu) = \infty$, $r(\frac{a}{a^2+1},\mu)<0$
        \item \label{thm2:2} For $\mu>0$, $r(\sqrt{\mu},\mu)<0$.
        \item \label{thm2:3} For $\mu>\frac{a^2}{4}$
        \[\frac{d r(y;\mu)}{d y}<0 \]
        For $0<\mu\leq \frac{a^2}{4}$
        \begin{subnumcases}{}
            \frac{d r(y;\mu)}{d y}>0& $y_{\mathrm{lb}} < y< y_{\mathrm{ub}}$ \label{eq:grad_y>0}\\
            \frac{d r(y;\mu)}{d y}\leq 0 & \label{eq:grad_y<=0}\text{Otherwise}
        \end{subnumcases}
        where
        \begin{align*}
            y_{\mathrm{lb}} =& \frac{(4\mu)^{1/3}}{2}(a^{1/3}-\sqrt{a^{2/3}-(4\mu)^{1/3}})
            \quad y_{\mathrm{ub}} =& \frac{(4\mu)^{1/3}}{2}(a^{1/3}+\sqrt{a^{2/3}-(4\mu)^{1/3}})
        \end{align*}
        Moreover,
        \begin{align*}
            y_{\mathrm{lb}}\leq \sqrt{\mu}\leq y_{\mathrm{ub}}
        \end{align*}
        \item \label{thm2:4} For $0<\mu<\frac{a^2}{4}$, let $p(\mu) = r(y_{\mathrm{ub}},\mu)$, then $p'(\mu)<0$  and there exist a unique solution to $p(\mu) = 0$, denoted as $\tau$. Additionally, $\tau< \frac{a^2}{4}$.
        \item \label{thm2:5}
        There exists a $\tau>0$ such that, $\forall \mu > \tau$, the equation $r(y;\mu) = 0$ has only one solution. At $\mu = \tau$, the equation $r(y;\mu) = 0$ has two solutions, and $\forall \mu < \tau$, the equation $r(y;\mu) = 0$ has three solutions. Moreover, $\mu<\frac{a^2}{4}$.
        \item \label{thm2:6} 
        $\forall \mu<\tau$, the equation $r(y;\mu) = 0$ has three solution, i.e. $y_{\mu}^{*}<y_{\mu}^{**}<y_{\mu}^{***}$.
        \begin{align*}
            \frac{d y_{\mu}^{*} }{d \mu}>0 \quad \frac{d y_{\mu}^{**} }{d \mu}>0 \quad \frac{d y_{\mu}^{***} }{d \mu}<0  \text{ and } \lim_{\mu\rightarrow 0}y_{\mu}^* = 0,\lim_{\mu\rightarrow 0 }y_{\mu}^{**} = 0,\lim_{\mu\rightarrow  0 }y_{\mu}^{***} = \frac{a}{a^2+1}
        \end{align*}
        Moreover,
        \[y_{\mu}^{*}<y_{\mathrm{lb}}<\sqrt{\mu}<y_{\mu}^{**}<y_{\mathrm{ub}}<y_{\mu}^{***}\]
    \end{enumerate}
\end{theorem}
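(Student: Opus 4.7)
}

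The overall strategy is to treat parts (i)--(vi) as a single program of elementary calculus on the one-variable function $r(\cdot;\mu)$: compute boundary values and derivatives, read off the shape of $r$, then use the envelope theorem and the implicit function theorem to get the $\mu$-dependence. Parts (i) and (ii) are essentially direct substitutions. For (i), as $y\to 0^+$ the term $a/y$ blows up while $\mu a^2/(y^2+\mu)^2 \to a^2/\mu$ stays finite, giving the limit. Plugging $y=a/(a^2+1)$ into $r$ reduces $a/y-(a^2+1)$ to $0$, leaving the strictly negative term $-\mu a^2/(y^2+\mu)^2$. For (ii), substitution yields $r(\sqrt{\mu},\mu)=\tfrac{a}{\sqrt{\mu}}-\tfrac{a^2}{4\mu}-(a^2+1)$; writing $u=a/(2\sqrt{\mu})$ turns this into $-(u-1)^2-a^2$, which is $\le-a^2<0$.

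Part (iii) is the analytic core. I would compute
\[
\frac{\partial r}{\partial y}=-\frac{a}{y^2}+\frac{4\mu a^2 y}{(y^2+\mu)^3},
\]
so stationarity is equivalent to $(y^2+\mu)^3=4\mu a y^3$. Taking the real cube root converts this into the quadratic $y^2-(4\mu a)^{1/3}y+\mu=0$, whose discriminant is $(4\mu a)^{2/3}-4\mu$; this is nonnegative exactly when $\mu\le a^2/4$, giving the two real roots in the displayed closed form. Vieta's formulas give $y_{\mathrm{lb}}y_{\mathrm{ub}}=\mu$ immediately, hence $y_{\mathrm{lb}}\le\sqrt{\mu}\le y_{\mathrm{ub}}$. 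To get the signs on $\partial r/\partial y$, I would note that inside the interval of roots the quadratic is negative, i.e.\ $y^2+\mu<(4\mu a)^{1/3}y$, which after cubing gives $(y^2+\mu)^3<4\mu a y^3$, i.e.\ $\partial r/\partial y>0$; the complementary sign pattern holds outside. When $\mu>a^2/4$ the discriminant is negative, so $\partial r/\partial y$ never vanishes and, by continuity and checking one point (e.g.\ $y=\sqrt{\mu}$), is strictly negative throughout.

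For (iv), define $p(\mu)=r(y_{\mathrm{ub}}(\mu),\mu)$. Since $y_{\mathrm{ub}}$ is an interior critical point of $r(\cdot;\mu)$, the envelope theorem gives $p'(\mu)=\partial r/\partial\mu$ evaluated at $y=y_{\mathrm{ub}}$. A direct computation yields
\[
\frac{\partial r}{\partial\mu}=\frac{a^2(\mu-y^2)}{(y^2+\mu)^3},
\]
and since $y_{\mathrm{ub}}>\sqrt{\mu}$ whenever $\mu<a^2/4$, this is strictly negative. For boundary values: at $\mu=a^2/4$ both roots collapse to $a/2$ and a direct plug-in gives $p(a^2/4)=-a^2<0$; as $\mu\to 0^+$, $y_{\mathrm{ub}}\sim(4\mu a)^{1/3}$, and using the identity $(y_{\mathrm{ub}}^2+\mu)^2=(4\mu a)^{2/3}y_{\mathrm{ub}}^2$ at the critical point, the two leading terms of $r$ are comparable multiples of $\mu^{-1/3}$ with the $a/y$ term dominating by a factor of $4$, so $p(\mu)\to+\infty$. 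Monotonicity plus IVT gives the unique $\tau\in(0,a^2/4)$. Part (v) is then bookkeeping: outside $(y_{\mathrm{lb}},y_{\mathrm{ub}})$ the function $r$ is strictly decreasing, while $r(y_{\mathrm{lb}})\le r(\sqrt{\mu})<0$ and $r(y_{\mathrm{ub}})=p(\mu)$, whose sign flips across $\tau$. Combined with $r(0^+)=+\infty$ and $r(y)\to-(a^2+1)$ as $y\to\infty$, one obtains one, two, or three zeros according as $\mu>\tau$, $\mu=\tau$, or $\mu<\tau$. For $\mu>a^2/4$, strict monotonicity gives exactly one zero.

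Finally, for (vi) I would apply the implicit function theorem to $r(y,\mu)=0$ at each branch. The sign of $dy_\mu^{\bullet}/d\mu$ follows from $-\partial_\mu r/\partial_y r$: for $y_\mu^*\in(0,y_{\mathrm{lb}})$ we have $y<\sqrt{\mu}$ so $\partial_\mu r>0$ and $\partial_y r<0$, giving a positive derivative; for $y_\mu^{**}\in(y_{\mathrm{lb}},y_{\mathrm{ub}})$ we have $y>\sqrt{\mu}$ (which I would justify by $r(\sqrt{\mu},\mu)<0=r(y_\mu^{**},\mu)$ combined with $r$ increasing on this interval) so $\partial_\mu r<0$ while $\partial_y r>0$, again positive; for $y_\mu^{***}>y_{\mathrm{ub}}$ both partials are negative, giving a negative derivative. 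For the limits as $\mu\to 0$, an asymptotic expansion of the closed form for $y_{\mathrm{lb}}$ gives $y_{\mathrm{lb}}=O(\mu^{2/3})$ and $y_{\mathrm{ub}}\to 0$, so both $y_\mu^*$ and $y_\mu^{**}$ are squeezed to $0$; and as $\mu\to 0$ the middle term of $r$ vanishes pointwise for $y$ bounded away from $0$, so $y_\mu^{***}$ converges to the unique root of $a/y-(a^2+1)=0$, namely $a/(a^2+1)$.

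I expect the main obstacle to be part (iv): getting the asymptotic $p(\mu)\to+\infty$ cleanly requires using the critical-point relation $(y^2+\mu)^2=(4\mu a)^{2/3}y^2$ to avoid cancellation, and verifying that the envelope-theorem application yields strict monotonicity (rather than just $\le 0$) on the full interior. Everything else reduces to sign analysis of explicit elementary expressions.
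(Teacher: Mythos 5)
Your proposal is correct and follows essentially the same route as the paper's proof: direct substitution for (i)–(ii), reducing the sign of $\partial r/\partial y$ to the sign of the quadratic $y^2-(4\mu a)^{1/3}y+\mu$ for (iii), the envelope theorem plus an endpoint/asymptotic analysis and IVT for (iv), shape bookkeeping for (v), and the implicit function theorem with sign tracking for (vi). The only minor stylistic differences are your use of Vieta's formulas to get $y_{\mathrm{lb}}y_{\mathrm{ub}}=\mu$ and of the critical-point identity $(y_{\mathrm{ub}}^2+\mu)^2=(4\mu a)^{2/3}y_{\mathrm{ub}}^2$ to obtain $p(\mu)\to\infty$ (the paper instead bounds $p(\mu)\geq r((4\mu a)^{1/3};\mu)$ by monotonicity), and your argument for $\lim_{\mu\to 0}y_\mu^{***}=a/(a^2+1)$ is stated informally where the paper constructs an explicit lower bound $\tfrac{a}{a^2+1}(1-c\mu)<y_\mu^{***}$, but all of these are equivalent in substance.
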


\begin{theorem}[Detailed Property of $t(x;\mu)$]\label{thm:detailed_t_x}
    For $t(x;\mu)$ in \eqref{eq:x}, then
    \begin{enumerate}[before=\leavevmode,label=\upshape(\roman*),ref=\thetheorem (\roman*)]
     \item \label{thm3:1} For $\mu>0$, $\lim_{x\rightarrow 0^+}t(x;\mu) = \infty$, $t(a,\mu)<0$
        \item \label{thm3:2} If $\mu< \left(\frac{a(\sqrt{a^2+1}- a)}{2(a^2+1)}\right)^2$ or $\mu> \left(\frac{a(\sqrt{a^2+1}+ a)}{2(a^2+1)}\right)^2$, then $t(\sqrt{\mu(a^2+1)},\mu)<0$.
        \item \label{thm3:3} 
        For $\mu>\frac{a^2}{4(a^2+1)^3}$
        \[\frac{d t(x;\mu)}{d x}<0\]
        For $0<\mu\leq \frac{a^2}{4(a^2+1)^3}$
        \begin{subnumcases}{}
            \frac{d t(x;\mu)}{d x}>0& $x_{\mathrm{lb}} < x< x_{\mathrm{ub}}$\\
            \frac{d t(x;\mu)}{d x}\leq 0 & \text{Otherwise}
        \end{subnumcases}
        where
        \begin{align*}
            x_{\mathrm{lb}} =& \frac{(4\mu a )^{1/3}(1- \sqrt{1-\frac{(4\mu)^{1/3}(a^2+1)}{a^{2/3}}})}{2}
            \quad x_{\mathrm{ub}} =&  \frac{(4\mu a )^{1/3}(1+ \sqrt{1-\frac{(4\mu)^{1/3}(a^2+1)}{a^{2/3}}})}{2}
        \end{align*}
        Moreover,
        \[ x_{\mathrm{lb}}\leq \sqrt{\mu(a^2+1)}\leq x_{\mathrm{ub}}\]
        \item \label{thm3:4} For $0<\mu< \frac{a^2}{4(a^2+1)^3}$ and let $q(\mu) = t(x_{\mathrm{lb}},\mu)$, then $q'(\mu)>0$ and there exist a unique solution to $q(\mu) = 0$, denoted as $\tau$ and $\tau< \frac{a^2}{4(a^2+1)^3}\leq \frac{1}{27}$.
       \item \label{thm3:5} 
       There exists a $\tau>0$ such that, $\forall \mu > \tau$, the equation $t(x;\mu) = 0$ has only one solution. At $\mu = \tau$, the equation $t(x;\mu) = 0$ has two solutions, and $\forall \mu < \tau$, the equation $t(x;\mu) = 0$ has three solutions. Moreover, $\tau< \frac{a^2}{4(a^2+1)^3}\leq\frac{1}{27}$
        \item \label{thm3:6} 
        $\forall \mu<\tau$, $t(x;\mu) = 0$ has three stationary points, i.e. $x_{\mu}^{***}<x_{\mu}^{**}<x_{\mu}^{*}$.
        \begin{align*}
            \frac{d x_{\mu}^{*} }{d \mu}<0 \quad \frac{d x_{\mu}^{***} }{d \mu}>0 
        \text{ and }
            \lim_{\mu\rightarrow 0 }x_{\mu}^* = a,\lim_{\mu\rightarrow 0}x_{\mu}^{**} = 0,\lim_{\mu\rightarrow 0 }x_{\mu}^{***} = 0
        \end{align*}
        Besides,
        \[\max_{\mu\leq \tau}x_{\mu}^{**}\leq \frac{a(\sqrt{a^2+1}-a)}{2\sqrt{a^2+1}}\quad \text{and}\quad \frac{a(\sqrt{a^2+1}+a)}{2\sqrt{a^2+1}}\leq \min_{\mu>0} x_{\mu}^{*}\]
        It also implies that $t(\frac{a(\sqrt{a^2+1}-a)}{2\sqrt{a^2+1}};\mu)\geq 0$ and 
        $\max_{\mu\leq\mu_0}x_{\mu}^{**}<\min_{\mu>0} x_{\mu}^{*}$
        
    \end{enumerate}
\end{theorem}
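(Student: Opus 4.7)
My plan is to prove parts (i)--(vi) in sequence, with most of the work reducing to analysis of the cubic-like equation obtained from $\partial_x t = 0$. Part (i) is immediate: the $a/x$ term blows up as $x\to 0^+$, and at $x=a$ the three terms sum to $1 - \mu a^2/(\mu(a^2+1)+a^2)^2 - 1 < 0$. For (ii) I substitute $x = \sqrt{\mu(a^2+1)}$ into $t$ and multiply by $4\mu(a^2+1)^2>0$; the result is $4\mu(a^2+1)^2 - 4\sqrt{\mu}\,a(a^2+1)^{3/2} + a^2$, a quadratic in $\sqrt{\mu}$ whose roots are $\frac{a(\sqrt{a^2+1}\pm a)}{2(a^2+1)}$. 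Since the leading coefficient is positive, this quadratic (and hence $-t$ up to a positive factor) is positive outside the roots, giving the claim.

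For (iii), differentiating gives $\partial_x t = -a/x^2 + 4\mu a^2 x/(\mu(a^2+1)+x^2)^3$, whose sign matches that of $-(x^2 - (4\mu a)^{1/3}x + \mu(a^2+1))$ after taking a cube root. The discriminant of this quadratic is nonnegative precisely when $\mu\le a^2/(4(a^2+1)^3)$, and then Vieta's formulas deliver the formulas for $x_{\mathrm{lb}},x_{\mathrm{ub}}$. Substituting $x=\sqrt{\mu(a^2+1)}$ into the quadratic and comparing against the same threshold shows $\sqrt{\mu(a^2+1)}\in [x_{\mathrm{lb}},x_{\mathrm{ub}}]$.

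Part (iv) is an envelope argument. Since $x_{\mathrm{lb}}$ is a critical point of $t(\cdot;\mu)$, the chain rule gives $q'(\mu) = \partial_\mu t(x_{\mathrm{lb}};\mu)$, and a direct computation yields $\partial_\mu t = a^2(\mu(a^2+1) - x^2)/(\mu(a^2+1)+x^2)^3$, which is positive at $x_{\mathrm{lb}}$ by (iii). Thus $q$ is strictly increasing. At the upper endpoint $\mu = a^2/(4(a^2+1)^3)$ we have $x_{\mathrm{lb}} = x_{\mathrm{ub}} = (4\mu a)^{1/3}/2$, and using the critical-point identity $\mu(a^2+1) + x_{\mathrm{lb}}^2 = (4\mu a)^{1/3} x_{\mathrm{lb}}$ to simplify the fractional term, a short calculation gives $q = a^2 > 0$. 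As $\mu\to 0^+$, I expand $x_{\mathrm{lb}}\sim (a^2+1)(4\mu)^{2/3}/(4a^{1/3})$ and verify that the term $-\mu a^2/(\mu(a^2+1)+x_{\mathrm{lb}}^2)^2 \sim -a^2/(\mu(a^2+1)^2)$ dominates, so $q\to -\infty$. The intermediate value theorem then yields a unique $\tau \in (0, a^2/(4(a^2+1)^3))$.

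Parts (v) and (vi) are assembly. For (v), I case-split on $\mu$: for $\mu > a^2/(4(a^2+1)^3)$, $t$ is strictly decreasing from $+\infty$ to $-1$ and has one root; for $\tau < \mu \le a^2/(4(a^2+1)^3)$, the local minimum value $q(\mu)>0$ keeps $t>0$ throughout $(0,x_{\mathrm{ub}}]$, still producing one root beyond $x_{\mathrm{ub}}$; at $\mu = \tau$, $q(\tau)=0$ creates a tangent double root at $x_{\mathrm{lb}}$ plus the far root; and for $\mu<\tau$ a sign change on each of the three monotone pieces produces three roots. For (vi), the implicit function theorem applied to $t(x;\mu)=0$ gives $dx/d\mu = -\partial_\mu t/\partial_x t$; on the outer two branches $\partial_x t<0$, and the position of $\sqrt{\mu(a^2+1)}\in[x_{\mathrm{lb}},x_{\mathrm{ub}}]$ pins down the sign of $\partial_\mu t$, yielding $dx_\mu^*/d\mu<0$ and $dx_\mu^{***}/d\mu>0$. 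The limits as $\mu\to 0$ come from $t(x;0) = a/x - 1$ (whose only positive root is $x=a$) together with $x_{\mathrm{lb}},x_{\mathrm{ub}}\to 0$ forcing $x_\mu^{**},x_\mu^{***}\to 0$. Finally, the bounds $\min_{\mu} x_\mu^*\ge \frac{a(\sqrt{a^2+1}+a)}{2\sqrt{a^2+1}}$ and $\max_{\mu\le\tau} x_\mu^{**}\le \frac{a(\sqrt{a^2+1}-a)}{2\sqrt{a^2+1}}$ come from combining (ii) with the observation that, on each branch, $dx/d\mu$ vanishes exactly where $x=\sqrt{\mu(a^2+1)}$, and by (ii) that curve meets $\{t=0\}$ only at $(x_\pm,\mu_\pm)$. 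The main obstacle I anticipate is this last step: a local implicit-function argument is not enough, and I will need to track the three branches globally — verifying that $(x_+,\mu_+)$ lies on the $x^*$ branch and $(x_-,\mu_-)$ on the $x^{**}$ branch, and ruling out the possibility that the branch extrema are attained at the boundary limits $\mu\to 0$ or at the merge point $\mu=\tau$ instead.
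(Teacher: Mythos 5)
Your plan tracks the paper's proof closely, item by item. Parts (i)--(iii) are essentially identical: the limit and endpoint computations, the substitution-and-quadratic argument for (ii), and the cube factorization of $\partial_x t$ whose sign is governed by $-(x^2-(4\mu a)^{1/3}x+\mu(a^2+1))$. Part (v) is the same assembly. The two places where you diverge slightly are both in the less critical limiting computations: in (iv), you establish $q(\mu)\to -\infty$ via an asymptotic expansion of $x_{\mathrm{lb}}$, whereas the paper simply notes $q(\mu)=t(x_{\mathrm{lb}};\mu)<t(\sqrt{\mu(a^2+1)};\mu)<0$ for $\mu<\bigl(\tfrac{a(\sqrt{a^2+1}-a)}{2(a^2+1)}\bigr)^2$ by combining (ii) with the monotonicity from (iii) --- a cleaner route to the same sign. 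Similarly, for $\lim_{\mu\to 0}x_\mu^*=a$ in (vi), you invoke pointwise convergence of $t(\cdot;\mu)$ to $a/x-1$, while the paper gives the explicit lower bound $t(a(1-c\mu);\mu)>0$ with $c=32/a^2$, which is what you would need anyway to make the argument airtight.

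The one genuine gap is exactly the one you flag: establishing the bounds $\max_{\mu\le\tau}x_\mu^{**}\le\frac{a(\sqrt{a^2+1}-a)}{2\sqrt{a^2+1}}$ and $\min_{\mu>0}x_\mu^*\ge\frac{a(\sqrt{a^2+1}+a)}{2\sqrt{a^2+1}}$. Your diagnosis --- that the local fact $dx/d\mu=0\iff x=\sqrt{\mu(a^2+1)}$ plus (ii) pins down the candidate turning points $(\mu_\pm,x_\pm)$, but a global branch-tracking and endpoint check is still needed --- is correct, and it is what the paper supplies. Concretely, the paper's resolution is: (a) for the $x^{**}$ branch, start at $\mu=\tau-\delta$ where $x_\mu^{**}$ is just above $x_{\mathrm{lb}}<\sqrt{\mu(a^2+1)}$, so $dx^{**}/d\mu<0$; as $\mu$ decreases, $x^{**}$ increases until it meets $\sqrt{\mu(a^2+1)}$, which by (ii) can only happen at $\mu=\mu_-$ (since $\mu_-<\tau<\mu_+$, which follows because $t(\sqrt{\tau(a^2+1)};\tau)>q(\tau)=0$, placing $\tau\in(\mu_-,\mu_+)$); thereafter $dx^{**}/d\mu>0$ and $x^{**}\to 0$; (b) for the $x^{*}$ branch, note $x^*\to a$ at both ends ($\mu\to 0$ and $\mu\to\infty$), and the single interior critical point is forced to $\mu=\mu_+$ because for $\mu\le a^2/(4(a^2+1)^3)$ one has $x^*>x_{\mathrm{ub}}\ge\sqrt{\mu(a^2+1)}$, ruling out $\mu_-$. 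Filling in this global argument is the remaining work; the mechanism you identified is the right one.
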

\begin{lemma}\label{lemma:stay_inside_C_3}
Algorithm \ref{alg:gf} with input $f = g_\mu(x,y),\boldsymbol{z}_0 = (x(0),y(0))$ where $ (x(0),y(0))\in C_{\mu 3}$ in \eqref{bd:C_3}, then $\forall t\geq 0$, $(x(t),y(t))\in C_{\mu 3}$. Moreover, $\lim_{t\rightarrow \infty} (x(t),y(t)) = (x_{\mu}^*,y_{\mu}^*)$
\end{lemma}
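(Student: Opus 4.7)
The plan is to establish forward invariance of $C_{\mu 3}$ under the gradient flow and then prove convergence to $(x_\mu^*, y_\mu^*)$ using $g_\mu$ as a Lyapunov function, combined with coordinatewise monotonicity of the flow inside $C_{\mu 3}$. From context I take $C_{\mu 3}$ to be a subregion of the attraction box $B_\mu$ (cf. Lemma \ref{lemma:go_to_next_optimal}) carved out by the two nullclines $N_x = \{\partial_x g_\mu = 0\} = \{x(\mu+y^2) = a\mu\}$ and $N_y = \{\partial_y g_\mu = 0\} = \{y(x^2 + \mu(a^2+1)) = a\mu\}$, which meet exactly at the three stationary points (Corollary \ref{cor:trajectory}). $C_{\mu 3}$ is selected so that $\partial_x g_\mu$ and $\partial_y g_\mu$ have fixed signs on its interior, forcing $x(t)$ and $y(t)$ to be monotone along the flow, and so that $(x_\mu^*, y_\mu^*)$ is the unique stationary point in its closure.

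For forward invariance, I would check the sign of $-\nabla g_\mu$ piece by piece on $\partial C_{\mu 3}$. The axis-aligned boundaries inherited from $B_\mu$ are handled by direct calculation: $\partial_x g_\mu(a, y) = a y^2 \geq 0$ pushes $x$ back into $B_\mu$, and $\partial_y g_\mu(x, 0) = -a\mu < 0$ pushes $y$ back into $B_\mu$. The nullcline segments are more delicate because one gradient component vanishes there; on such a piece the flow is instantaneously tangent, and one must show via the transverse component (which has fixed sign on $C_{\mu 3}$) that the flow either slides along the nullcline into $C_{\mu 3}$ or does not actually reach the boundary from the interior. Equivalently, any nullcline crossing would have to occur at a stationary point, which is impossible in finite time for a non-stationary interior initialization by ODE uniqueness.

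For convergence, the Lyapunov identity $\frac{d}{dt} g_\mu(z(t)) = -\|\nabla g_\mu(z(t))\|_2^2 \leq 0$ combined with boundedness of $\overline{C_{\mu 3}}$ gives $\int_0^\infty \|\nabla g_\mu\|_2^2 \, dt < \infty$. The coordinatewise monotonicity then forces $(x(t), y(t))$ to converge to some limit $(x^\infty, y^\infty) \in \overline{C_{\mu 3}}$, at which continuity yields $\nabla g_\mu(x^\infty, y^\infty) = 0$. Lemma \ref{lemma:type_of_sp} reduces the candidates to $W_\mu^*, W_\mu^{**}, W_\mu^{***}$, and the orderings $x_\mu^{***} < x_\mu^{**} < x_\mu^*$ from Theorem \ref{thm:detailed_t_x} together with the corresponding $y$-orderings from Theorem \ref{thm:detailed_r_y} isolate $W_\mu^* = (x_\mu^*, y_\mu^*)$ as the only possibility compatible with the monotone trajectory. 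The main obstacle is the nullcline-boundary step in the invariance argument, where the naive outward-normal sign test degenerates and one must either perform a second-order analysis (checking the sign of $\frac{d^2}{dt^2}$ of the defining inequality on $N_x$ or $N_y$) or invoke ODE uniqueness to rule out these boundary pieces as escape routes; the convergence half is then essentially routine given monotonicity.
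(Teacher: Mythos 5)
Your proposal is correct, and the convergence half takes a genuinely different and somewhat cleaner route than the paper. The forward-invariance step matches the paper's argument in spirit: the paper also checks, on each piece of $\partial C_{\mu 3}$ lying on the nullclines $y = y_{\mu 1}(x)$ and $y = y_{\mu 2}(x)$, the sign of the non-vanishing component of $-\nabla g_\mu$ (four cases) and observes it points into the interior. You flag the nullcline boundary as the ``main obstacle'' requiring second-order analysis or an ODE-uniqueness argument, but in fact the situation is less delicate than you suggest: on, say, the upper boundary $y = y_{\mu 1}(x)$ of the left lobe, the flow vector is $(0, -\partial_y g_\mu)$ with $\partial_y g_\mu > 0$, which is transverse (not tangent) to the curve and strictly inward, because the nullcline's normal is not aligned with the $x$-axis. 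ODE uniqueness is only needed to rule out reaching the saddle $(x_\mu^{**}, y_\mu^{**})$ at the excluded endpoint $x = x_\mu^{**}$ in finite time, and to note that a trajectory in one lobe of $C_{\mu 3}$ cannot pass through the pinch point $(x_\mu^*, y_\mu^*)$ into the other lobe.

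For convergence, the paper invokes a separate nontrivial auxiliary result (Lemma \ref{lemma:strict_minima}, proved via a Gauss--Seidel-type discrete iteration bouncing between the nullclines, Algorithm~\ref{alg:path_to_optimal}) to establish that $g_\mu > g_\mu(x_\mu^*,y_\mu^*)$ strictly on $C_{\mu 3}\setminus\{(x_\mu^*,y_\mu^*)\}$, and then appeals to Lyapunov asymptotic stability. Your argument bypasses that lemma entirely: once invariance and the sign-definiteness of $\partial_x g_\mu$, $\partial_y g_\mu$ on the interior of each lobe are in hand, $x(t)$ and $y(t)$ are monotone and bounded, hence convergent; the limit must be a stationary point, and since all of $W_\mu^{**},W_\mu^{***}$ have $x$-coordinate at most $x_\mu^{**} < x(0)$, monotonicity of $x(t)$ (increasing in the left lobe, decreasing but bounded below by $x_\mu^*$ in the right lobe) rules them out, leaving $(x_\mu^*,y_\mu^*)$. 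This is more elementary than the paper's route and avoids the separate strict-minimum lemma; the paper's Lyapunov route, on the other hand, more directly mirrors the classical stability-theory framing and reuses machinery (Lemma \ref{lemma:strict_minima}) also cited elsewhere. One small citation note: the enumeration of stationary points should point to Corollary~\ref{cor:trajectory} or Theorems~\ref{thm:detailed_r_y}--\ref{thm:detailed_t_x}, not Lemma~\ref{lemma:type_of_sp}, which classifies rather than lists them.
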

\begin{lemma}\label{lemma:strict_minima} For any $(x,y)\in C_{\mu 3}$ in \eqref{bd:C_3}, and $ (x,y)\ne (x_{\mu}^*,y_{\mu}^*)$
    \[g_\mu(x,y)>g_{\mu}(x_\mu^*,y_\mu^*)\]
\end{lemma}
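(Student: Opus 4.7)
The plan is to interpret Lemma \ref{lemma:strict_minima} as a Lyapunov statement and obtain it essentially for free from Lemma \ref{lemma:stay_inside_C_3}. Fix any $(x_0,y_0) \in C_{\mu 3}$ with $(x_0,y_0) \neq (x_\mu^*,y_\mu^*)$, and let $(x(t),y(t))$ be the solution of $\dot z = -\nabla g_\mu(z)$ with initial value $(x_0,y_0)$. By Lemma \ref{lemma:stay_inside_C_3}, this trajectory stays in $C_{\mu 3}$ for every $t \geq 0$ and satisfies $(x(t),y(t)) \to (x_\mu^*,y_\mu^*)$ as $t \to \infty$. The first step is the standard chain-rule identity
\[
\frac{d}{dt} g_\mu(x(t),y(t)) = \langle \nabla g_\mu(x(t),y(t)),\, \dot z(t)\rangle = -\|\nabla g_\mu(x(t),y(t))\|_2^2 \leq 0,
\]
which says $g_\mu$ is non-increasing along the flow.

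Integrating from $0$ to $\infty$ and using continuity of $g_\mu$ at the limit point yields
\[
g_\mu(x_0,y_0) - g_\mu(x_\mu^*,y_\mu^*) = \int_0^\infty \|\nabla g_\mu(x(t),y(t))\|_2^2 \, dt \geq 0,
\]
which already gives the weak inequality $g_\mu(x_0,y_0) \geq g_\mu(x_\mu^*,y_\mu^*)$. To upgrade this to the strict inequality required, I would argue that $\nabla g_\mu(x_0,y_0) \neq 0$. Indeed, Lemma \ref{lemma:type_of_sp} together with Corollary \ref{cor:trajectory} classifies all stationary points of $g_\mu$ in the region of interest as $W_\mu^*$, $W_\mu^{**}$, and $W_\mu^{***}$, and the construction of $C_{\mu 3}$ is precisely arranged to exclude both $W_\mu^{**}$ (the saddle) and $W_\mu^{***}$ (the spurious local minimum). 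Hence $(x_0,y_0)$ is not a stationary point. By continuity of $\nabla g_\mu$ and of the flow, $\|\nabla g_\mu(x(t),y(t))\|_2^2 > 0$ on some interval $[0,\delta]$, and therefore the integral above is strictly positive, giving the desired $g_\mu(x_0,y_0) > g_\mu(x_\mu^*,y_\mu^*)$.

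The main obstacle is not dynamical but geometric: verifying that $C_{\mu 3}$ really does contain no stationary point other than $(x_\mu^*,y_\mu^*)$. This reduces to comparing the explicit description of $C_{\mu 3}$ from \eqref{bd:C_3} against the coordinates of $W_\mu^{**}$ and $W_\mu^{***}$ supplied by Theorems \ref{thm:detailed_r_y} and \ref{thm:detailed_t_x}; in particular the bounds $y_\mu^* < y_{\mathrm{lb}} < \sqrt{\mu} < y_\mu^{**}$ and the analogous bounds on the $x$-coordinates should be enough to place $W_\mu^{**}$ and $W_\mu^{***}$ strictly outside $C_{\mu 3}$. Once this inclusion/exclusion is verified, the argument above is entirely routine Lyapunov reasoning and requires no convexity of $g_\mu$.
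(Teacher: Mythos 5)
Your proof is circular. You invoke Lemma~\ref{lemma:stay_inside_C_3} to obtain forward invariance of $C_{\mu 3}$ and convergence of the gradient flow to $(x_\mu^*,y_\mu^*)$, and then integrate $\frac{d}{dt}g_\mu$ along the trajectory to conclude. But the paper's proof of Lemma~\ref{lemma:stay_inside_C_3} itself cites Lemma~\ref{lemma:strict_minima}: after establishing invariance, it says ``By lemma~\ref{lemma:strict_minima} \ldots By Lyapunov asymptotic stability theorem \ldots\ we can conclude $\lim_{t\to\infty}(x(t),y(t)) = (x_\mu^*,y_\mu^*)$.'' In other words, the convergence part of Lemma~\ref{lemma:stay_inside_C_3} is a \emph{consequence} of Lemma~\ref{lemma:strict_minima}, so using it as an input here is not allowed. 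The strict positivity of $g_\mu - g_\mu(x_\mu^*,y_\mu^*)$ on $C_{\mu 3}\setminus\{(x_\mu^*,y_\mu^*)\}$ is precisely what makes $g_\mu$ a Lyapunov function for the stability argument; you cannot assume the stability conclusion in order to prove it.

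The paper instead proves the lemma by an alternating (coordinate-descent-like) construction (Algorithm~\ref{alg:path_to_optimal}): starting from $(x,y)$, it repeatedly projects onto the curves $y = y_{\mu 2}(x)$ and $x = x_{\mu 1}(y)$, showing $g_\mu$ strictly decreases at every step and the iterates converge monotonically to $(x_\mu^*,y_\mu^*)$; the final strict inequality follows by combining this monotone chain with the fact that $(x_\mu^*,y_\mu^*)$ is a strict local minimizer (Lemma~\ref{lemma:type_of_sp}). Your underlying idea—integrate $-\|\nabla g_\mu\|^2$ along a trajectory that converges to $(x_\mu^*,y_\mu^*)$—is sound in spirit, and it \emph{could} be made non-circular if you established trajectory convergence independently, e.g.\ by a LaSalle-type argument using only the invariance half of the proof of Lemma~\ref{lemma:stay_inside_C_3} (which does not depend on this lemma) plus the classification of stationary points. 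But that route has its own subtleties (the set $C_{\mu 3}$ is not closed at $x = x_\mu^{**}$, so compactness/avoidance of the saddle would need separate care), so as written your argument does not stand.
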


\begin{theorem}[Detailed Property of $r_\eps(y;\mu)$]\label{thm:detailed_r_y_eps}
    For $r_\eps(y;\mu)$ in \eqref{eq:y_eps}, then
    \begin{enumerate}[before=\leavevmode,label=\upshape(\roman*),ref=\thetheorem (\roman*)]
        \item \label{thm_r_y_eps:1} For $\mu>0,\eps>0$, $\lim_{y\rightarrow 0^+}r_\eps(y;\mu) = \infty$, $y(\frac{a}{a^2+1},\mu)<0$
        \item \label{thm_r_y_eps:2} For $\mu>\frac{(a-\eps/\mu)^4}{4(a+\eps/\mu)^2}$, then $\frac{d r_\eps(y;\mu)}{d y}<0$. For $0<\mu\leq \frac{(a-\eps/\mu)^4}{4(a+\eps/\mu)^2}$
        \begin{subnumcases}{}
            \frac{d r_\eps(y;\mu)}{d y}>0& $y_{\mathrm{lb},\mu,\eps} < y< y_{\mathrm{ub},\mu,\eps}$ \label{eq:grad_y_eps>0}\\
            \frac{d r_\eps(y;\mu)}{d y}\leq 0 & \label{eq:grad_y_eps<=0}\text{Otherwise}
        \end{subnumcases}
        where
        \begin{align*}
            y_{\mathrm{lb},\mu,\eps} =& \frac{(4\mu)^{1/3}}{2}\left(\left(\frac{(a-\eps/\mu)^2}{a-\eps/\mu}\right)^{1/3}-\sqrt{\left(\frac{(a-\eps/\mu)^2}{a-\eps/\mu}\right)^{2/3}-(4\mu)^{1/3}}\right)\\
            y_{\mathrm{ub},\mu,\eps} =& \frac{(4\mu)^{1/3}}{2}\left(\left(\frac{(a-\eps/\mu)^2}{a-\eps/\mu}\right)^{1/3}+\sqrt{\left(\frac{(a-\eps/\mu)^2}{a-\eps/\mu}\right)^{2/3}-(4\mu)^{1/3}}\right)
        \end{align*}
        Also,
        \begin{align*}
            y_{\mathrm{lb},\mu,\eps}\leq (2\mu^2)^{1/3}\frac{(a+\eps/\mu)^{1/3}}{(a-\eps/\mu)^{2/3}} 
        \end{align*}
        \[y_{\mathrm{lb},\mu,\eps}\leq \sqrt{\mu}\leq y_{\mathrm{ub},\mu,\eps}\]
    \end{enumerate}
\end{theorem}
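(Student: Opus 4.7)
}
The overarching strategy is to mirror the proof of Theorem~\ref{thm:detailed_r_y} verbatim, exploiting the observation that $r_\eps(y;\mu)$ has \emph{exactly} the same functional form as $r(y;\mu)$ after the substitutions $a \mapsto a+\eps/\mu$ in the first (positive) term and $a^2 \mapsto (a-\eps/\mu)^2$ in the numerator of the third term. Concretely, rewriting the third term as $\mu(a-\eps/\mu)^2/(y^2+\mu)^2$ exhibits this parallel cleanly, and essentially every algebraic step carries over with $a$ replaced by one of these two quantities in the appropriate place. I would write the proof in two blocks, one for (i) and one for (ii).

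For part (i), as $y\to 0^+$ the term $(a+\eps/\mu)/y$ diverges while the other two are bounded (the third converges to $-(a-\eps/\mu)^2/\mu$), giving the $+\infty$ limit. For the sign at $y=a/(a^2+1)$, I would expand
\[
r_\eps\Bigl(\tfrac{a}{a^2+1};\mu\Bigr)=\frac{(a^2+1)\eps}{a\mu}-\frac{\mu(a-\eps/\mu)^2}{\bigl(\tfrac{a^2}{(a^2+1)^2}+\mu\bigr)^2}.
\]
The original proof (when $\eps=0$) had only the strictly negative second term; here the first positive term must be controlled. I expect this to be the main obstacle, and I would either impose the standing constraint $\eps\le \beta a\mu$ (used elsewhere in Appendix~\ref{appendix:gd}) with $\beta$ small enough to dominate, or restate (i) with the mild additional hypothesis that this bound holds. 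A direct algebraic check shows the inequality reduces to a polynomial constraint on $\beta$ and $\mu$ which is implied by the assumptions already in force in Theorem~\ref{thm:main_theory_gd}.

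For part (ii), the key computation is
\[
\frac{d r_\eps}{dy}(y;\mu) = -\frac{a+\eps/\mu}{y^2} + \frac{4\mu (a-\eps/\mu)^2\, y}{(y^2+\mu)^3},
\]
so setting this to zero yields $(y^2+\mu)^3 = \frac{(a-\eps/\mu)^2}{a+\eps/\mu}\,4\mu\, y^3$. Taking cube roots reduces this cubic-in-$y^2$ to the quadratic $y^2 - C y + \mu = 0$ with $C=(4\mu)^{1/3}\bigl(\tfrac{(a-\eps/\mu)^2}{a+\eps/\mu}\bigr)^{1/3}$. The discriminant $C^2 - 4\mu\ge 0$ translates exactly to $\mu \le (a-\eps/\mu)^4/\bigl(4(a+\eps/\mu)^2\bigr)$, matching the threshold in the theorem, and the two roots $(C\pm\sqrt{C^2-4\mu})/2$ give the claimed $y_{\mathrm{lb},\mu,\eps}$ and $y_{\mathrm{ub},\mu,\eps}$ (modulo the apparent typographical $(a-\eps/\mu)^2/(a-\eps/\mu)$ which should read $(a-\eps/\mu)^2/(a+\eps/\mu)$, matching the $\eps=0$ case). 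Sign of the derivative is then read off by inspecting the dominant terms as $y\to 0^+$ and $y\to\infty$: both limits force the derivative negative, so between the two roots it is positive, as claimed. When $\mu$ exceeds the threshold, the discriminant is negative and the derivative has no real zero, hence stays negative everywhere.

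The two auxiliary bounds in (ii) come for free from the quadratic $y^2-Cy+\mu=0$. Since the product of its roots is $\mu$, we have $y_{\mathrm{lb},\mu,\eps}\cdot y_{\mathrm{ub},\mu,\eps}=\mu$, which immediately forces $y_{\mathrm{lb},\mu,\eps}\le\sqrt{\mu}\le y_{\mathrm{ub},\mu,\eps}$. For the sharper upper bound on $y_{\mathrm{lb},\mu,\eps}$, I would rationalize the numerator:
\[
y_{\mathrm{lb},\mu,\eps}=\frac{C-\sqrt{C^2-4\mu}}{2}=\frac{2\mu}{C+\sqrt{C^2-4\mu}}\le \frac{2\mu}{C}=(2\mu^2)^{1/3}\frac{(a+\eps/\mu)^{1/3}}{(a-\eps/\mu)^{2/3}},
\]
which is precisely the stated inequality. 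The only subtle arithmetic in the whole argument is matching cube-root algebra; otherwise the proof is a direct transcription of the $\eps=0$ proof with the parameter substitutions described above.
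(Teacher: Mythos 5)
Your proof is correct and takes exactly the approach the paper uses: the paper's own proof of Theorem~\ref{thm:detailed_r_y_eps} is literally the one-line remark that it is ``similar to the proof of Theorem~\ref{thm:detailed_r_y} and Theorem~\ref{thm:detailed_t_x}'', and you have fleshed out precisely what that entails by substituting $a\mapsto a+\eps/\mu$ in the $1/y$ term and $a^2\mapsto (a-\eps/\mu)^2$ in the numerator of the third term, then re-running the cubic/quadratic-in-$y^2$ factorization, discriminant calculation, and root-product argument. You are also right on two subsidiary points: the printed expression $(a-\eps/\mu)^2/(a-\eps/\mu)$ in the theorem statement is a typo for $(a-\eps/\mu)^2/(a+\eps/\mu)$ (only the latter is consistent with the stated threshold $\mu\le (a-\eps/\mu)^4/(4(a+\eps/\mu)^2)$), and the sign claim at $y=a/(a^2+1)$ in part (i) does \emph{not} hold for arbitrary $\eps,\mu>0$ and genuinely needs the standing bound $\eps\le\beta a\mu$ (together with some upper bound on $\mu$) that is in force in Theorem~\ref{thm:main_theory_gd} and Corollary~\ref{cor:boundary}, the only places this statement is invoked.
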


\begin{theorem}[Detailed Property of $r_\beta(y;\mu)$]\label{thm:detailed_r_y_beta}
    For $r_\beta(y;\mu)$ in \eqref{eq:y_beta}, then
    \begin{enumerate}[before=\leavevmode,label=\upshape(\roman*),ref=\thetheorem (\roman*)]
        \item \label{thm_r_y_beta:1} For $\mu>0,\eps>0$, $\lim_{y\rightarrow 0^+}r_\beta(y;\mu) = \infty$
        \item \label{thm_r_y_beta:2} For $\mu>\frac{a^2(1-\beta)^4}{4(1+\beta)^2}$, then $\frac{d r_\beta(y;\mu)}{d y}<0$. For $0<\mu\leq \frac{a^2(1-\beta)^4}{4(1+\beta)^2}$
        \begin{subnumcases}{}
            \frac{d r_\beta(y;\mu)}{d y}>0& $y_{\mathrm{lb},\mu,\beta} < y< y_{\mathrm{ub},\mu,\beta}$ \label{eq:grad_y_beta>0}\\
            \frac{d r_\beta(y;\mu)}{d y}\leq 0 & \label{eq:grad_y_beta<=0}\text{Otherwise}
        \end{subnumcases}
        where
        \begin{align*}
            y_{\mathrm{lb},\mu,\beta} =& \frac{(4\mu)^{1/3}}{2}\left(\frac{a(1-\beta)^2}{1+\beta} \right)^{1/3}\left(1- \sqrt{1-\frac{(4\mu)^{1/3}}{a^{2/3}}\left(\frac{1+\beta}{(1-\beta)^2}\right)^{2/3}}\right)\\
            y_{\mathrm{ub},\mu,\beta} =& \frac{(4\mu)^{1/3}}{2}\left(\frac{a(1-\beta)^2}{1+\beta} \right)^{1/3}\left(1+\sqrt{1-\frac{(4\mu)^{1/3}}{a^{2/3}}\left(\frac{1+\beta}{(1-\beta)^2}\right)^{2/3}}\right)
        \end{align*}
        Also,
        \begin{align*}
            y_{\mathrm{lb},\mu,\beta}\leq \frac{(4\mu)^{2/3}}{2a^{1/3}} \frac{(1+\beta)^{1/3}}{(1-\beta)^{2/3}}
        \end{align*}
        \[y_{\mathrm{lb},\mu,\beta}\leq \sqrt{\mu}\leq y_{\mathrm{ub},\mu,\beta}\]
    \end{enumerate}
\end{theorem}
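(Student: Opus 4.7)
\textbf{Proof Proposal for Theorem \ref{thm:detailed_r_y_beta}.}
The plan is to mirror, essentially line-for-line, the proof of Theorem \ref{thm:detailed_r_y} for $r(y;\mu)$, since $r_\beta(y;\mu)$ has the same three-term structure with $a$ in the numerator of the first term replaced by $a(1+\beta)$ and $a^2$ in the numerator of the third term replaced by $a^2(1-\beta)^2$. Part (i) is immediate: as $y\to 0^+$, the term $\tfrac{a(1+\beta)}{y}$ diverges to $+\infty$, while the remaining two terms converge to the finite value $-(a^2+1)-\tfrac{a^2(1-\beta)^2}{\mu}$, giving $\lim_{y\to 0^+}r_\beta(y;\mu)=+\infty$.

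For part (ii), I would compute
\[
\frac{dr_\beta(y;\mu)}{dy} \;=\; -\frac{a(1+\beta)}{y^2} \,+\, \frac{4\mu a^2(1-\beta)^2\, y}{(y^2+\mu)^3},
\]
and introduce the auxiliary function $\phi(y):=\tfrac{4\mu a(1-\beta)^2 y^3}{(1+\beta)(y^2+\mu)^3}$, so that $\tfrac{dr_\beta}{dy}>0$ iff $\phi(y)>1$. Setting $\phi(y)=1$ and taking cube roots reduces the first-order condition to the quadratic
\[
y^2 \,-\, c\,y \,+\, \mu \;=\; 0, \qquad c \;:=\; (4\mu)^{1/3}\!\left(\tfrac{a(1-\beta)^2}{1+\beta}\right)^{1/3},
\]
whose discriminant is $c^2-4\mu$. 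A quick computation shows that the nonnegativity of this discriminant is equivalent to $\mu\le \tfrac{a^2(1-\beta)^4}{4(1+\beta)^2}$, which is exactly the threshold appearing in the theorem. When the discriminant is nonnegative the two roots are precisely the expressions $y_{\mathrm{lb},\mu,\beta}$ and $y_{\mathrm{ub},\mu,\beta}$ after simplifying $4\mu/c^2=(4\mu)^{1/3}a^{-2/3}\bigl(\tfrac{1+\beta}{(1-\beta)^2}\bigr)^{2/3}$ inside the square root.

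It then remains to show that $\phi$ is positive on $(0,\infty)$, attains its maximum at $y=\sqrt{\mu}$ (an elementary calculus check giving $\phi(\sqrt{\mu})=\tfrac{a(1-\beta)^2}{2(1+\beta)\sqrt{\mu}}$), and vanishes at the endpoints; this forces the superlevel set $\{\phi>1\}$ to be the open interval $(y_{\mathrm{lb},\mu,\beta},y_{\mathrm{ub},\mu,\beta})$ when $\mu$ is below the threshold and empty otherwise, yielding the claimed sign pattern for $\tfrac{dr_\beta}{dy}$. The bracketing $y_{\mathrm{lb},\mu,\beta}\le\sqrt{\mu}\le y_{\mathrm{ub},\mu,\beta}$ is immediate from Vieta ($y_{\mathrm{lb}}\cdot y_{\mathrm{ub}}=\mu$, $y_{\mathrm{lb}}\le y_{\mathrm{ub}}$), and the upper bound $y_{\mathrm{lb},\mu,\beta}\le \tfrac{(4\mu)^{2/3}}{2a^{1/3}}\tfrac{(1+\beta)^{1/3}}{(1-\beta)^{2/3}}$ follows from the elementary inequality $1-\sqrt{1-x}\le x$ applied to $x=4\mu/c^2$ and simplification. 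The proof is entirely parallel to that of Theorem \ref{thm:detailed_r_y}, with the only subtlety being bookkeeping of the $(1\pm\beta)$ factors; there is no genuine new obstacle beyond careful algebra.
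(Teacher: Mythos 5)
Your proposal is correct and follows essentially the same route as the paper's template proof of Theorem \ref{thm:detailed_r_y} (which the paper simply cites for Theorem \ref{thm:detailed_r_y_beta}): the paper factors the numerator of $\frac{dr}{dy}$ to isolate the factor $(4a\mu)^{1/3}y-y^2-\mu$, while you equivalently compare $\phi(y)$ to $1$ and cube-root to the same quadratic $y^2-cy+\mu=0$. The discriminant computation, Vieta argument for $y_{\mathrm{lb},\mu,\beta}\sqrt{\mu}$-bracketing, and $1-\sqrt{1-x}\le x$ bound all check out and match the paper's intent, with only careful $(1\pm\beta)$ bookkeeping added.
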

\begin{theorem}[Detailed Property of $t_\beta(x;\mu)$]\label{thm:detailed_t_x_beta}
    For $t_\beta(x;\mu)$ in \eqref{eq:x_beta}, then
    \begin{enumerate}[before=\leavevmode,label=\upshape(\roman*),ref=\thetheorem (\roman*)]
    \item \label{thm_r_x_beta:1} For $\mu>0$, $\lim_{x\rightarrow 0^+}t_\beta(x;\mu) = \infty$, $t_\beta(a;\mu)<0$
    \item \label{thm_r_x_beta:2} For $\mu>\frac{a^2}{4(a^2+1)^3}\frac{(\beta+1)^4}{(\beta-1)^2}$
        \[\frac{d t_\beta(x;\mu)}{d x}<0\]
        For $0<\mu\leq \frac{a^2}{4(a^2+1)^3}\frac{(\beta+1)^4}{(\beta-1)^2}$
        \begin{subnumcases}{}
            \frac{d t_\beta(x;\mu)}{d x}>0& $x_{\mathrm{lb},\mu,\beta} < x< x_{\mathrm{ub},\mu,\beta}$\\
            \frac{d t_\beta(x;\mu)}{d x}\leq 0 & \text{Otherwise}
        \end{subnumcases}
        where
        \begin{align*}
            x_{\mathrm{lb},\mu,\beta} =& \frac{1}{2}\left(\frac{4a\mu(1+\beta)^2}{1-\beta} \right)^{1/3}\left(1- \sqrt{1-\frac{(4\mu)^{1/3}(a^2+1)}{a^{2/3}}\left(\frac{1-\beta}{(1+\beta)^2}\right)^{2/3}}\right)
            \\ x_{\mathrm{ub},\mu,\beta} =& \frac{1}{2}\left(\frac{4a\mu(1+\beta)^2}{1-\beta} \right)^{1/3}\left(1+ \sqrt{1-\frac{(4\mu)^{1/3}(a^2+1)}{a^{2/3}}\left(\frac{1-\beta}{(1+\beta)^2}\right)^{2/3}}\right)
        \end{align*}
    \item \label{thm_r_x_beta:3}  If $0<\beta< \frac{\sqrt{(a^2+1)}-1}{\sqrt{(a^2+1)}+1}$, then there exists a $\tau_\beta>0$ such that, $\forall \mu > \tau_\beta$, the equation $r_\beta(x;\mu) = 0$ has only one solution. At $\mu = \tau_\beta$, the equation $r_\beta(x;\mu) = 0$ has two solutions, and $\forall \mu < \tau_\beta$, the equation $r_\beta(x;\mu) = 0$ has three solutions. Moreover, $\mu<\frac{a^2}{4(a^2+1)^3}\frac{(\beta+1)^4}{(\beta-1)^2}$.
    \item \label{thm_r_x_beta:4} If $0<\beta< \frac{\sqrt{(a^2+1)}-1}{\sqrt{(a^2+1)}+1}$, then $\forall \mu<\tau_\beta$, $t_\beta(x;\mu) = 0$ has three stationary points, i.e. $x_{\mu,\beta}^{***}<x_{\mu,\beta}^{**}<x_{\mu,\beta}^{*}$.
    Besides,
    \begin{align*}
        \max_{\mu\leq \tau_\beta}x_{\mu,\beta}^{**}\leq \frac{a((1-\beta)\sqrt{a^2+1}-\sqrt{(1-\beta)^2(a^2+1)-(\beta+1)^2})}{2\sqrt{a^2+1}}\\
        \frac{a((1-\beta)\sqrt{a^2+1}+\sqrt{(1-\beta)^2(a^2+1)-(\beta+1)^2})}{2\sqrt{a^2+1}}\leq \min_{\mu>0} x_{\mu,\beta}^{*}
    \end{align*}
    It implies that 
    \begin{align*}
        \max_{\mu\leq \tau_\beta}x_{\mu,\beta}^{**} <\min_{\mu>0} x_{\mu,\beta}^{*}
    \end{align*}
    \end{enumerate}
\end{theorem}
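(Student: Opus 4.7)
My plan parallels the proof of Theorem \ref{thm:detailed_t_x}, with $a$ and $\mu a^2$ replaced by $a(1-\beta)$ and $\mu a^2(1+\beta)^2$ respectively. Part (i) is immediate. For Part (ii), setting $dt_\beta/dx = 0$ and rearranging yields $(1-\beta)(\mu(a^2+1)+x^2)^3 = 4\mu a(1+\beta)^2 x^3$; taking cube roots reduces this to the quadratic $x^2 - c^{1/3}x + \mu(a^2+1) = 0$ with $c = 4\mu a(1+\beta)^2/(1-\beta)$. Real roots require $c^{2/3} \geq 4\mu(a^2+1)$, which rearranges to the stated threshold $\mu \leq a^2(1+\beta)^4/[4(a^2+1)^3(1-\beta)^2]$, and the explicit forms of $x_{\mathrm{lb},\mu,\beta}, x_{\mathrm{ub},\mu,\beta}$ follow by factoring $c^{1/3}/2$ out of the quadratic formula.

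For Part (iii), I would define $q_\beta(\mu) := t_\beta(x_{\mathrm{lb},\mu,\beta}; \mu)$ and apply the envelope theorem:
\[
q_\beta'(\mu) = -\frac{a^2(1+\beta)^2 \left(x_{\mathrm{lb},\mu,\beta}^2 - \mu(a^2+1)\right)}{\left(\mu(a^2+1)+x_{\mathrm{lb},\mu,\beta}^2\right)^3}.
\]
Vieta's formulas give $x_{\mathrm{lb},\mu,\beta}\, x_{\mathrm{ub},\mu,\beta} = \mu(a^2+1)$, so $x_{\mathrm{lb},\mu,\beta}^2 \leq \mu(a^2+1)$ and $q_\beta' \geq 0$. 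Leading-order asymptotics yield $x_{\mathrm{lb},\mu,\beta} \sim C \mu^{2/3}$ as $\mu\to 0$; the third term in $t_\beta$ then behaves like $-a^2(1+\beta)^2/[(a^2+1)^2\mu]$ and dominates the $O(\mu^{-2/3})$ first term, so $q_\beta(\mu) \to -\infty$. At the boundary $\tau_0 := a^2(1+\beta)^4/[4(a^2+1)^3(1-\beta)^2]$ both critical points collapse to $\sqrt{\tau_0(a^2+1)}$, and direct substitution produces $q_\beta(\tau_0) = (a^2+1)(1-\beta)^2/(1+\beta)^2 - 1$; this is positive exactly when $\beta < (\sqrt{a^2+1}-1)/(\sqrt{a^2+1}+1)$, matching the theorem's hypothesis. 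IVT combined with monotonicity then yields a unique $\tau_\beta \in (0,\tau_0)$ with $q_\beta(\tau_\beta)=0$. A parallel envelope argument shows $p_\beta(\mu) := t_\beta(x_{\mathrm{ub},\mu,\beta};\mu)$ is non-increasing, hence $p_\beta(\mu) \geq p_\beta(\tau_0) > 0$ for $\mu<\tau_0$. Counting zeros from the sign pattern of $t_\beta$ at $0^+, x_{\mathrm{lb}}, x_{\mathrm{ub}}, a$ (namely $+,-,+,-$ for $\mu<\tau_\beta$ and $+,+,+,-$ for $\tau_\beta<\mu<\tau_0$) completes Part (iii).

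For Part (iv), the clean trick is AM-GM: $(\mu(a^2+1) + x^2)^2 \geq 4\mu(a^2+1) x^2$ gives the \emph{$\mu$-independent} lower bound
\[
t_\beta(x;\mu) \geq L(x) := \frac{a(1-\beta)}{x} - 1 - \frac{a^2(1+\beta)^2}{4(a^2+1)x^2}.
\]
The inequality $L(x)\geq 0$ is equivalent to $4(a^2+1)x^2 - 4a(1-\beta)(a^2+1)x + a^2(1+\beta)^2 \leq 0$, and the two roots of this quadratic are exactly the stated bounds for $x^{**}_{\mu,\beta}$ and $x^{*}_{\mu,\beta}$, with discriminant $16a^2(a^2+1)[(1-\beta)^2(a^2+1)-(1+\beta)^2]$ positive under the same hypothesis on $\beta$. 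Hence no zero of $t_\beta(\cdot;\mu)$ lies strictly between these two values for any $\mu>0$. Since $x^{**}_{\mu,\beta}$ and $x^{*}_{\mu,\beta}$ depend continuously on $\mu$ and satisfy $x^{**}_{\mu,\beta}\to 0$ and $x^{*}_{\mu,\beta}\to a$ as $\mu\to 0^+$ (by the analogue of Theorem \ref{thm:detailed_t_x}(vi)), neither root can cross the forbidden open interval. The two bounds therefore hold uniformly for $\mu \leq \tau_\beta$.

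The main obstacle is getting the algebra in Part (iii) to align exactly: producing the closed form $q_\beta(\tau_0) = (a^2+1)(1-\beta)^2/(1+\beta)^2 - 1$ and linking its positivity cleanly to $\beta < (\sqrt{a^2+1}-1)/(\sqrt{a^2+1}+1)$. The asymptotic $q_\beta(\mu) \to -\infty$ also requires careful tracking of the leading orders of $x_{\mathrm{lb},\mu,\beta}$. Part (iv) is comparatively elegant once the AM-GM bound is spotted, and the non-crossing argument reduces to continuity of implicit roots.
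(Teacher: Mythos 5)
Your proof is correct, and it fills in a proof that the paper itself dispatches in one line (``similar to the proofs of Theorems~\ref{thm:detailed_r_y} and~\ref{thm:detailed_t_x}''). Parts (i) and (ii) are routine and agree with the paper's template. For Part (iii), you establish $q_\beta\to-\infty$ by direct asymptotics of $x_{\mathrm{lb},\mu,\beta}\sim C\mu^{2/3}$ and comparing the $O(\mu^{-2/3})$ and $O(\mu^{-1})$ terms, whereas the paper (for the $\beta=0$ analogue, Theorem~\ref{thm3:4}) instead compares $q(\mu)=t(x_{\mathrm{lb}};\mu)\leq t(\sqrt{\mu(a^2+1)};\mu)<0$ using the explicit sign result of Theorem~\ref{thm3:2} on the special point $\sqrt{\mu(a^2+1)}$; both routes are valid and your closed form $q_\beta(\tau_0)=(a^2+1)(1-\beta)^2/(1+\beta)^2-1$ cleanly produces the hypothesis on $\beta$, which is where the theorem's threshold $\beta<(\sqrt{a^2+1}-1)/(\sqrt{a^2+1}+1)$ actually comes from.

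The genuine difference is Part~(iv). The paper's analogue (Theorem~\ref{thm3:6}) tracks the sign of $dx/d\mu$ via the implicit function theorem, locates the $\mu$ at which $x_\mu^{**}$ (resp.\ $x_\mu^*$) crosses $\sqrt{\mu(a^2+1)}$ and turns around, and reads off the extremal value there. Your AM--GM bound $(\mu(a^2+1)+x^2)^2\geq 4\mu(a^2+1)x^2$ produces a $\mu$-independent lower envelope $L(x)$, and the ``forbidden interval'' $\{L\geq 0\}$ immediately yields both bounds with the discriminant condition exactly matching the same constraint on $\beta$. This is a cleaner, more global argument: it avoids any derivative-sign bookkeeping and shows directly that no zero of $t_\beta(\cdot;\mu)$ can sit strictly inside the interval for \emph{any} $\mu>0$, after which continuity of $x_{\mu,\beta}^{**},x_{\mu,\beta}^{*}$ and their limits as $\mu\to0^+$ pin them to the two sides. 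One small point to state explicitly if you write this up: you should note $L(x)>0$ \emph{strictly} on the open interval and that AM--GM is tight only at $x^2=\mu(a^2+1)$, so $t_\beta$ is in fact strictly positive on the open interval for every $\mu$, which is what the no-crossing argument needs; and you should verify that the right endpoint of the forbidden interval is $<a$ (it is, since $(1-\beta)<(1+\beta)$), so $\lim_{\mu\to0^+}x_{\mu,\beta}^*=a$ starts on the correct side. Also note the paper's statement of part~(iii) contains a typo (``$r_\beta$'' should read ``$t_\beta$''), which you correctly read through.
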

\begin{lemma}\label{lemma:no_cross_x}
Under the same setting as Corollary \ref{cor:boundary},
    \begin{align*}
        \max_{\mu\leq \tau}x_{\mu,\eps}^{**} <\min_{\mu>0} x_{\mu,\eps}^{*} 
    \end{align*}
\end{lemma}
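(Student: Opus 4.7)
The plan is to reduce to the worst-case perturbation $\eps = \beta a\mu$, for which the analogous separation is already established in Theorem~\ref{thm_r_x_beta:4}, and then extend it to all admissible perturbations via a monotonicity argument in $\eps$.

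First, fix $\mu>0$ and view $t_\eps(x;\mu)$ from \eqref{eq:x_eps} as a family of functions indexed by $\eps\in[0,\beta a\mu]$. A direct computation gives
\[
\frac{\partial t_\eps}{\partial \eps}(x;\mu) = -\frac{1}{\mu x} - \frac{2(\mu a+\eps)}{\mu\bigl(\mu(a^2+1)+x^2\bigr)^2} < 0
\]
for all $x>0$; that is, $t_\eps$ is pointwise strictly decreasing in $\eps$. Combining this with the shape of $t_\eps$ (decreasing outside an interval $[x_{\mathrm{lb},\mu,\eps},x_{\mathrm{ub},\mu,\eps}]$, increasing inside, as in Theorem~\ref{thm_r_x_beta:2}) and implicit differentiation at each root, $dx/d\eps=-(\partial_\eps t_\eps)/(\partial_x t_\eps)$, one gets $\frac{d}{d\eps}x^{**}_{\mu,\eps}>0$ on the increasing branch and $\frac{d}{d\eps}x^{*}_{\mu,\eps}<0$ on the rightmost decreasing branch. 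Hence for every admissible $\eps\leq\beta a\mu$,
\[
x^{**}_{\mu,\eps}\ \leq\ x^{**}_{\mu,\beta},\qquad x^{*}_{\mu,\eps}\ \geq\ x^{*}_{\mu,\beta},
\]
where the right-hand sides refer to the worst-case system \eqref{eq:y_beta}--\eqref{eq:x_beta}.

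Second, apply Theorem~\ref{thm_r_x_beta:4}:
\[
\max_{\mu\leq\tau_\beta}x^{**}_{\mu,\beta}\ \leq\ L(\beta,a)\ <\ U(\beta,a)\ \leq\ \min_{\mu>0}x^{*}_{\mu,\beta},
\]
with $L,U$ the explicit closed-form bounds from that theorem. The strict gap $L<U$ is exactly the condition $\bigl(\tfrac{1+\beta}{1-\beta}\bigr)^2<a^2+1$, which is ensured by the standing hypothesis in Corollary~\ref{cor:boundary} (with strict inequality on the interior; the boundary case is handled by a continuity/limit argument). Chaining with the monotonicity gives $\max_{\mu\leq\tau}x^{**}_{\mu,\eps}\leq L(\beta,a)<U(\beta,a)\leq\min_{\mu>0}x^{*}_{\mu,\eps}$, which is the desired inequality.

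\emph{Main obstacle.} The lemma ranges $\mu$ over $(0,\tau]$ (the unperturbed critical value), while Theorem~\ref{thm_r_x_beta:4} uses the $\beta$-perturbed critical value $\tau_\beta$, and a priori $\tau_\beta$ may be smaller than $\tau$, so for $\mu\in(\tau_\beta,\tau]$ the point $x^{**}_{\mu,\beta}$ may fail to exist. To close this gap, I would track the branch $\eps\mapsto x^{**}_{\mu,\eps}$ starting from $\eps=0$ (where existence is guaranteed by Theorem~\ref{thm3:5} since $\mu<\tau$): the only way the branch can disappear is through a saddle-node bifurcation in which $x^{**}$ merges with $x^{***}$ at $x=x_{\mathrm{lb},\mu,\eps}$, which itself admits a uniform upper bound via the explicit expression in Theorem~\ref{thm_r_y_eps:2}. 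Thus the bound $x^{**}_{\mu,\eps}\leq L(\beta,a)$ persists throughout the range where $x^{**}_{\mu,\eps}$ is defined, while the lower bound on $x^{*}_{\mu,\eps}$ is unconditional since $x^{*}_{\mu,\eps}$ always exists (it is the root of $t_\eps$ on the far decreasing branch, which never disappears). Carrying out this bifurcation bookkeeping carefully is the only delicate step; the computation of the explicit monotonicity and the appeal to Theorem~\ref{thm_r_x_beta:4} are routine.
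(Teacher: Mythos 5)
Your reduction to the worst-case $\beta$-system, the use of Theorem~\ref{thm_r_x_beta:4}, and the $\eps$-monotonicity relations $x^{**}_{\mu,\eps}\leq x^{**}_{\mu,\beta}$ and $x^{*}_{\mu,\eps}\geq x^{*}_{\mu,\beta}$ all match the paper's argument (the paper derives the latter by a pointwise comparison of $t_\eps,t_\beta$ in the proof of Corollary~\ref{cor:boundary}, while you use $\partial t_\eps/\partial\eps<0$ plus implicit differentiation --- a perfectly fine variant). The gap is in your ``main obstacle'' paragraph, and it is a real gap, but the direction of your worry is reversed. You fear $\tau_\beta<\tau$, which would shrink the valid range; in fact $\tau_\beta>\tau$. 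This is what the paper invokes (``Because $\tau_\beta>\tau$'') and it follows directly from the same pointwise comparison that drives your monotonicity step: $r_\beta(y;\mu)>r(y;\mu)$ (equivalently $t_\beta(x;\mu)<t(x;\mu)$) for all $y,\mu>0$, so the local maximum of $r_\beta$ dominates that of $r$, i.e.\ $p_\beta(\mu)>p(\mu)$ in the notation of Theorem~\ref{thm2:4}; since both are decreasing in $\mu$ and $p(\tau)=0$, the zero of $p_\beta$ sits to the right, $\tau_\beta>\tau$. Concretely, the $\beta$-perturbation shifts $r$ upward and $t$ downward, which widens (not narrows) the window of $\mu$ with three stationary points.

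Once you have $\tau_\beta>\tau$, the set $\{\mu\leq\tau\}$ is contained in $\{\mu\leq\tau_\beta\}$ where Theorem~\ref{thm_r_x_beta:4} applies, the branch $x^{**}_{\mu,\beta}$ exists throughout, and the chain
\[
\max_{\mu\leq\tau}x^{**}_{\mu,\eps}\;\leq\;\max_{\mu\leq\tau}x^{**}_{\mu,\beta}\;\leq\;\max_{\mu\leq\tau_\beta}x^{**}_{\mu,\beta}\;<\;\min_{\mu>0}x^{*}_{\mu,\beta}\;\leq\;\min_{\mu>0}x^{*}_{\mu,\eps}
\]
closes immediately. The ``saddle-node bifurcation bookkeeping'' you sketch is therefore addressing a case that never arises; dropping it and replacing it with the one-line monotonicity comparison for $\tau$ vs.\ $\tau_\beta$ gives a clean and complete proof matching the paper's.
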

\section{Technical Proofs}
\subsection{Proof of Theorem \ref{thm:gd}}
\begin{proof}
    For the sake of completeness, we have included the proof here. Please note that this proof can also be found in \cite{nesterov2018lectures}.
    \begin{proof}
        We use the fact that $f$ is $L$-smooth function if and only if for any $W,Y\in \text{dom}(f)$
        \begin{align*}
            f(W)\leq f(Y)+\langle \nabla f(Y),Y-W\rangle+\frac{L}{2}\|Y-W\|_2^2
        \end{align*}
        Let $W = W^{t+1}$ and $Y = W^t$, then using the updating rule $W^{t+1} = W^t -\frac{1}{L}\nabla f(W^t)$
        \begin{align*}
            f(W^{t+1})\leq & f(W^t)+\langle \nabla f(W^t),W^{t+1}-W^t\rangle+\frac{L}{2}\|W^{t+1}-W^t\|_2^2
            \\ = & f(W^t) -\frac{1}{L}\|\nabla f(W^t)\|_2^2 +\frac{1}{2L}\|\nabla f(W^t)\|_2^2
            \\ = & f(W^t) -\frac{1}{2L}\|\nabla f(W^t)\|_2^2
        \end{align*}
        Therefore,
        \begin{align*}
            \min_{0\leq t\leq n-1}\|\nabla f(W^t)\|_2^2\leq \frac{1}{n}\sum_{t = 0}^{n-1} \|\nabla f(W^t)\|_2^2\leq \frac{2L(f(W^0)-f(W^n))}{n}\leq \frac{2L(f(W^0)-f(W^*))}{n}
        \end{align*}
        \[\min_{0\leq t\leq n-1}\|\nabla f(W^t)\|_2^2\leq \frac{2L(f(W^0)-f(W^*))}{n}\leq \eps^2\Rightarrow n\geq \frac{2L(f(W^0)-f(W^*))}{\eps^2}\]
    \end{proof}
\end{proof}

\subsection{Proof of Theorem \ref{thm:detailed_r_y}}
\begin{proof}
    \begin{enumerate}[label = (\roman*)]
        \item For any $ \mu>0$,
        \begin{align*}
            &\lim_{y\rightarrow 0^+}r(y;\mu) = \lim_{y\rightarrow 0^+}\frac{a}{y}-\frac{a^2 }{\mu}-(a^2+1) = \infty
            \\ &r(\frac{a}{a^2+1}) = -\frac{\mu a^2}{(\frac{a}{a^2+1})^2+\mu } <0.
        \end{align*}
    \item 
    \begin{align*}
        r(\sqrt{\mu},\mu) =& \frac{a}{\sqrt{\mu}} -\frac{a^2}{4\mu} -(a^2+1)
        \\ = &-\frac{a^2}{4}(\frac{1}{\sqrt{\mu}} - \frac{2}{a})^2-a^2< 0
    \end{align*}
    \item 
    \begin{align*}
        \frac{d r(y;\mu)}{d y} =& -\frac{a}{y^2}+\frac{4a^2\mu y}{(y^2+\mu)^3} 
        \\ = & \frac{4a^2\mu y^3-a(y^2+\mu)^3}{y^2(y^2+\mu)^3}
        \\=& \frac{a((4a\mu)^{2/3}y^2+ (4a\mu)^{1/3}y(y^2+\mu)+(y^2+\mu)^2)((4a\mu)^{1/3}y-y^2-\mu)}{y^2(y^2+\mu)^3}
    \end{align*}
    For $\mu\geq \frac{a^2}{4}$, $((4a\mu)^{1/3}y-y^2-\mu)<0\Leftrightarrow \frac{d r(y;\mu)}{d y}<0$.\\
    For $\mu<\frac{a^2}{4}$, $ y_{\mathrm{lb}}<y<y_{\mathrm{ub}}, ((4a\mu)^{1/3}y-y^2-\mu)>0\Leftrightarrow \frac{d r(y;\mu)}{d y}>0$.
    For $\mu<\frac{a^2}{4}$, $ y<y_{\mathrm{lb}}$ or $y_{\mathrm{ub}}<y, ((4a\mu)^{1/3}y-y^2-\mu)\leq 0\Leftrightarrow \frac{d r(y;\mu)}{d y}\leq 0$.
    
    Note that 
    $$\frac{dr(y;\mu)}{d\mu} = 0\Leftrightarrow ((4a\mu)^{1/3}y-y^2-\mu) = 0\Leftrightarrow (4a\mu)^{1/3}=y+\frac{\mu}{y}$$
    The intersection between line $(4a\mu)^{1/3}$ and function $y+\frac{\mu}{y}$ are exactly $y_{\mathrm{lb}}$ and $y_{\mathrm{ub}}$, and $y_{\mathrm{lb}}<\sqrt{\mu}<y_{\mathrm{ub}}$.
    \item Note  that for $0<\mu<\frac{a^2}{4}$,
    \[\frac{\partial r}{\partial \mu} =-a^2\frac{y^2-\mu}{(\mu+y^2)^3} \quad \text{ and }\quad y_{\mathrm{lb}}<\sqrt{\mu}<y_{\mathrm{ub}}\] 
    then $\left.\frac{\partial r}{\partial \mu}\right|_{y =y_{\mathrm{ub}}}<0$.
    Let $p(\mu) = r(y_{\mathrm{ub}},\mu)$, because $\frac{\partial r}{\partial y}|_{y=y_{\mathrm{ub}}} = 0$, then 
    \begin{align*}
        \frac{d p(\mu)}{d\mu} =& \frac{d r(y_{\mathrm{ub}},\mu)}{d\mu}  =  \left.\frac{\partial r}{\partial y}\right|_{y=y_{\mathrm{ub}}} \frac{dy_{\mathrm{ub}}}{d\mu} +\left.\frac{\partial r}{\partial \mu}\right|_{y =y_{\mathrm{ub}}} = \left.\frac{\partial r}{\partial \mu}\right|_{y =y_{\mathrm{ub}}}
         < 0 
    \end{align*}
    Also note that when $\mu = \frac{a^2}{4}$, $y_{\mathrm{ub}} = \sqrt{\mu}$, $p(\mu) = r({y_{\mathrm{ub}}},\mu) = r(\sqrt{\mu},\mu ) <0$, and also if $\mu<\frac{a^2}{4}$, then 
    \begin{align*}
    y_{\mathrm{ub}}<\frac{(4\mu)^{1/3}}{2}2 a^{1/3} = (4\mu a )^{1/3}
    \end{align*}
    Thus,
    \begin{align*}
        r((4\mu a)^{1/3},\mu) =& \frac{a}{(4\mu a)^{1/3}} - \frac{\mu a^2}{((4\mu a)^{2/3}+\mu)^2}-(a^2+1)
        \\ = &\frac{a}{(4\mu a)^{1/3}} - \frac{ a^2}{(\mu)^{1/3}((4a)^{2/3}+\mu^{1/3})^2}-(a^2+1)
        \\ > & \frac{1}{\mu^{1/3}}(\frac{a}{(4a)^{1/3}}-\frac{a^2}{(4a)^{4/3}})-(a^2+1)
    \end{align*}
    
    Because $\frac{a}{(4a)^{1/3}}>\frac{a^2}{(4a)^{4/3}}$, it is easy to see when $\mu\rightarrow 0$, $r((4\mu a)^{1/3},\mu) \rightarrow \infty$. We know $r(y_{\mathrm{ub}},\mu)> r((4\mu a)^{1/3},\mu)\rightarrow \infty$ as $\mu\rightarrow 0$ because of the monotonicity of $r(y;\mu)$ in Theorem \ref{thm2:3}. Combining all of these, i.e.
    \begin{align*}
        &\frac{d p(\mu)}{d\mu} <0, \quad  \lim_{\mu\rightarrow 0^+}p(\mu) =\infty, \quad  p(\frac{a^2}{4})<0
    \end{align*}
    There exists a $\tau<\frac{a^2}{4}$ such that $p(\tau) =0$
    \item From Theorem \ref{thm2:4}, for $\mu>\tau$, then $p(\mu)= r(y_{\mathrm{ub}},\mu)>0$, and for $\mu = \tau$, then $p(\mu)= r(y_{\mathrm{ub}},\mu) =0$. For $\mu < \tau$, then $p(\mu)= r(y_{\mathrm{ub}},\mu) <0$, combining Theorem \ref{thm2:1},\ref{thm2:3}, we get the conclusions.
    \item By Theorem \ref{thm2:5}, $\forall \mu<\tau$, there exists three stationary points such that $0<y_{\mu}^{*}<y_{\mathrm{lb}}<\sqrt{\mu}<y_{\mu}^{**}<y_{\mathrm{ub}}<y_{\mu}^{***}$. Because $\left.\frac{d r(y;\mu)}{dy}\right|_{y=y_{\mathrm{lb}}} = \left.\frac{d r(y;\mu)}{dy}\right|_{y=y_{\mathrm{ub}}} = 0$, then 
    \begin{align*}
        \left.\frac{d r(y;\mu)}{dy}\right|_{y=y_{\mu}^{*}} \ne 0, \quad \left.\frac{d r(y;\mu)}{dy}\right|_{y=y_{\mu}^{**}} \ne 0, \quad \left.\frac{d r(y;\mu)}{dy}\right|_{y=y_{\mu}^{***}} \ne 0
    \end{align*}
    By implicit function theorem \citep{dontchev2009implicit}, for solution to equation $r(y;\mu) = 0$, there exists a unique continuously differentiable function such that $y= y(\mu)$ and satisfies $r(y(\mu),\mu) = 0$. Therefore, 
    \begin{align*}
\frac{\partial r}{\partial \mu} =& -a^2\frac{y^2-\mu}{(\mu+y^2)^3}
,\quad \frac{\partial r}{\partial y} = -\frac{a}{y^2}+\frac{4a^2\mu y}{(y^2+\mu)^3},
\quad \frac{d y(\mu)}{d \mu} = -\frac{\partial  r/\partial \mu}{\partial r/\partial y}
    \end{align*}
Therefore by Theorem \ref{thm2:3},
\[\left.\frac{d y}{d\mu}\right|_{y=y_{\mu}^{*}} > 0\quad  \left.\frac{d y}{d\mu}\right|_{y=y_{\mu}^{**}} > 0 \quad \left.\frac{d y}{d\mu}\right|_{y=y_{\mu}^{***}} < 0\]
Because $\lim_{\mu\rightarrow0^+} y_{\mathrm{lb}} = \lim_{\mu\rightarrow0^+} y_{\mathrm{ub}} = 0$, then $\lim_{\mu\rightarrow0^+} y_{\mu}^{*} = \lim_{\mu\rightarrow0^+} y_{\mu}^{**} = 0$. Let us consider $r(\frac{a}{a^2+1}(1-c\mu),\mu)$ where $c  = 32\frac{(a^2+1)^3}{a^2} $ and $\mu<\frac{1}{2c}$

\begin{align*}
    &r(\frac{a}{a^2+1}(1-c\mu),\mu) 
    \\=& \frac{a}{\frac{a}{a^2+1}(1-c\mu)}-\frac{\mu a^2}{(\frac{a^2}{(a^2+1)^2}(1-c\mu)^2+\mu)^2}-(a^2+1)
    \\ = &(a^2+1)(\frac{c\mu}{1-c\mu})-\frac{\mu a^2}{(\frac{a^2}{(a^2+1)^2}(1-c\mu)^2+\mu)^2}
    \\ \geq &c(a^2+1)\mu-\frac{\mu a^2}{(\frac{a^2}{(a^2+1)^2}(1-c\mu)^2)^2}
    \\ = & c(a^2+1)\mu-\frac{16 (a^2+1)^4}{a^2}\mu
    \\ = & \frac{16 (a^2+1)^4}{a^2}\mu>0
\end{align*}
By Theorem \ref{thm2:3}, then $\frac{a}{a^2+1}(1-c\mu)<y_{\mu}^{***}$, then
\[\frac{a}{a^2+1} = \lim_{\mu\rightarrow0^+}\frac{a}{a^2+1}(1-c\mu),\mu)\leq \lim_{\mu\rightarrow0^+} y_{\mu}^{***}\leq \frac{a}{a^2+1}\]
Consequently,
\[\lim_{\mu\rightarrow0^+}y_{\mu}^{***}=  \frac{a}{a^2+1}\]
\end{enumerate} 
\end{proof}

\subsection{Proof of Theorem \ref{thm:detailed_t_x}}
\begin{proof}
    \begin{enumerate}[label = (\roman*)]
        \item For $\mu>0$, 
        \begin{align*}
            & \lim_{x\rightarrow 0^+} t(x;\mu) = \lim_{x\rightarrow 0^+} \frac{a}{x} -\frac{a^2}{\mu(a^2+1)^2}-1 =\infty
            \\ & t(a,\mu) = -\frac{\mu a^2}{(\mu(a^2+1)+a^2)^2}<0
        \end{align*}
        \item 
        \begin{align*}
            t(\sqrt{\mu(a^2+1)},\mu) = & \frac{a}{\sqrt{a^2+1}}\frac{1}{\sqrt{\mu}}-\frac{a^2}{4\mu(a^2+1)^2}-1
        \end{align*}
        If $t(\sqrt{\mu(a^2+1)},\mu) = 0$, then 
        \[\frac{1}{\sqrt{\mu}} = 2\frac{(a^2+1)^{3/2}}{a}\pm 2(a^2+1)\Rightarrow \mu = \left(\frac{a(\sqrt{a^2+1}\mp a)}{2(a^2+1)}\right)^2\]
        so when $\mu< \left(\frac{a(\sqrt{a^2+1}- a)}{2(a^2+1)}\right)^2$ or $\mu> \left(\frac{a(\sqrt{a^2+1}+ a)}{2(a^2+1)}\right)^2$, then $t(\sqrt{\mu(a^2+1)},\mu) < 0$
        \item 
        \begin{align*}
            &\frac{dt(x,\mu)}{dx} \\=& -\frac{a}{x^2}+\frac{4\mu a^2x}{(\mu(a^2+1)+x^2)^3}
            \\ = & \frac{4\mu a^2x^3-a(\mu(a^2+1)+x^2)^3}{x^2(\mu(a^2+1)+x^2)^3}
            \\=& \frac{a((\mu(a^2+1)+x^2)^2+(\mu(a^2+1)+x^2)(4\mu a)^{1/3}x+(4\mu a)^{2/3}x^2)((4\mu a )^{1/3}x-\mu(a^2+1)-x^2)}{x^2(\mu(a^2+1)+x^2)^3}
        \end{align*}
        For $\mu>\frac{a^2}{4(a^2+1)^3}$, then $(4\mu a )^{1/3}x-\mu(a^2+1)-x^2 < 0\Leftrightarrow  \frac{dt(x,\mu)}{dx} < 0$. For $\mu<\frac{a^2}{4(a^2+1)^3}$, and $x_{\mathrm{lb}}<x<x_{\mathrm{ub}}$, then $(4\mu a )^{1/3}x-\mu(a^2+1)-x^2 > 0\Leftrightarrow  \frac{dt(x,\mu)}{dx} > 0$, For $\mu<\frac{a^2}{4(a^2+1)^3}$, $x<x_{\mathrm{lb}}$ or $x>x_{\mathrm{ub}}$, $(4\mu a )^{1/3}x-\mu(a^2+1)-x^2 < 0\Leftrightarrow  \frac{dt(x,\mu)}{dx} < 0$.

        We use the same argument as before to show that 
        \[x_{\mathrm{lb}}<\sqrt{\mu(a^2+1)}< x_{\mathrm{ub}} \]
        \item Note that for $0<\mu<\frac{a^2}{4(a^2+1)^3}$
        \[\frac{\partial t}{\partial \mu} = -a^2 \frac{x^2-\mu(a^2+1)}{(\mu(a^2+1)+x^2)^3}\quad \text{and}\quad x_{\mathrm{lb}}<\sqrt{\mu(a^2+1)}<x_{\mathrm{ub}}\]
        then $\left.\frac{\partial t}{\partial \mu}\right|_{x = x_{\mathrm{lb}}}>0$. Let $q(\mu) = t(x_{\mathrm{lb}},\mu)$, because  $\left.\frac{\partial t}{\partial x}\right|_{x = x_{\mathrm{lb}}} = 0$, then 
        \begin{align*}
            \frac{d q(\mu)}{d\mu} = & \frac{dt(x_{\mathrm{lb}},\mu)}{d\mu} 
            = \left.\frac{\partial t}{\partial x}\right|_{x = x_{\mathrm{lb}}}\frac{d x_{\mathrm{lb}}}{d\mu}+ \left.\frac{\partial t}{\partial \mu}\right|_{x = x_{\mathrm{lb}}}
            = \left.\frac{\partial t}{\partial \mu}\right|_{x = x_{\mathrm{lb}}}>0
        \end{align*}
        Note that $\mu = \frac{a^2}{4(a^2+1)^3}$, $x_{\mathrm{ub}}= x_{\mathrm{lb}} = \frac{(4\mu a)^{1/3}}{2}$, $t(\frac{(4\mu a)^{1/3}}{2},\frac{a^2}{4(a^2+1)^3}) =\frac{a}{(4\mu a)^{1/3}}-1>0$. When $\mu< \left(\frac{a(\sqrt{a^2+1}- a)}{2(a^2+1)}\right)^2$, then $t(\sqrt{\mu(a^2+1)},\mu)<0$ by Theorem \ref{thm3:2}. It implies that $q(\mu)<0$ when $\mu\rightarrow0^+$. By Theorem \ref{thm3:3}, $q(\mu) = t(x_{\mathrm{lb}},\mu)<t(\sqrt{\mu(a^2+1)},\mu)<0$. Combining all of the theses, i.e.
        \begin{align*}
        &\frac{d q(\mu)}{d\mu} >0, \quad  \lim_{\mu\rightarrow 0^+}q(\mu) <0, \quad  q(\frac{a^2}{4(a^2+1)^3})>0
    \end{align*}
    There exists a $\tau<\frac{a^2}{4(a^2+1)^3}$, $q(\tau) = 0$. Such $\tau$ is the same as in Theorem \ref{thm2:4}.
    \item We follow the same proof from the proof of Theorem \ref{thm2:5}.
    \item By Theorem \ref{thm3:5}, $\forall \mu<\mu_0$, there exists three stationary points such that $0<x_{\mu}^{***}<x_{\mathrm{lb}}< x_{\mu}^{**}<x_{\mathrm{ub}}<x_{\mu}^{*}<a$. Because $\left.\frac{d t(x;\mu)}{dx}\right|_{x=x_{\mathrm{lb}}} = \left.\frac{d t(x;\mu)}{dx}\right|_{x=x_{\mathrm{ub}}} = 0$, then 
    \begin{footnotesize}
        \begin{align*}
            \left.\frac{d t(x;\mu)}{dx}\right|_{x=x_{\mu}^{*}} \ne 0, \quad \left.\frac{d t(x;\mu)}{dx}\right|_{x=x_{\mu}^{**}} \ne 0 , \quad\left.\frac{d t(x;\mu)}{dx}\right|_{x=x_{\mu}^{***}} \ne 0
        \end{align*}
    \end{footnotesize}
    
    By implicit function theorem \citep{dontchev2009implicit}, for solutions to equation $t(x;\mu) = 0$, there exists a unique continuously differentiable function such that $x = x(\mu)$ and satisfies $t(x(\mu),\mu) = 0$. Therefore,
    \[\frac{d x}{d \mu} =- \frac{\partial t/\partial\mu}{\partial t/\partial x} = a^2\frac{ \frac{x^2-\mu(a^2+1)}{(\mu(a^2+1)+x^2)^3}}{-\frac{a}{x^2}+\frac{4\mu a^2x}{(\mu(a^2+1)+x^2)^3}}\]
    Therefore, by Theorem \ref{thm3:3}
    \[\left.\frac{d x}{d\mu}\right|_{x=x_{\mu}^{*}} < 0\quad  \left.\frac{d x}{d\mu}\right|_{x=x_{\mu}^{***}} > 0\]
    Because $0<x_{\mu}^{***}<x_{\mathrm{lb}}< x_{\mu}^{**}<x_{\mathrm{ub}}$ and $\lim_{\mu\rightarrow 0^+}x_{\mathrm{lb}} = \lim_{\mu\rightarrow 0^+}x_{\mathrm{ub}} = 0$.
    \[\lim_{\mu\rightarrow 0} x_{\mu}^{**} = \lim_{\mu\rightarrow 0} x_{\mu}^{***} = 0\]
    Let us consider $t(a(1-c\mu),\mu)$ where $c = \frac{32}{a^2}$ and $\mu<\frac{1}{2c}$
    \begin{align*}
        &t(a(1-c\mu);\mu) 
        \\ = & \frac{a}{a(1-c\mu)} - \frac{\mu a^2}{(\mu(a^2+1)+a^2(1-c\mu)^2)^2}-1
        \\ = &\frac{c\mu}{1-c\mu} - \frac{\mu a^2}{(\mu(a^2+1)+a^2(1-c\mu)^2)^2}
        \\ \geq & c\mu- \frac{\mu a^2}{(a^2(1-c\mu)^2)^2}
        \\ \geq & c\mu-\frac{16}{a^2}\mu>0
    \end{align*}
    By Theorem \ref{thm3:3}. It implies 
    \[a(1-c\mu)\leq x_{\mu}^{*}\]
    taking $\mu\rightarrow 0^+$ on both side,
    \[a = \lim_{\mu\rightarrow 0^+}a(1-c\mu)\leq \lim_{\mu\rightarrow 0^+}x_{\mu}^{*}\leq a\]
    Hence, $\lim_{\mu\rightarrow 0}x_{\mu}^* = a$.
    
    When $\mu = \tau$, because $t(x_{\mathrm{lb}};\mu) = 0$ and $x_{\mathrm{ub}}>\sqrt{\mu(a^2+1)}>x_{\mathrm{lb}}$, $t(x;\mu)$ is increasing function between $[x_{\mathrm{lb}},x_{\mathrm{ub}}]$ then $t(\sqrt{\mu(a^2+1)};\mu) > t(x_{\mathrm{lb}};\mu)=0$. Moreover, $t(\sqrt{\mu(a^2+1)},\mu)$, $x_{\mathrm{lb}}$ and $x_{\mu}^{**}$ are continuous function w.r.t $\mu$, $\exists \delta>0 $ which is really small, such that $\mu = \tau-\delta$ and $t(\sqrt{\mu(a^2+1)},\mu)>0$, $t(x_{\mathrm{lb}},\mu)<0$ (by Theorem \ref{thm3:4}) and $x_{\mu}^{**}>x_{\mathrm{lb}}$, hence $\left.\frac{d x}{d\mu}\right|_{x=x_{\mu}^{**}} < 0$. It implies when $\mu$ decreases, then $x_{\mu}^{**}$ increases. This relation holds until $x_\mu^{**}=\sqrt{\mu(a^2+1)}$
    \begin{align*}
        &t(x_\mu^{**},\mu) = t(\sqrt{\mu(a^2+1)},\mu) = 0\\ \Rightarrow & \mu =  \left(\frac{a(\sqrt{a^2+1}- a)}{2(a^2+1)}\right)^2
    \end{align*}
    and $\sqrt{\mu(a^2+1)} = \frac{a(\sqrt{a^2+1}-a)}{2\sqrt{a^2+1}}$. Note that when $\mu< \left(\frac{a(\sqrt{a^2+1}- a)}{2(a^2+1)}\right)^2$, $t(\sqrt{\mu(a^2+1)},\mu)<0$, it implies that $x_{\mu}^{**}>\sqrt{\mu(a^2+1)}$ and $\left.\frac{d x}{d\mu}\right|_{x=x_{\mu}^{**}} > 0$, thus decreasing $\mu$ leads to decreasing $x_{\mu}^{**}$. We can conclude 
    \[\max_{\mu\leq \tau}x_{\mu}^{**}\leq \frac{a(\sqrt{a^2+1}-a)}{2\sqrt{a^2+1}}\]
    Note that $\forall \mu \text{ s.t.} \left(\frac{a(\sqrt{a^2+1}- a)}{2(a^2+1)}\right)^2<\mu<\tau$, $x_{\mu}^{**}<\left(\frac{a(\sqrt{a^2+1}- a)}{2(a^2+1)}\right)^2$, so $t(\left(\frac{a(\sqrt{a^2+1}- a)}{2(a^2+1)}\right)^2,\mu)\geq 0.$ 
    
    Note that when $\mu>\frac{a^2}{a^2+1}$, i.e. $(x_{\mu}^*)^2\geq \mu (a^2+1)$ then 
    \begin{align*}
        \left.\frac{dx }{d\mu}\right|_{x = x_{\mu}^*}>0
    \end{align*}
    It implies that when $\mu$ decreases, $x_{\mu}^*$ also decreases. It holds true until $x^*_{\mu} = \sqrt{\mu(a^2+1)}$. The same analysis can be applied to $x_{\mu}^*$ like above, we can conclude that 
    \[\min_{\tau} x_{\mu}^{*} = \frac{a(\sqrt{a^2+1}+a)}{2\sqrt{a^2+1}} \]
    Hence
 \[\max_{\mu\leq\tau}x_{\mu}^{**}\leq \frac{a(\sqrt{a^2+1}-a)}{2\sqrt{a^2+1}} <\frac{a(\sqrt{a^2+1}+a)}{2\sqrt{a^2+1}}\leq \min_{\mu>0} x_{\mu}^{*}\]
    \end{enumerate}
\end{proof}
\subsection{Proof of Theorem \ref{thm:detailed_r_y_eps},\ref{thm:detailed_r_y_beta} and \ref{thm:detailed_t_x_beta}}
\begin{proof}
    The proof is similar to the proof of Theorem \ref{thm:detailed_r_y} and Theorem \ref{thm:detailed_t_x}. 
\end{proof}
\subsection{Proof of Lemma \ref{lemma:type_of_sp}}
\begin{proof}
    \begin{align*}
        \nabla^2 g_{\mu}(x,y) = \begin{pmatrix}\mu+y^2&2xy\\2xy&\mu(a^2+1)+x^2\end{pmatrix}
    \end{align*}
Let $\lambda_1(\nabla^2 g_{\mu}(x,y)),\lambda_2(\nabla^2 g_{\mu}(x,y))$ be the eigenvalue of matrix $\nabla^2 g_{\mu}(x,y)$, then 
\begin{align*}
    &\lambda_1(\nabla^2 g_{\mu}(x,y))+\lambda_2(\nabla^2 g_{\mu}(x,y)) \\=& \operatorname{Tr}(\nabla^2 g_{\mu}(x,y))=\mu+y^2+\mu(a^2+1)+x^2>0
\end{align*}
Now we calculate the product of eigenvalue
\begin{align*}
    &\lambda_1(\nabla^2 g_{\mu}(x,y))\cdot\lambda_2(\nabla^2 g_{\mu}(W)) \\=& \operatorname{det}(\nabla^2 g_{\mu}(W))
    \\ =& (\mu+y^2)(\mu(a^2+1)+x^2)-4x^2y^2
\\ = & \frac{\mu a}{x}\frac{\mu a}{y}-4x^2y^2 > 0
\\ \Leftrightarrow &(\frac{a\mu}{2})^{2/3}> xy
\\ \Leftrightarrow &(\frac{a\mu}{2})^{2/3}> \frac{a\mu}{y^2+\mu}y
\\ \Leftrightarrow &y+\frac{\mu}{y}> (4a\mu)^{1/3}
\end{align*}
Note that for $(x_\mu^{*},y_{\mu}^{*}),(x_\mu^{***},y_{\mu}^{***})$, they satisfy \eqref{eq:stationary_x} and \eqref{eq:stationary_y}, this fact is used in third equality and second ``$\Leftrightarrow$''. By \eqref{eq:grad_y<=0}, we know $\lambda_1(\nabla^2 g_{\mu}(x,y))\cdot\lambda_2(\nabla^2 g_{\mu}(x,y))>0$ for $(x_\mu^{*},y_{\mu}^{*}),(x_\mu^{***},y_{\mu}^{***})$, and $\lambda_1(\nabla^2 g_{\mu}(x,y))\cdot\lambda_2(\nabla^2 g_{\mu}(x,y))<0$ for $(x_\mu^{**},y_{\mu}^{**})$, then 
\[\lambda_1(\nabla^2 g_{\mu}(x,y))>0,\lambda_2(\nabla^2 g_{\mu}(x,y))>0\qquad \text{ for }(x_\mu^{*},y_{\mu}^{*}),(x_\mu^{***},y_{\mu}^{***})\]
\[\lambda_1(\nabla^2 g_{\mu}(x,y))<0 \text{ or }\lambda_2(\nabla^2 g_{\mu}(x,y))<0\qquad \text{ for }(x_\mu^{**},y_{\mu}^{**})\]
and 
\[\nabla g_{\mu}(x,y) = 0\]
Then $(x_\mu^{*},y_{\mu}^{*}),(x_\mu^{***},y_{\mu}^{***})$ are locally minima, $(x_\mu^{**},y_{\mu}^{**})$ is saddle point for $g_{\mu}(W)$. 
\end{proof}
\subsection{Proof of Lemma \ref{lemma:go_to_next_optimal}}
\begin{figure}[H]
  \centering
  \includegraphics[width=0.5\linewidth]{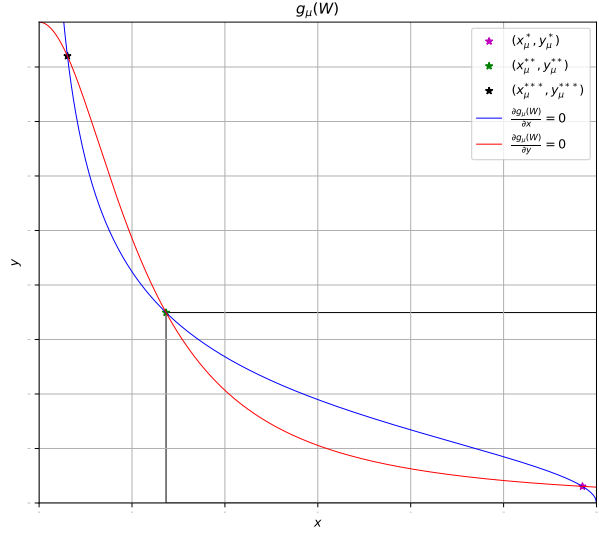}
  \caption{Stationary points when $\mu<\tau$}
\end{figure}

\begin{proof}
Let us define the functions as below
\begin{subnumcases}{}
    y_{\mu 1}(x) = \sqrt{\mu(\frac{a-x}{x})}& $0<x\leq a$ \label{eq:y1}\\
y_{\mu 2}(x) = \frac{\mu a}{\mu(a^2+1)+x^2}  &$0<x\leq a$\label{eq:y2}
\end{subnumcases}
\begin{subnumcases}{}
    x_{\mu 1}(y) = \frac{\mu a}{y^2+\mu}&  $0<y<\frac{a}{a^2+1}$ \label{eq:x1}\\
 x_{\mu 2}(y) = \sqrt{\mu(\frac{a}{y}-(a^2+1))}&  $0<y<\frac{a}{a^2+1}$\label{eq:x2}
\end{subnumcases}
with simple calculations, 
\[y_{\mu 1}\geq y_{\mu 2}\Leftrightarrow t(x;\mu)\geq 0\Leftrightarrow x \in(0,x_{\mu}^{***}]\cup [x_{\mu}^{**},x_{\mu}^{*}]\]
and 
\[x_{\mu 1}\geq x_{\mu 2}\Leftrightarrow r(y;\mu)\leq 0 \Leftrightarrow y\in [y_{\mu}^*,y_{\mu}^{**}]\cup[y_{\mu}^{***},\frac{a}{a^2+1})\]
Here we divide $B_\mu$ into three parts, $C_{\mu 1},C_{\mu 2},C_{\mu 3}$
\begin{align}
     C_{\mu 1} = & \{(x,y)|x_{\mu}^{**}< x\leq x_{\mu}^*,y_{\mu 1}<y<y_{\mu}^{**} \}\cup \{(x,y)|x_{\mu}^{*}< x\leq a,y_{\mu 2}<y<y_{\mu}^{**} \}\label{bd:C_1}
    \\C_{\mu 2} = & \{(x,y)|x_{\mu}^{**}< x\leq x_{\mu}^*,0\leq y<y_{\mu 2} \}\cup \{(x,y)|x_{\mu}^{*}< x\leq a,0\leq y<y_{\mu 1} \}\label{bd:C_2}
    \\C_{\mu 3} = & \{(x,y)|x_{\mu}^{**}< x\leq x_{\mu}^*,y_{\mu 2}\leq y\leq y_{\mu 1} \}\cup \{(x,y)|x_{\mu}^{*}< x\leq a,y_{\mu 1}\leq y\leq y_{\mu 2} \}\label{bd:C_3}
\end{align}    
Also note that 
\begin{align*}
    &\forall (x,y)\in C_{\mu 1}\Rightarrow \frac{\partial  g_{\mu}(x,y)}{\partial x}>0 ,  \frac{\partial  g_{\mu}(x,y)}{\partial y}>0\\
    & \forall (x,y)\in C_{\mu 2} \Rightarrow \frac{\partial  g_{\mu}(x,y)}{\partial x}<0 ,  \frac{\partial  g_{\mu}(x,y)}{\partial y}<0
\end{align*}
The gradient flow follows 
\[\begin{pmatrix}
    x'(t)\\y'(t)
\end{pmatrix} = -\begin{pmatrix}
     \frac{\partial  g_{\mu}(x(t),y(t))}{\partial x} \\ \frac{\partial  g_{\mu}(x(t),y(t))}{\partial y} 
\end{pmatrix} =-\nabla g_\mu(x(t),y(t))\]
then 
\begin{align}
     &\forall (x,y)\in C_{\mu 1} \Rightarrow 
     \begin{pmatrix}
    	x'(t)\\y'(t)
	 \end{pmatrix} <0,\quad 
	 \Norm{\nabla g_\mu}>0 \label{gf:direction_C1}\\
      &\forall (x,y)\in C_{\mu 2}\Rightarrow \begin{pmatrix}
    x'(t)\\y'(t)
\end{pmatrix} >0,\quad \Norm{\nabla g_\mu}>0 \label{gf:direction_C2}
\end{align}
Note that $\Norm{\nabla g_\mu}$ is not diminishing and bounded away from 0.
Let us consider the $(x(0),y(0))\in C_{\mu 1}$, since $\nabla g_{\mu}(x,y)\ne 0$, $-\nabla g_{\mu}(x,y)<0$ in \eqref{gf:direction_C1} and boundness of $C_{\mu 1}$, it implies there exists a finite $t_0>0$ such that
\begin{align*}
     (x(t_0),y(t_0))\in \partial C_{\mu 1},  (x(t),y(t))\in C_{\mu 1} \text{ for } 0\leq t<t_0
\end{align*}
where $\partial C_{\mu 1}$ is defined as
\begin{small}
    \[\partial C_{\mu 1}=\{(x,y)\vert x_{\mu}^{**}< x\leq x_{\mu}^*, y=y_{\mu 1} \}\cup \{(x,y)\vert x_{\mu}^{*}< x\leq a, y=y_{\mu 2} \}\subseteq C_{\mu 3}\]
\end{small}
For the same reason, if $(x(0),y(0))\in C_{\mu 2}$, there exists a finite time $t_1>0$,
\begin{align*}
      (x(t_0),y(t_0))\in \partial C_{\mu 2},  (x(t),y(t))\in C_{\mu 2} \text{ for } 0\leq t<t_1
\end{align*}
where $\partial C_{\mu 2}$ is defined as
\begin{small}
    \[\partial C_{\mu 2}=\{(x,y)\vert x_{\mu}^{**}< x\leq x_{\mu}^*, y=y_{\mu 2}\}\cup \{(x,y)\vert x_{\mu}^{*}< x\leq a, y=y_{\mu 1} \}\subseteq C_{\mu 3}\]
\end{small}
then by lemma \ref{lemma:stay_inside_C_3}, $\lim_{t\rightarrow \infty} (x(t),y(t)) = (x_{\mu}^*,y_{\mu}^*)$.
\end{proof}
\subsection{Proof of Lemma \ref{lemma:r_y}}
\begin{proof}
    This is just a result of the Theorem \ref{thm:detailed_r_y}.
\end{proof}
\subsection{Proof of Lemma \ref{lemma:smooth_para}}
\begin{proof}
    Note that 
    \begin{align*}
        \nabla^2 g_\mu(W) = \begin{pmatrix}\mu+y^2& 2xy\\2xy& \mu(a^2+1)+x^2\end{pmatrix} = \begin{pmatrix}\mu& 0\\0& \mu(a^2+1)\end{pmatrix}+\begin{pmatrix}y^2& 2xy\\2xy& x^2\end{pmatrix} 
    \end{align*}
    Let $\|\cdot\|_{\text{op}}$ is the spectral norm, and it satisfies triangle inequality
    \begin{align*}
        \ANorm{\nabla^2 g_{\mu}(W)}_{\text{op}}\leq& \ANorm{\begin{pmatrix}\mu& 0\\0& \mu(a^2+1)\end{pmatrix}}_{\text{op}}+ \ANorm{\begin{pmatrix}y^2& 2xy\\2xy& x^2\end{pmatrix} }_{\text{op}}
        \\ = & \mu (a^2+1)+\ANorm{\begin{pmatrix}y^2& 2xy\\2xy& x^2\end{pmatrix} }_{\text{op}}
    \end{align*}
    The spectral norm of the second term in area A is bounded by 
    \begin{align*}
        \max_{(x,y)\in A}\frac{(x^2+y^2)+\sqrt{(x^2+y^2)^2+12x^2y^2}}{2}\leq \frac{2a^2+\sqrt{4a^4+12a^4}}{2} = 3a^2
    \end{align*}
    We use $x^2\leq a^2,y^2\leq a^2$ in the inequality. Therefore,
    \begin{align*}
        \ANorm{\nabla^2 g_{\mu}(W)}_{\text{op}}\leq 3a^2+\mu(a^2+1)
    \end{align*}
Also, according to \cite{boyd2004convex, nesterov2018lectures}, for any $f$, if $\nabla^2 f$ exists, then $f$ is $L$ smooth if and only if $|\nabla^2 f|_{\text{op}} \leq L$. With this, we conclude the proof.
\end{proof}

\subsection{Proof of Lemma \ref{lemma:stay_inside_C_3}}
\begin{proof}
    First we prove $\forall t\geq 0,(x(t),y(t))\in C_{\mu 3}$, because if $(x(t),y(t))\notin C_{\mu 3}$, then there exists a finite $t$ such that 
    \[(x(t),y(t))\in \partial C_{\mu 3}\]
    where $\partial C_{\mu 3}$ is the boundary of $C_{\mu 3}$, defined as 
    \[\partial C_{\mu 3} =\{(x,y)\vert y = y_{\mu 1}(x) \text{ or } y = y_{\mu 2}(x), x_{\mu}^{**}<x\leq a\}\]
    W.L.O.G, let us assume $(x(0),y(0))\in \partial C_{\mu 3}$ and $(x(0),y(0))\ne (x_{\mu}^*,y_{\mu}^*)$. 
    Here are four different cases,
    \[\nabla g_{\mu}(x(t),y(t))=\left\{
    \begin{array}{ll}
         \begin{pmatrix}
        =0\\ >0
    \end{pmatrix}& \text{if } y(0) = y_{\mu 1}(x(0)), x_{\mu}^{**}<x(0)<x_{\mu}^{*}\\
    \begin{pmatrix}
        =0\\<0
    \end{pmatrix}&\text{if } y(0) = y_{\mu 1}(x(0)), x_{\mu}^{*}<x(0)\leq a\\
    \begin{pmatrix}
        <0\\ =0
    \end{pmatrix}& \text{if }y(0) = y_{\mu 2}(x(0)), x_{\mu}^{**}<x(0)<x_{\mu}^{*}\\
    \begin{pmatrix}
        >0 \\ =0
    \end{pmatrix}& \text{if } y(0) = y_{\mu 2}(x(0)), x_{\mu}^{*}<x(0)\leq a
    \end{array}\right.
    \]
    This indicates that $- \nabla g_{\mu}(x(t),y(t))$ are pointing to the interior of $C_{\mu 3}$, then $(x(t),y(t))$ can not escape $C_{\mu 3}$. 
    Here we can focus our attention in $C_{\mu 3}$, because $\forall t\geq 0, (x(t),y(t))\in C_{\mu 3}$. For Algorithm \ref{alg:gf},
    \[\frac{d f(\boldsymbol{z}_t)}{dt} = \nabla f(\boldsymbol{z}_t)\dot{\boldsymbol{z}_t} = -\Norm{\nabla f(\boldsymbol{z}_t)}_2^2 \]
    In our setting, $\forall (x,y)\in C_{\mu 3}$
    \[ \left\{
    \begin{array}{ll}
         \nabla g_{\mu}(x,y) \ne 0 &  (x,y)\ne (x_{\mu}^*,y_{\mu}^*)\\
    \nabla g_{\mu}(x,y) = 0 &  (x,y)= (x_{\mu}^*,y_{\mu}^*)
    \end{array}\right.\]
    so 
    \[ \frac{dg_{\mu}(x(t),y(t))}{dt} = \left\{
    \begin{array}{rl}
         -\Norm{\nabla g_{\mu}}^2_2 < 0 &  (x,y)\ne (x_{\mu}^*,y_{\mu}^*)\\-
    \Norm{\nabla g_{\mu}}_2^2 = 0 & (x,y)= (x_{\mu}^*,y_{\mu}^*)
    \end{array}\right.
    \]
    Plus, $(x_{\mu}^*,y_{\mu}^*)$ is the unique stationary point of $g_{\mu}(W)$ in $C_{\mu 3}$. By lemma \ref{lemma:strict_minima}
    \[g_{\mu}(x,y)>g_{\mu}(x_{\mu}^*,y_{\mu}^*)\quad  (x,y)\ne (x_{\mu}^*,y_{\mu}^*)\]
    By Lyapunov asymptotic stability theorem \citep{khalil2002nonlinear}, and applying it to gradient flow for $g_{\mu}(x,y)$ in $C_{\mu 3}$, we can conclude $\lim_{t\rightarrow \infty} (x(t),y(t)) = (x_{\mu}^*,y_{\mu}^*)$.
\end{proof}
\subsection{Proof of Lemma \ref{lemma:strict_minima}}
\begin{proof}
    For any $(x,y)\in C_{\mu 3}$ in \ref{bd:C_3}, and $(x,y)\ne (x_\mu^*,y_{\mu}^*)$, in Algorithm \ref{alg:path_to_optimal}. W.L.O.G, we can assume $x\in (x_{\mu}^{**},x_{\mu}^*)$, the analysis details can also be applied to $x\in (x_{\mu}^{*},a)$. It is obvious that 
    $\Tilde{x}_j<\Tilde{x}_{j+1}$ and $\Tilde{y}_{j+1}<\Tilde{y}_{j}$. Also, $\lim_{j\rightarrow \infty }(\Tilde{x}_j,\Tilde{y}_j) = (x_{\mu}^{*},y_{\mu}^*)$. Otherwise either $\Tilde{x}_j\ne x_{\mu}^{*}$ or $\Tilde{y}_j\ne y_{\mu}^{*}$ hold, Algorithm \ref{alg:path_to_optimal} continues until $\lim_{j\rightarrow \infty }(\Tilde{x}_j,\Tilde{y}_j) = \lim_{j\rightarrow \infty }(y_{\mu 2}(\Tilde{y}_j),x_{\mu 1}(\Tilde{x}_j))$, i.e. $(\Tilde{x}_j,\Tilde{y}_j)$ converges to $(x_{\mu}^{*},y_{\mu}^*)$.
    
    Moreover, note that for any $j =0,1,\ldots$
    \[
    g_{\mu}(\Tilde{x}_{j-1},\Tilde{y}_{j-1})>g_{\mu}(\Tilde{x}_{j-1},\Tilde{y}_{j})>g_{\mu}(\Tilde{x}_{j},\Tilde{y}_{j})
    \]
    Because 
    \[g_{\mu}(\Tilde{x}_{j-1},\Tilde{y}_{j-1})-g_{\mu}(\Tilde{x}_{j-1},\Tilde{y}_{j}) = \frac{\partial g_\mu(\Tilde{x}_{j-1},\Tilde{y})}{\partial y}(\Tilde{y}_{j-1}-\Tilde{y}_{j})\quad \text{where }\Tilde{y}\in (\Tilde{y}_{j},\Tilde{y}_{j-1}) \]
    Note that 
    \[ \frac{\partial g_\mu(\Tilde{x}_{j-1},\Tilde{y})}{\partial y}>0\Rightarrow g_{\mu}(\Tilde{x}_{j-1},\Tilde{y}_{j-1})>g_{\mu}(\Tilde{x}_{j-1},\Tilde{y}_{j})\]
    By the same reason, 
    \[g_{\mu}(\Tilde{x}_{j-1},\Tilde{y}_{j})>g_{\mu}(\Tilde{x}_{j},\Tilde{y}_{j})\]
    By Lemma \ref{lemma:type_of_sp}, $(x_\mu^*,y_{\mu}^*)$ is local minima, and there exists a $r_{\mu}>0$ and any $\{(x,y)\mid \|(x,y)-(x_\mu^*,y_{\mu}^*)\|_2\leq r_{\mu}\}, g_{\mu}(x,y)>g_{\mu}(x_\mu^*,y_{\mu}^*)$
 Since $\lim_{j\rightarrow \infty }(\Tilde{x}_j,\Tilde{y}_j) = (x_{\mu}^{*},y_{\mu}^*)$, there exists a $J>0$ such that $\forall j>J$, $\|(\Tilde{x}_{j},\Tilde{y}_{j})-(x_\mu^*,y_{\mu}^*)\|_2\leq r_{\mu}$, combining them all
 \[g_{\mu}(x,y)>g_\mu(\Tilde{x}_{j},\Tilde{y}_{j})>g_{\mu}(x_\mu^*,y_{\mu}^*)\]
    \begin{algorithm}[!htb]
	    \DontPrintSemicolon
	    \SetAlgoLined
	    \LinesNumbered
	    \SetKwFunction{GF}{xx}
	    \caption{Path goes to $(x_\mu^*, y_\mu^*)$}
	    \label{alg:path_to_optimal}
			\KwIn{$(x,y)\in C_{\mu 3}, x_{\mu 1}(y), y_{\mu 2}(x)$ as \eqref{eq:x1},\eqref{eq:y2}}
	        \KwOut{$\{(\Tilde{x}_j,\Tilde{y}_j)\}_{j=0}^{\infty}$}
	        $(\Tilde{x}_0,\Tilde{y}_0) \gets (x,y)$\;
	        \For{$j=1,2,\ldots$}{
	            $\Tilde{y}_j \gets y_{\mu 2}(\Tilde{x}_{j-1})$
	            \;
	            $\Tilde{x}_{j} \gets x_{\mu 1}(\Tilde{y}_{j-1})$
	        }
	\end{algorithm}

\end{proof}
\subsection{Proof of Lemma \ref{lemma:pratical_converge}}
\begin{proof}
    From the proof of Theorem \ref{thm:main_thm}, any any scheduling for $\mu_k$ satisfies following will do the job
    \begin{align*}
        (2/a)^{2/3}\mu_{k-1}^{4/3}\leq \mu_k<\mu_{k-1}
    \end{align*}
    Note that in Algorithm \ref{alg:main_practics}, we have 
    $\widehat{a} = \sqrt{4(\mu_0+\varepsilon)}<a$, then it is obvious
    \[(2/a)^{2/3}\mu_{k-1}^{4/3}<(2/\widehat{a})^{2/3}\mu_{k-1}^{4/3}\]
    The same analysis for Theorem \ref{thm:main_thm} can be applied here.
\end{proof}
\subsection{Proof of Lemma \ref{lemma:go_to_next_optimal_gd}}
\begin{proof}
By the Theorem \ref{thm:gd} and Lemma \ref{lemma:smooth_para} and the fact that $A_{\mu,\eps}^1$ is $\mu$-stationary point region, we use the same argument as proof of Lemma \ref{lemma:stay_inside_C_3} to demonstrate the gradient descent will never go to $A_{\mu,\eps}^2.$
\end{proof}
\subsection{Proof of Lemma \ref{lemma:no_cross_x}}
\begin{proof}
    By Theorem \ref{thm_r_x_beta:4}
    \[\max_{\mu\leq \tau_{\beta}}x_{\mu,\beta}^{**}\leq \min_{\mu>0}x_{\mu,\beta}^*\]
    We also know from the proof of Corollary \ref{cor:boundary}, $x_{\mu,\eps}^{**}<x_{\mu,\beta}^{**}$ and $x_{\mu,\beta}^{*}<x_{\mu,\eps}^{*}$. Consequently,
    \[\max_{\mu\leq \tau_{\beta}}x_{\mu,\eps}^{**}\leq \min_{\mu>0}x_{\mu,\eps}^*\]
    Because $\tau_\beta>\tau$, so 
    \[\max_{\mu\leq \tau}x_{\mu,\eps}^{**}\leq \max_{\mu\leq \tau_{\beta}}x_{\mu,\eps}^{**}\leq \min_{\mu>0}x_{\mu,\eps}^*\]
\end{proof}
\subsection{Proof of Corollary \ref{cor:pratical_converge}}
\begin{proof}
    Note that 
    \[\frac{a^2}{4(a^2+1)^3}\leq \frac{1}{27}\quad a>0\]
    when $a>\sqrt{\frac{5}{27}}$, then $\frac{a^2}{4}>\mu_0 = \frac{1}{27}\geq \frac{a^2}{4(a^2+1)^3}$, it satisfies condition in Lemma \ref{lemma:pratical_converge}, we obtain the same result.
\end{proof}
\subsection{Proof of Corollary \ref{cor:trajectory}}
\begin{proof}
    Use Theorem \ref{thm2:6} and Theorem \ref{thm3:6}.
\end{proof}
\subsection{Proof of Corollary \ref{cor:boundary}}
\begin{proof}
    It is easy to know that 
    \[ r_{\beta}(y;\mu)>r_{\eps}(y;\mu)>r(y;\mu) \]
    and 
    \[ t_{\beta}(x;\mu)<t_{\eps}(x;\mu)<t(x;\mu) \]
    and when $\mu<\tau$, there are three solutions to $r(y;\mu) = 0$ by Theorem \ref{thm:detailed_r_y}. Also, we know from Theorem \ref{thm:detailed_r_y_eps}, \ref{thm:detailed_r_y_beta} 
    \[\lim_{y\rightarrow 0^+}r_\eps(y;\mu) = \infty\quad \lim_{y\rightarrow 0^+}r_\beta(y;\mu) = \infty\]
    Note that when $\left(\frac{1+\beta}{1-\beta}\right)^2\leq a^2+1$
    \begin{align*}
        r_{\beta}(\sqrt{\mu};\mu) = \frac{a(1+\beta)}{\sqrt{\mu}}-(a^2+1)-\frac{a^2(1-\beta)^2}{4\mu}\leq 0 \quad \forall \mu>0
    \end{align*}
    Therefore, 
     \[ 0\geq r_{\beta}(\sqrt{\mu};\mu)>r_{\eps}(\sqrt{\mu};\mu)>r(\sqrt{\mu};\mu) \]
     Also, we know that for $y_{\mathrm{ub}}$ defined in Theorem \ref{thm2:3}, we know $r(y_{\mathrm{ub}};\mu)>0$ from Theorem \ref{thm2:4}. Therefore,
     \[ r_{\beta}(y_{\mathrm{ub}};\mu)>r_{\eps}(y_{\mathrm{ub}};\mu)>r(y_{\mathrm{ub}};\mu)>0\]
     Besides, $\sqrt{\mu}<y_{\mathrm{ub}}$. By monotonicity of $r_\beta(y;\mu)$ and $r_\eps(y;\mu)$ from the Theorem \ref{thm_r_y_eps:2} and Theorem \ref{thm_r_y_beta:2}, it implies that there are at least two solutions to $r_{\beta}(y;\mu)$ and $r_{\eps}(y;\mu)$. From the geometry of $r_\beta(y;\mu),r_\eps(y;\mu),r(y;\mu)$ and $t_\beta(x;\mu),t_\eps(x;\mu),t(x;\mu)$, it is trivial to know that $x_{\mu,\eps}^*\leq x_{\mu}^*$, $y_{\mu,\eps}^*\geq y_{\mu}^*$, $x_{\mu,\eps}^{**}\geq  x_{\mu}^{**}$, $y_{\mu,\eps}^{*}\leq y_{\mu}^{**}$.
     
    Finally, for every point $(x,y)\in A_{\mu,\eps}^1$, there exists a pair $\eps_1,\eps_2$, each satisfying $|\eps_1|\leq \eps$ and $|\eps_2|\leq \eps$, such that $(x,y)$ is the solution to
     \begin{align*}
         x = \frac{\mu a+\eps_1}{\mu+y^2}\qquad y = \frac{\mu a +\eps_2}{x^2+\mu(a^2+1)}
     \end{align*}
     We can repeat the same analysis above to show that $x_{\mu,\eps}^*\leq x$, $y_{\mu,\eps}^*\geq y$. Applying the same logic to $\forall (x,y)\in A_{\mu,\eps}^2$, we find $x_{\mu,\eps}^{**}\geq  x$, $y_{\mu,\eps}^{*}\leq y$.
     Thus, $(x_{\mu}^{*},y_{\mu}^{*})$ is the extreme point of $A_{\mu,\eps}^1$ and $(x_{\mu}^{**},y_{\mu}^{**})$ is the extreme point of $A_{\mu,\eps}^2$, we get the results.
\end{proof}
\section{Experiments Details}\label{appendix:exp}

In this section, we present experiments to validate the global convergence of Algorithm \ref{alg:main_theory_gd}. Our goal is twofold: First, we aim to demonstrate that irrespective of the starting point, Algorithm \ref{alg:main_theory_gd} using gradient descent consistently returns the global minimum. Second, we contrast our updating scheme for $\mu_k,\eps_k$ as prescribed in Algorithm \ref{alg:main_theory_gd} with an arbitrary updating scheme for $\mu_k,\eps_k$. This comparison illustrates how inappropriate setting of parameters in gradient descent could lead to incorrect solutions.

\subsection{Random Initialization Converges to Global Optimum}
\begin{figure}[H]
\hfill
\centering
\subfigure[Random Initialization 1]{\includegraphics[width=6cm]{./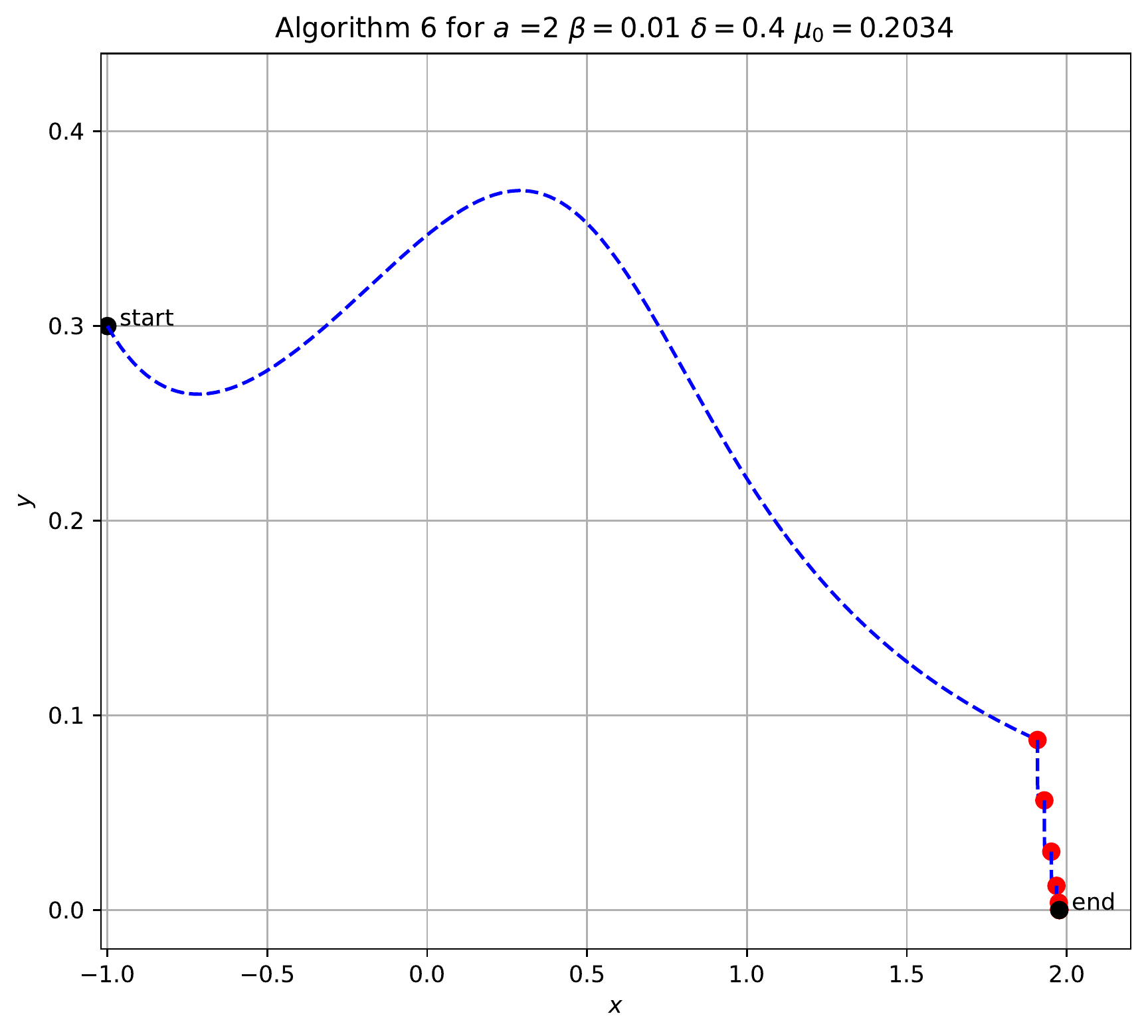}}
\hfill
\centering
\subfigure[Random Initialization 2]{\includegraphics[width=6cm]{./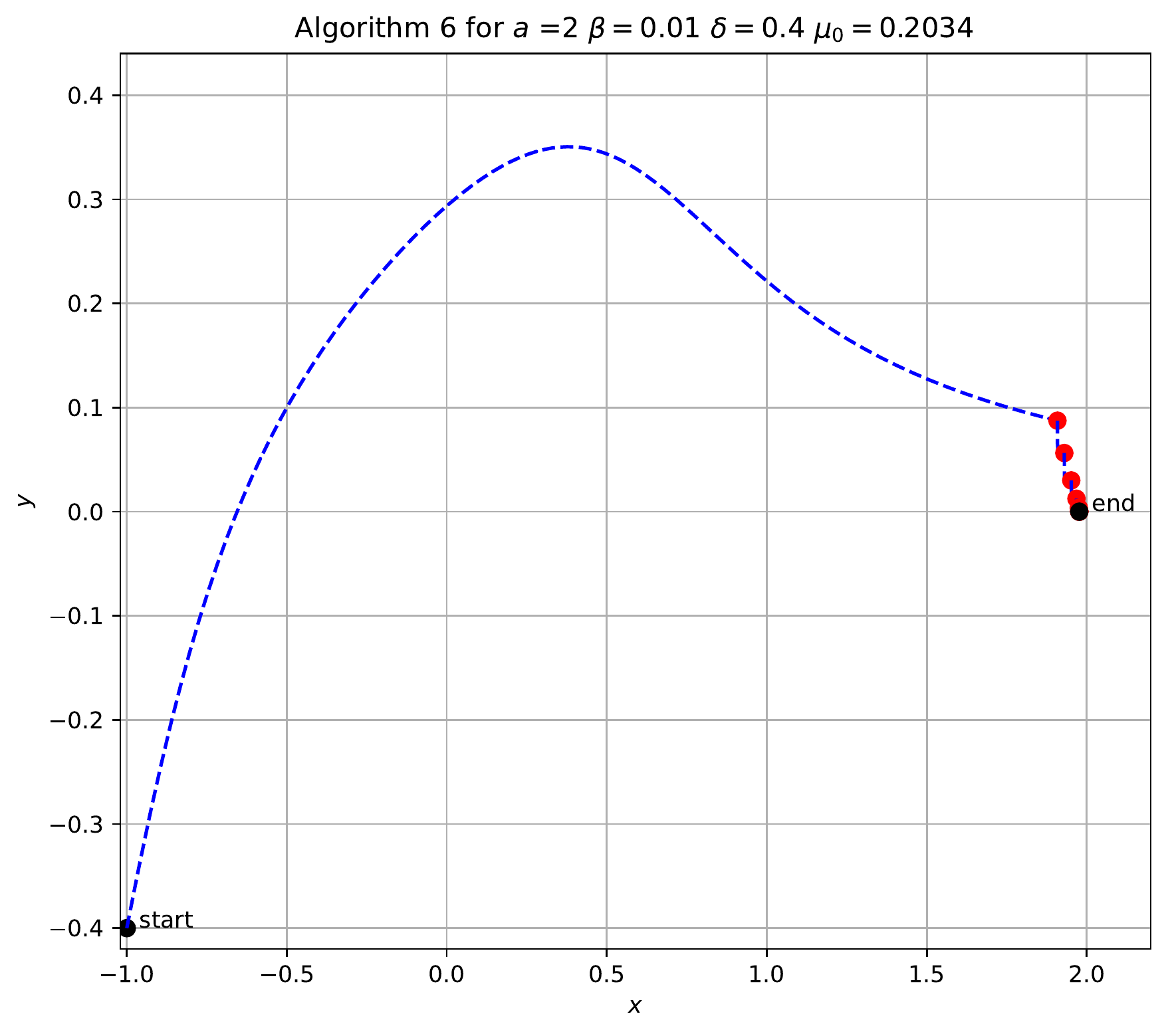}}
 \\
 \hfill
\centering
\subfigure[Random Initialization 3]{\includegraphics[width=6cm]{./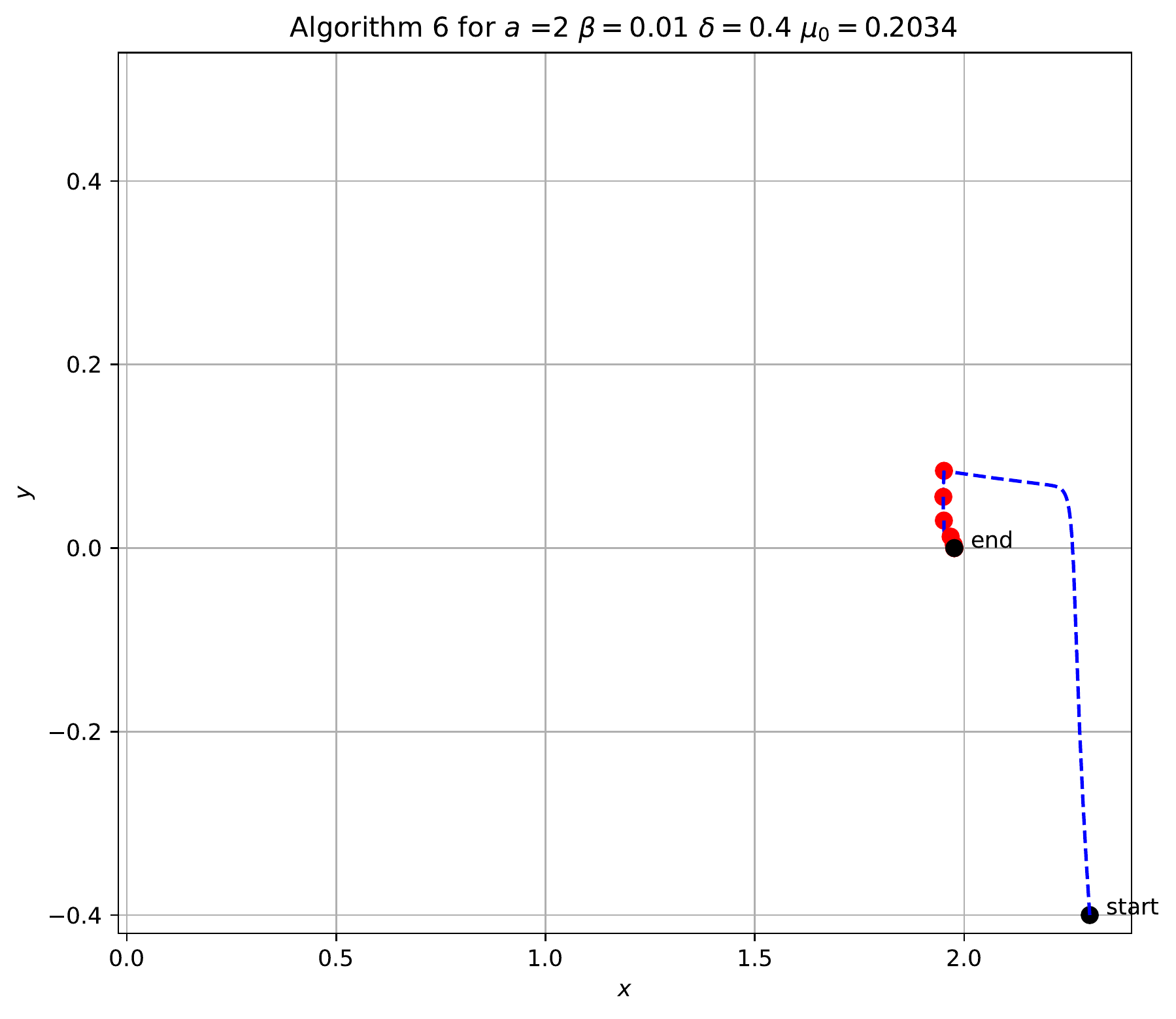}}
\hfill
\centering
\subfigure[Random Initialization 4]{\includegraphics[width=6cm]{./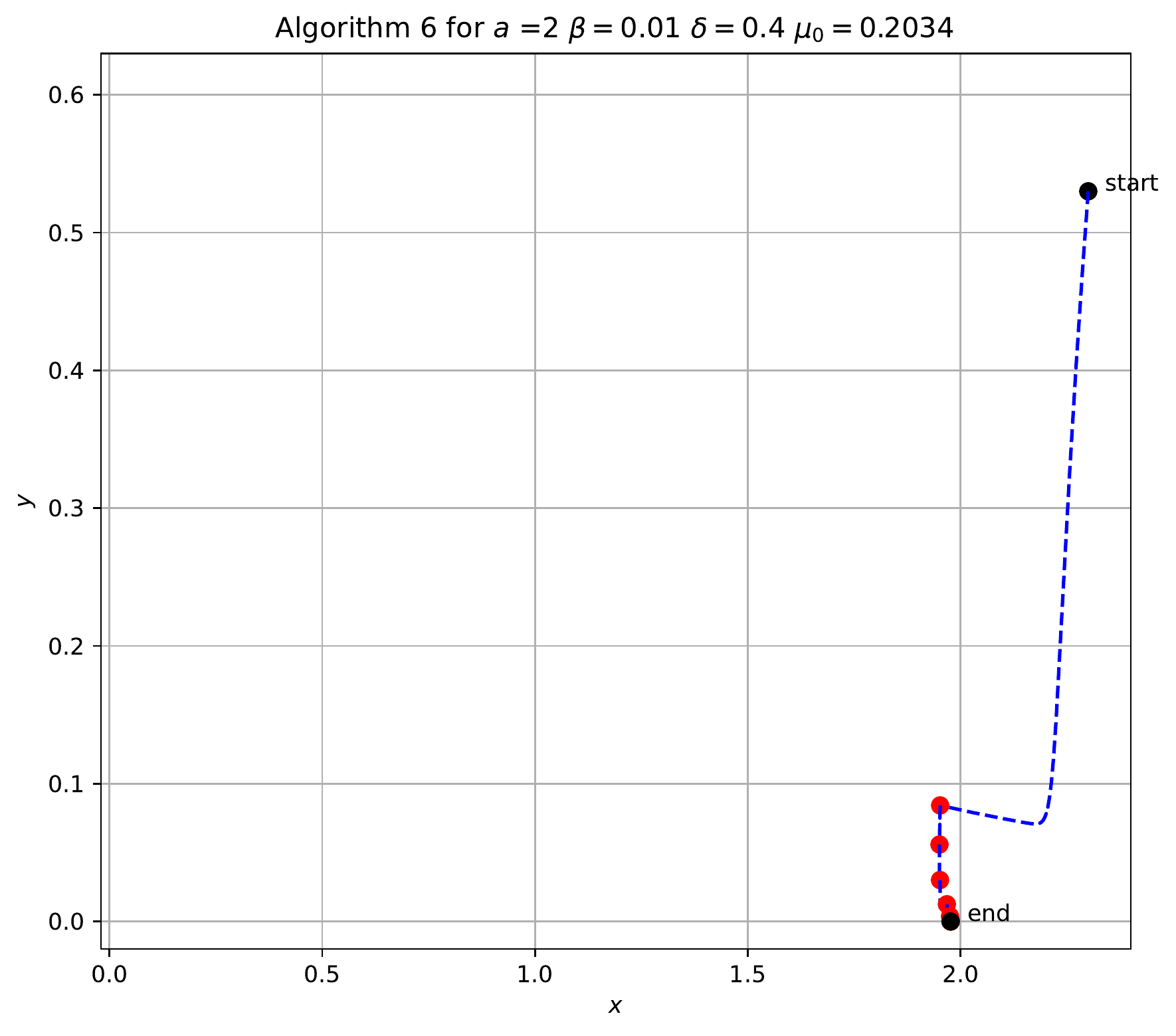}}

\caption{Trajectory of the gradient descent path with the different initializations for $a = 2$. We observe that regardless of the initialization, Algorithm \ref{alg:main_theory_gd} always converges to the global minimum. Initial $\mu_0 = \frac{a^2}{4}\frac{(1-\delta)^3(1-\beta)^4}{(1+\beta)^2}$}
    \label{fig:rd_initialization1}
\end{figure}

\begin{figure}[H]
\hfill
\centering
\subfigure[Random Initialization 1]{\includegraphics[width=6cm]{./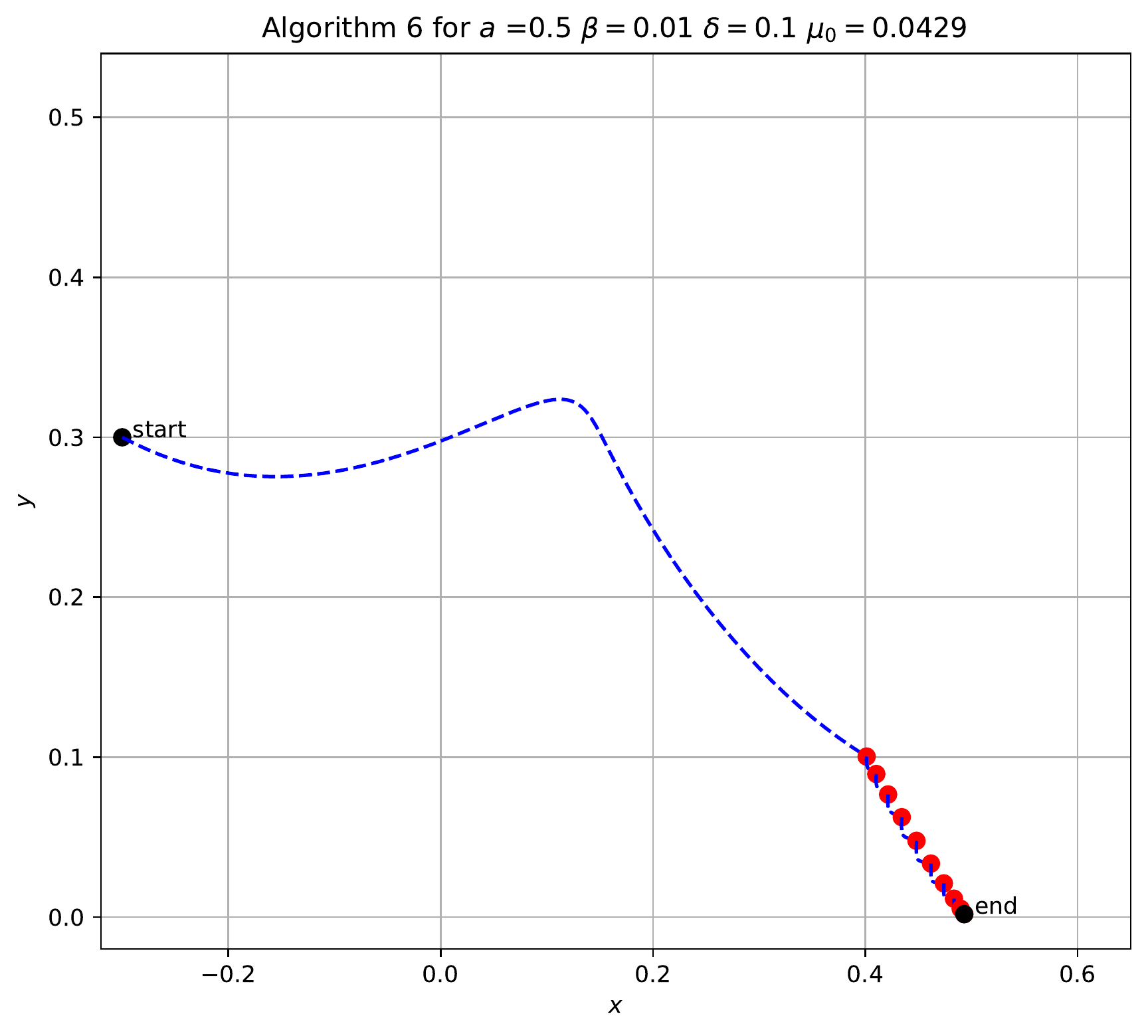}}
\hfill
\centering
\subfigure[Random Initialization 2]{\includegraphics[width=6cm]{./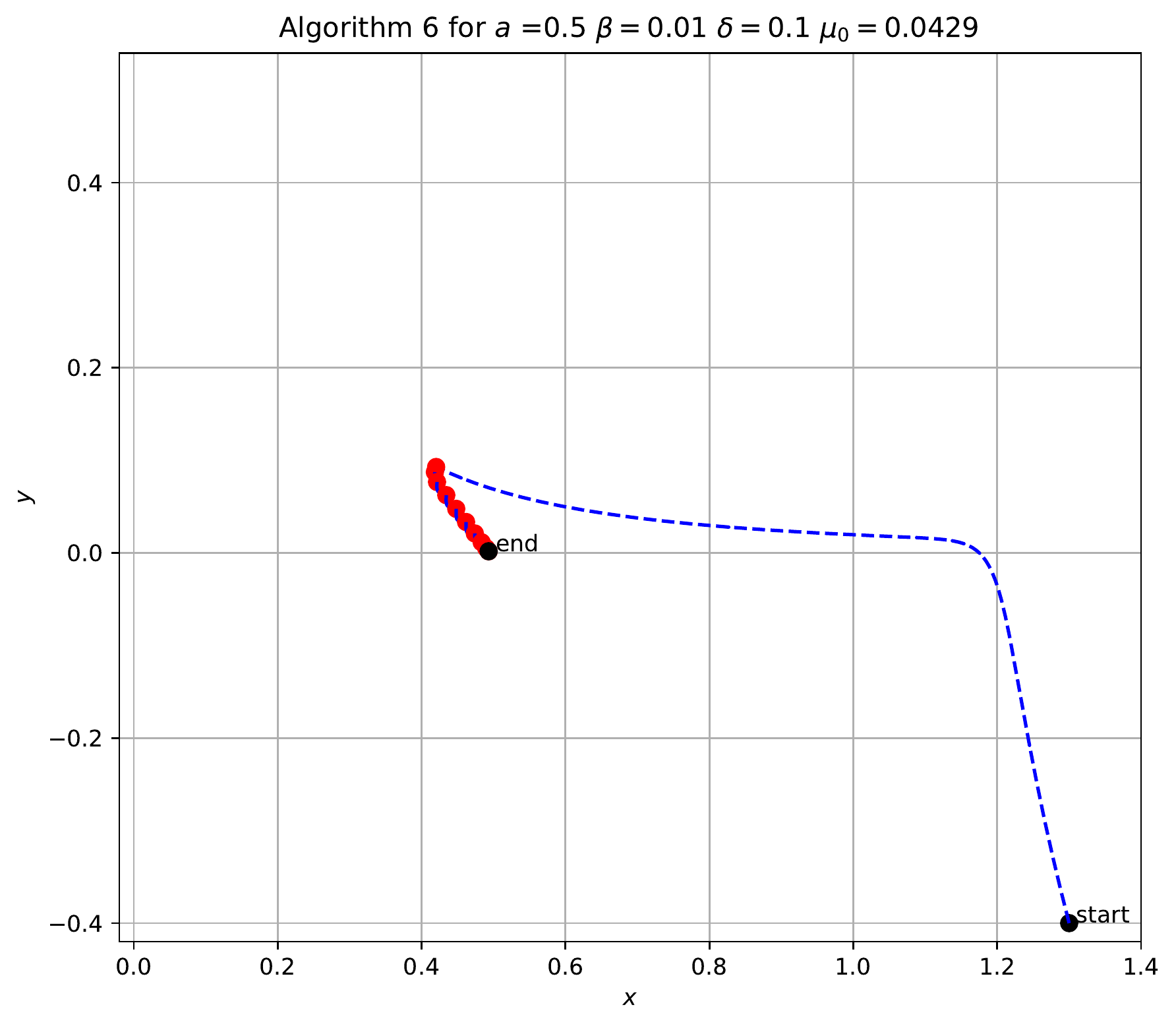}}
 \\
 \hfill
\centering
\subfigure[Random Initialization 3]{\includegraphics[width=6cm]{./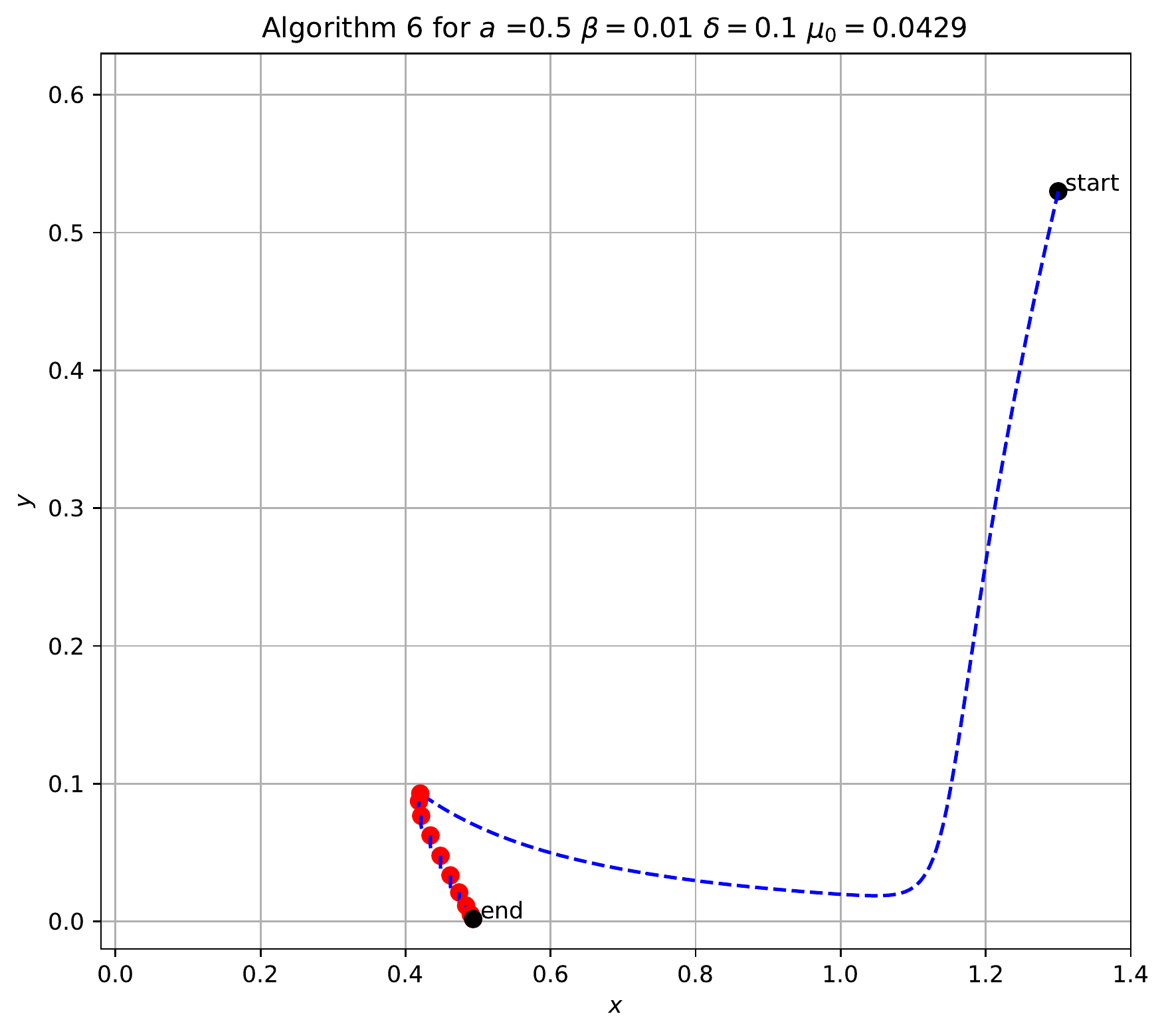}}
\hfill
\centering
\subfigure[Random Initialization 4]{\includegraphics[width=6cm]{./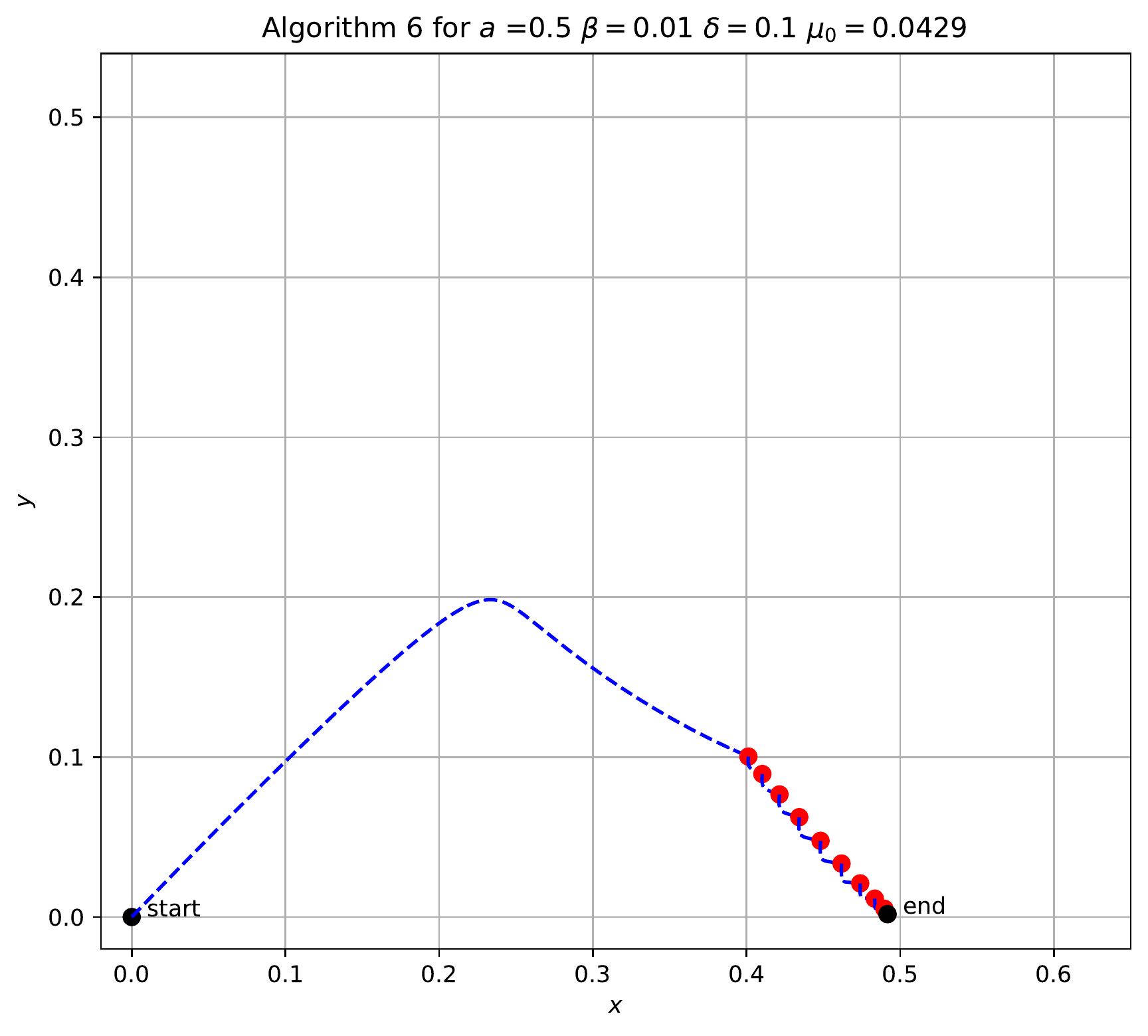}}
\caption{Trajectory of the gradient descent path with the different initializations for $a = 0.5$. We observe that regardless of the initialization, Algorithm \ref{alg:main_theory_gd} always converges to the global minimum. Initial $\mu_0 = \frac{a^2}{4}\frac{(1-\delta)^3(1-\beta)^4}{(1+\beta)^2}$}
    \label{fig:rd_initialization2}
\end{figure}
\subsection{Wrong Specification of $\delta$ Leads to Spurious Local Optimial}
\begin{figure}[H]
\hfill
\centering
\subfigure[$\delta = 0.4$]{\includegraphics[width=6cm]{./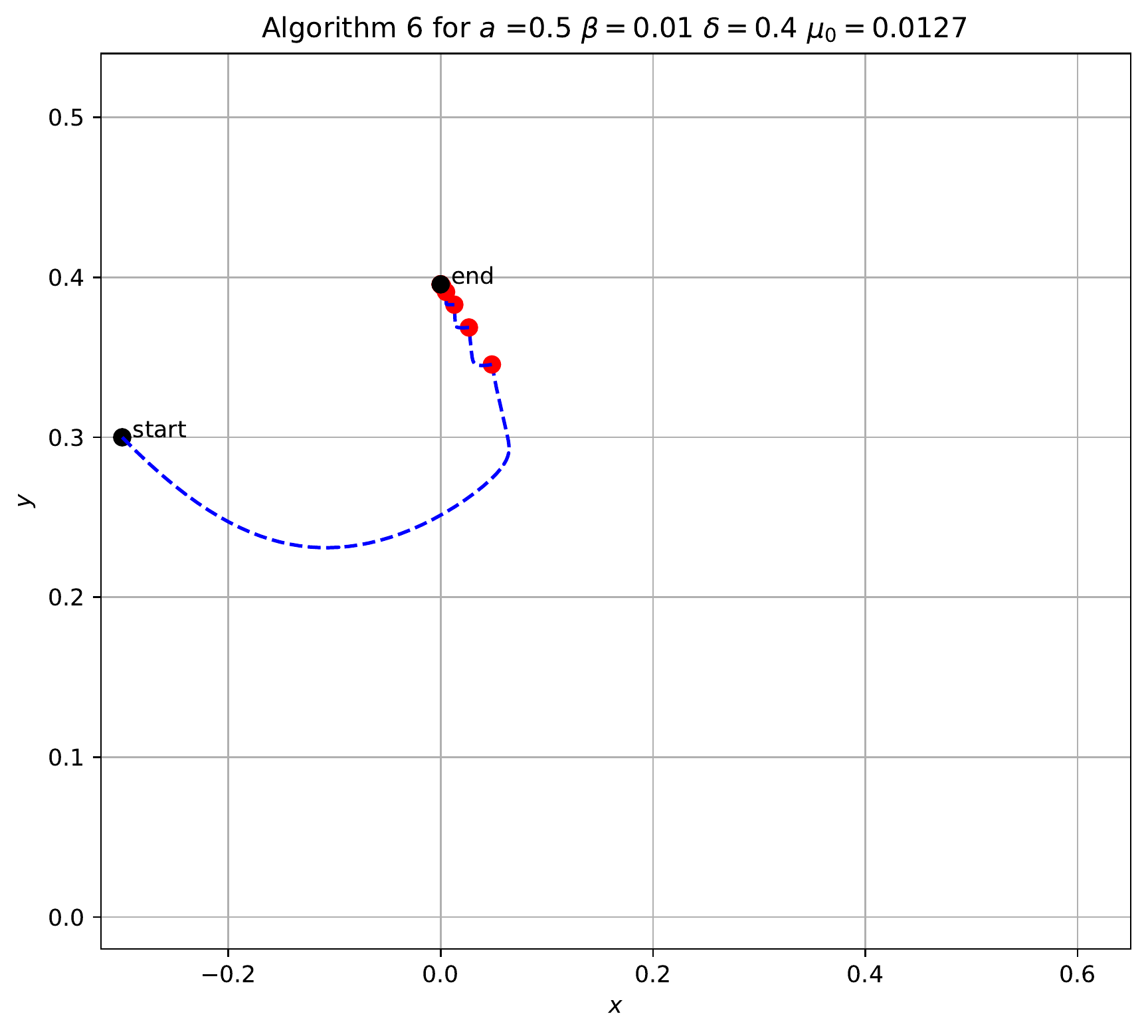}}
\hfill
\centering
\subfigure[$\delta = 0.1$]{\includegraphics[width=6cm]{./plot/Plot_a_=_0.5_rd_init_1.pdf}}
\hfill
\caption{Trajectory of the gradient descent path for two difference $\delta$. Left: $\beta$ violates requirement $\left(\frac{1+\beta}{1-\beta}\right)^2\leq(1-\delta)(a^2+1)$ in Theorem \ref{thm:main_theory_gd}, leading to spurious local minimum. Right: $\beta$ follows requirement $\left(\frac{1+\beta}{1-\beta}\right)^2\leq(1-\delta)(a^2+1)$ in Theorem \ref{thm:main_theory_gd}, leading to global minimum. Initial $\mu_0 = \frac{a^2}{4}\frac{(1-\delta)^3(1-\beta)^4}{(1+\beta)^2}$}
    \label{fig:wrong_delta}
\end{figure}

\subsection{Wrong Specification of $\beta$ Leads to Incorrect Solution}
\begin{figure}[H]
\hfill
\centering
\subfigure[$\beta = 0.6$]{\includegraphics[width=6cm]{./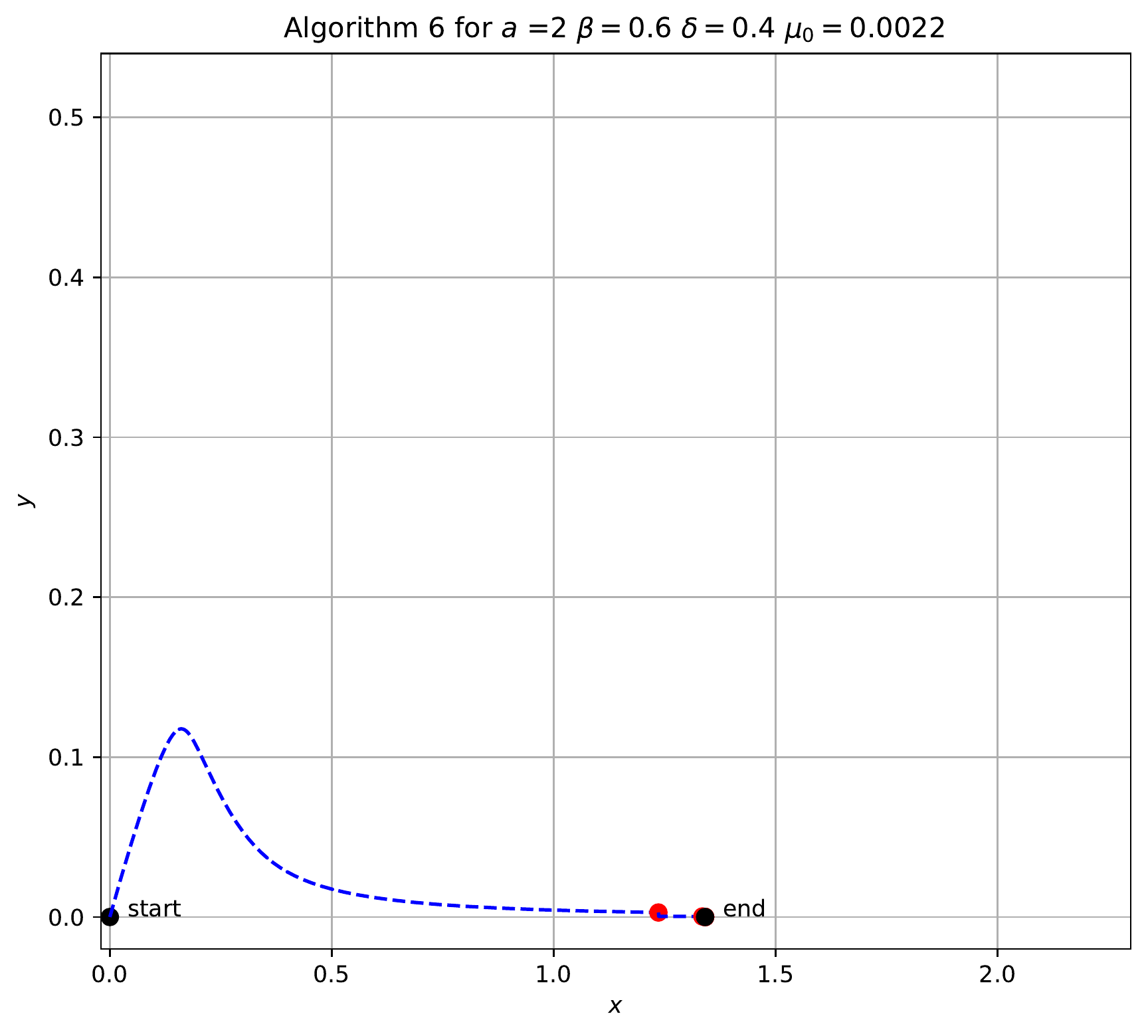}}
\hfill
\centering
\subfigure[$\delta = 0.01$]{\includegraphics[width=6cm]{./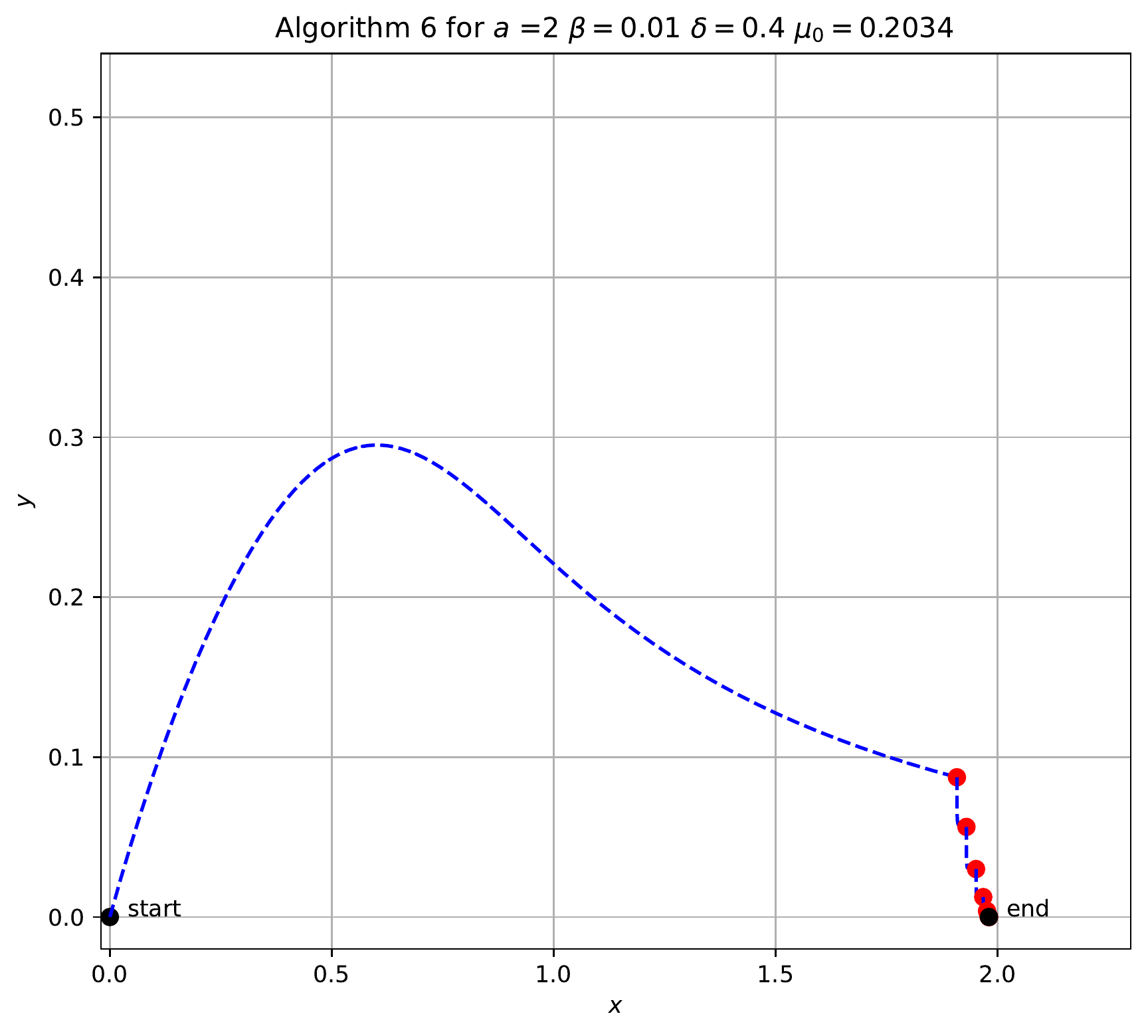}}
\hfill
\caption{Trajectory of the gradient descent path for two difference $\beta$. Left: $\beta$ violates requirement $\left(\frac{1+\beta}{1-\beta}\right)^2\leq(1-\delta)(a^2+1)$ in Theorem \ref{thm:main_theory_gd}, leading to incorrect solution. Right: $\beta$ follows requirement $\left(\frac{1+\beta}{1-\beta}\right)^2\leq(1-\delta)(a^2+1)$ in Theorem \ref{thm:main_theory_gd}, leading to global minimum. Initial $\mu_0 = \frac{a^2}{4}\frac{(1-\delta)^3(1-\beta)^4}{(1+\beta)^2}$}
    \label{fig:wrong_beta}
\end{figure}

\subsection{Faster decrease of $\mu_k$ Leads to Incorrect Solution}
\begin{figure}[H]
\hfill
\centering
\subfigure[Bad scheduling]{\includegraphics[width=6cm]{./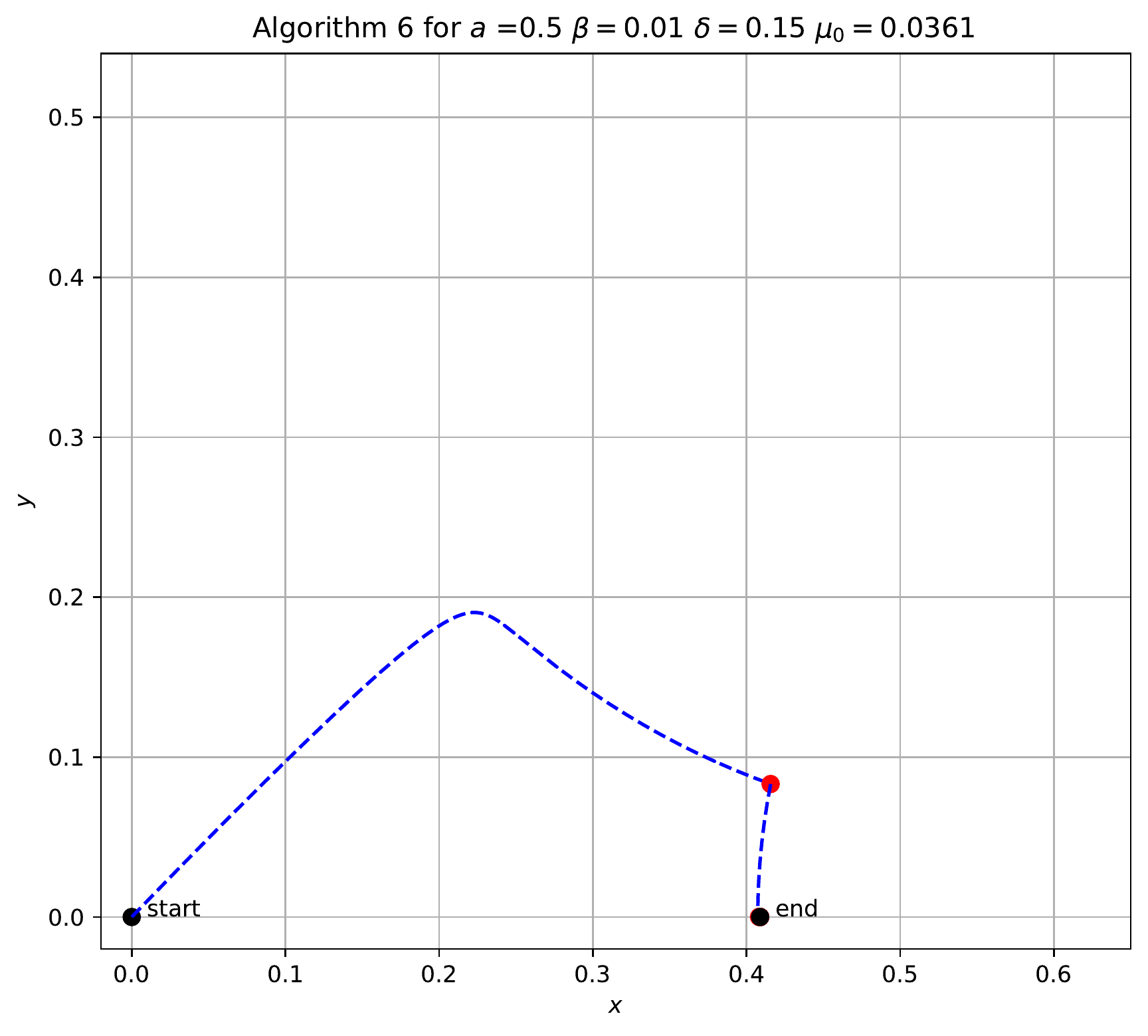}}
\hfill
\centering
\subfigure[Good scheduling]{\includegraphics[width=6cm]{./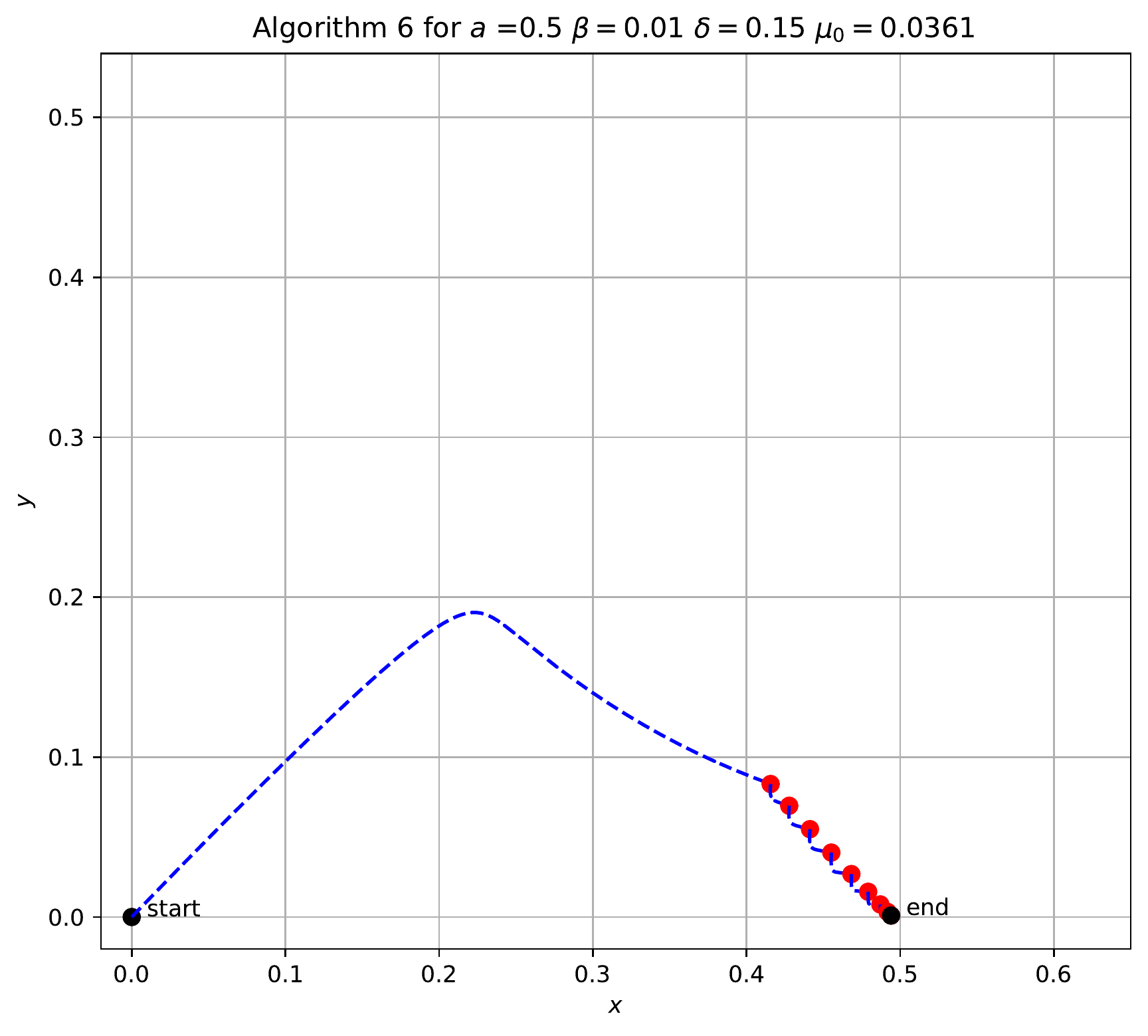}}
\hfill
\caption{Trajectory of the gradient descent path for two difference update rules for $\mu_k$ with the same initialization. Left: ``Bad scheduling'' uses a faster-decreasing scheme for $\mu_k$, leading to an incorrect solution, even a non-local optimal solution. Right: ``Good scheduling'' follows updating rule for $\mu_k$ in Algorithm \ref{alg:main_theory_gd}, leading to the global minimum. Initial $\mu_0 = \frac{a^2}{4}\frac{(1-\delta)^3(1-\beta)^4}{(1+\beta)^2}$}
    \label{fig:bad_scheduling}
\end{figure}
\newpage


\end{document}